\documentclass{article} 
\usepackage{iclr2026_conference,times}


\usepackage{amsmath,amsfonts,bm}









\def\eqref#1{equation~\ref{#1}}









\def\1{\bm{1}}










\DeclareMathAlphabet{\mathsfit}{\encodingdefault}{\sfdefault}{m}{sl}
\SetMathAlphabet{\mathsfit}{bold}{\encodingdefault}{\sfdefault}{bx}{n}













\usepackage{hyperref}
\usepackage{url}
\usepackage[utf8]{inputenc} 
\usepackage{booktabs}       
\usepackage{amsfonts}       
\usepackage{nicefrac}       
\usepackage{microtype}      
\usepackage{xcolor}         
\usepackage{multirow}
\usepackage{adjustbox}
\usepackage{booktabs}
\usepackage{subfigure}
\usepackage{algorithm}
\usepackage{algpseudocode}
\usepackage{caption}
\usepackage{amsthm}
\newtheorem{theorem}{Theorem}

\newcommand{\eat}[1]{}

\newcommand{\smalltitle}[1]{\vspace{1mm}{\noindent\textbf{#1}\hspace{1mm}}}

\def\O{\mathcal{O}}

\iclrfinalcopy

\title{Patch Rebirth: Toward Fast and Transferable Model Inversion of Vision Transformers}

\author{
  Seongsoo Heo \\
  Inha University \\
  \texttt{woo555813@inha.edu}
  \And
  Dong-Wan Choi\thanks{Corresponding author.} \\
  Inha University \\
  \texttt{dchoi@inha.ac.kr}
}

\hypersetup{
  pdfauthor={Seongsoo Heo, Dong-Wan Choi},
  pdfsubject={Model Inversion, Vision Transformers},
  pdfkeywords={Model Inversion, Vision Transformer, Knowledge Transfer}
}


%


\begin{document}

\maketitle
\thispagestyle{plain}
\pagestyle{plain}

\begin{abstract}
    Model inversion is a widely adopted technique in data-free learning that reconstructs synthetic inputs from a pretrained model through iterative optimization, without access to original training data. Unfortunately, its application to state-of-the-art Vision Transformers (ViTs) poses a major computational challenge, due to their expensive self-attention mechanisms. To address this, \textit{Sparse Model Inversion} (SMI) was proposed to improve efficiency by pruning and discarding seemingly unimportant patches, which were even claimed to be obstacles to knowledge transfer. However, our empirical findings suggest the opposite: even randomly selected patches can eventually acquire transferable knowledge through continued inversion. This reveals that discarding any prematurely inverted patches is inefficient, as it suppresses the extraction of class-agnostic features essential for knowledge transfer, along with class-specific features. In this paper, we propose \textit{Patch Rebirth Inversion} (PRI), a novel approach that incrementally detaches the most important patches during the inversion process to construct sparse synthetic images, while allowing the remaining patches to continue evolving for future selection. This progressive strategy not only improves efficiency, but also encourages initially less informative patches to gradually accumulate more class-relevant knowledge, a phenomenon we refer to as the \textit{Re-Birth} effect, thereby effectively balancing class-agnostic and class-specific knowledge. Experimental results show that PRI achieves up to 10$\times$ faster inversion than standard \textit{Dense Model Inversion} (DMI) and 2$\times$ faster than SMI, while consistently outperforming SMI in accuracy and matching the performance of DMI.
\end{abstract}
\section{Introduction}





Model inversion~\citep{FredriksonJR15, MahendranV15, YinMALMHJK20} is a prominent technique in data-free learning, aiming to reconstruct synthetic inputs from a pretrained model via iterative optimization, without using any original inputs. In data-constrained scenarios where the original dataset is unavailable (e.g., due to privacy concerns), the synthesized inputs generated by model inversion can serve as carriers of the model's pretrained knowledge, which can then be transferred into any target model for training. One of the predominant applications is data-free model compression, such as quantization and distillation without using original samples, where training or fine-tuning the compressed model is essential to recover performance degradation. While earlier studies on model inversion have primarily focused on convolutional neural networks (CNNs), the recent rise of Vision Transformers (ViTs)~\citep{DosovitskiyB0WZ21} motivates the development of new approaches that exploit their architectural strengths.


\begin{figure*}[!t]
  \centering
  \subfigure[%
    Visualization of inversion outputs
    ]{%
    \includegraphics[width=0.45\textwidth]{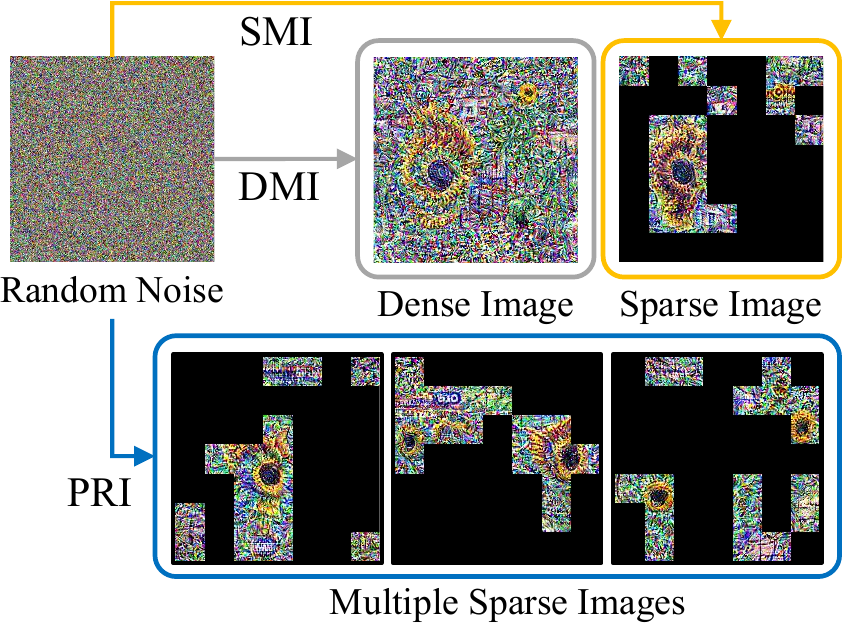} 
  }
  \hfill
  \subfigure[%
  Accuracy-efficiency trade-off
    ]{%
    \includegraphics[width=0.5\textwidth]{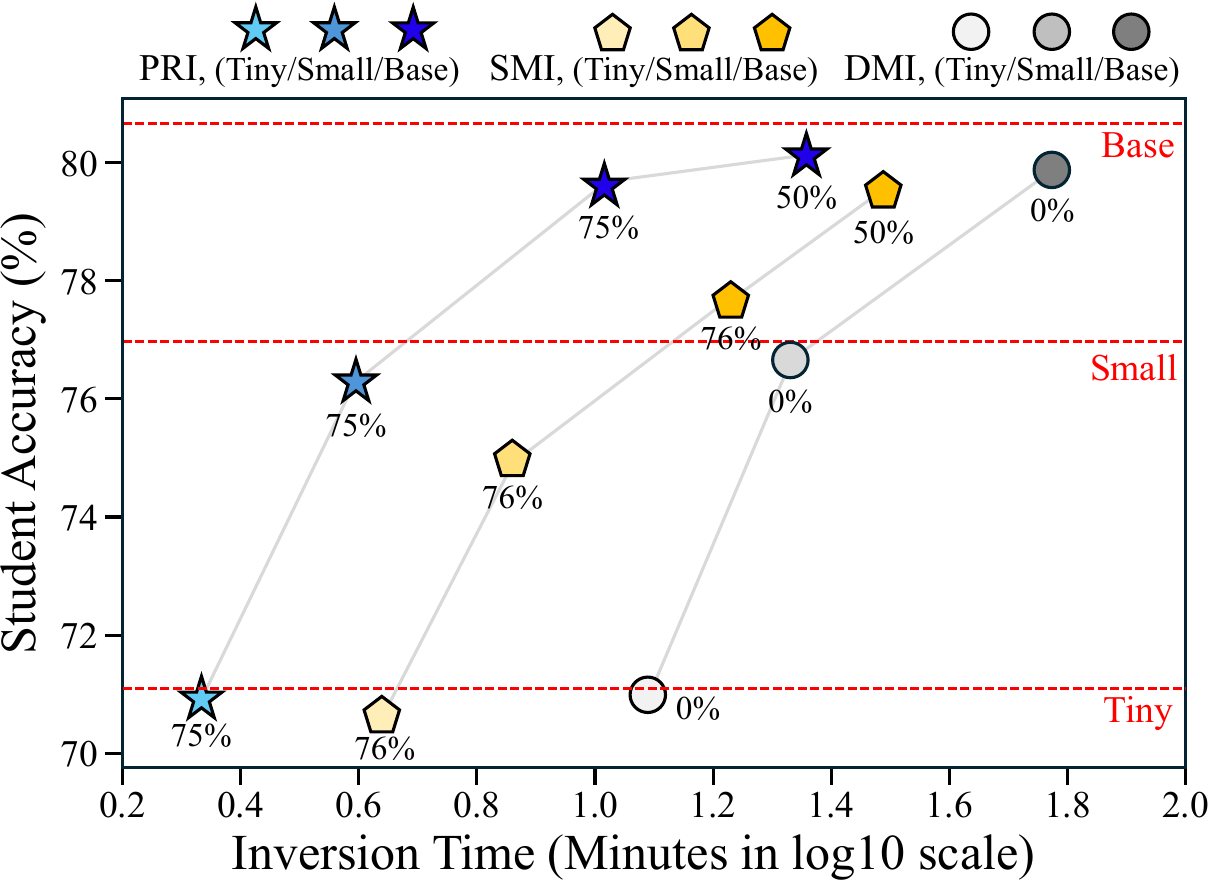}
  }
  \caption{Comparison of DMI \citep{YinMALMHJK20}, 
    SMI \citep{HuW0WLYT24}, and PRI (ours) on CIFAR-100. (a) Visualization illustrating the differences among the three inversion methods. (b) Student accuracy distilled from the same teacher, DeiT-Tiny/Small/Base; 
    denoted as (T)/(S)/(B), with GPU time (in $\log_{10}$ minutes) 
    measured for inverting 128 samples per batch. 
    Red dashed lines indicate teacher accuracy, 
    and percentages denote image sparsity.}
  \label{fig:summarize_fig}
\end{figure*}

However, a major drawback of model inversion is its high computational overhead, to the extent that generating only a few hundred synthetic images can take several hours even on a high-end GPU\footnote{On an RTX A6000, it takes about 1 hour to invert just 128 images of 224$\times$224 with DeiT-Base.}. This inefficiency becomes more exacerbated in ViTs, whose complexity substantially increases with the number of tokens, generally exceeding the computational cost of CNNs \citep{HeZRS16, KrizhevskySH12}. To address this, \textit{Sparse Model Inversion} (SMI) \citep{HuW0WLYT24} was recently introduced, inspired by token pruning \citep{YouweiLiang2022,RaoZLLZH21} that aims to accelerate ViT inference by discarding unimportant tokens. SMI applies a \textit{reversed} strategy by removing less informative patches during inversion, instead of inference, based on its core hypothesis that these patches are not only redundant but may also hinder effective knowledge transfer.

In this paper, we revisit the core assumption of SMI and argue that retaining all patches during model inversion is indeed more effective, particularly for conveying transferable knowledge in data-free learning. Our first empirical observation is that: unlike token pruning on real images, pruning inverted patches does not strongly depend on patch importance estimated in the initial phase. In our study (see Table~\ref{tab:empirical_findings}), even randomly selected patches achieved comparable performance to those selected based on importance by the end of the inversion process. This leads to our key insight: \textit{regardless of their initial importance, any selected patches can ultimately embed highly transferable knowledge through iterative inversion.} From this perspective, SMI's strategy of discarding unimportant patches is not only ineffective for knowledge transfer but ironically also inefficient in terms of inversion time. Once pruned, patches are permanently excluded from synthesis, regardless of any useful knowledge they may have acquired. This further causes the inversion region to gradually shrink, thereby limiting the diversity of synthesized features. To be revealed in our empirical analysis, this behavior leads to overfitting to \textit{class-specific} features, while suppressing \textit{class-agnostic} information, which is crucial for generalizable knowledge transfer in data-free settings.

Based on our findings above, we propose a more efficient yet more effective model inversion method for data-free knowledge transfer, called \textit{Patch Rebirth Inversion} (PRI). Rather than generating a single sparse image through gradual patch pruning, PRI makes full use of the inverted knowledge throughout the inversion process by producing a sequence of sparse images (see Figure \ref{fig:summarize_fig}(a)). Each sparse image is constructed by isolating the most important patches at a specific point of the inversion process. Thus, these sparse images are not generated all at once, but rather progressively separated from the full image over the iterations. Interestingly, we discover that isolating important patches encourages the remaining ones to start synthesizing more meaningful features, a phenomenon we refer to as the \textit{Re-Birth} effect. Some of these \textit{reborn} features eventually become informative enough to form another sparse image at subsequent iterations. This progressive mechanism not only allows the generation of multiple sparse images but also increases the diversity of knowledge embedded in the synthesized images. As a result, under the same computational budget, PRI accelerates the inversion process by enabling the production of more synthetic samples. Furthermore, since each image is extracted at a different point along the inversion trajectory, they jointly capture both class-specific and class-agnostic features, leading to improved transferability.

As summarized in Figure~\ref{fig:summarize_fig}(b), PRI consistently lies on the \textit{Pareto-optimal curve} of the accuracy-efficiency trade-off, clearly achieving the most favorable balance among all compared methods. In our detailed experimental results (see Table \ref{tab:main_efficiency_results}), PRI achieves up to 10$\times$ faster inversion than \textit{Dense Model Inversion} (DMI)~\citep{YinMALMHJK20}, a standard method without any sparsification, and up to 2$\times$ faster than SMI, while consistently delivering higher accuracy than SMI and maintaining performance close to DMI despite the substantial speedup. We attribute this superiority to PRI’s ability to effectively embed both class-agnostic and class-specific knowledge into the inverted patches, as further supported by our in-depth empirical analysis.

\section{Related Work}

Model inversion has long been studied across a range of contexts, from privacy attacks~\citep{FredriksonJR15, HeZL19, YueWang2015, ZiqiYang2019} to the analysis of deep feature representations~\citep{MahendranV15, MahendranV16}, commonly aiming to understand and exploit various pretrained models. More recently, it has become a central component in data-free learning, a popular technique that extracts synthetic inputs from a pretrained model, without accessing original training data. Earlier works focus on convolutional neural networks (CNNs), applying model inversion to generate synthetic data for data-free quantization~\citep{CaiYDGMK20,ChoiHPKL21,NagelBBW19,XuLZLCLT20, ZhangQDGYTLYL21,ZhongLNL00J22} and knowledge distillation~\citep{BiniciAPLM22,ChenW0YLSXX019,GongfanFang2019, GongfanFang2021,RaphaelGontijoLopes2017,ShinC24,YinMALMHJK20}. These approaches typically utilize convolutional features and batch normalization statistics to enhance the performance.

\smalltitle{Model Inversion in ViTs.}
With the popularity of Vision Transformers (ViTs), which lack batch normalization and exhibit unique architectural properties, alternative model inversion strategies have been proposed to exploit their patch-wise and self-attention mechanism. Among various attempts~\citep{ChoiLKPKPCL25,LiMCXG22,LiCXG24,RamachandranKK24} to adapt model inversion to ViTs, PSAQ-ViT~\citep{LiMCXG22} first introduced a patch similarity-aware strategy for data-free quantization by leveraging self-attention scores to identify redundant tokens and guide quantization accordingly. MimiQ~\citep{ChoiLKPKPCL25} further explored data-free quantization for ViTs, observing that alignment of attention maps between teacher and student models significantly enhances recovery of the performance in the quantized model. Despite these efforts, they all adopt dense inversion strategies that optimize every patch simultaneously during the entire process and therefore suffer from substantial computational cost, due to the high computational complexity with respect to the number of patches.

\smalltitle{Sparse Model Inversion.}
To address this inefficiency, sparse model inversion (SMI)~\citep{HuW0WLYT24} was recently introduced, inspired by token pruning strategies~\citep{KimSTGKHK22, YouweiLiang2022,  RaoZLLZH21, YulinWang2021}. Instead of updating all patches, SMI selectively inverts only a subset of important patches to reduce computational overhead. This patch selection process assumes that tokens with low attention contribute little to knowledge transfer, and should be discarded as early as possible. While this may offer some efficiency gains, it overlooks a key opportunity: previously inverted patches, even if initially deemed unimportant, may still carry transferable features. According to our empirical study, discarding these patches limits the representational diversity of synthetic images. In contrast, allowing all patches to remain involved throughout the inversion, regardless of their initial importance, enables the gradual emergence of both class-specific and class-agnostic features that are essential for effective knowledge transfer.

\section{Preliminaries} \label{sec:preliminary}
This section provides a formal definition of model inversion in the context of ViTs, along with a description of the attention-based token selection mechanism adopted in both SMI \citep{HuW0WLYT24} and our proposed method.

\smalltitle{Formulation.}
Given a pretrained classification model $f$, model inversion aims to synthesize input images that reflect the knowledge learned by the model, without access to the original training data. Formally, for a target label $y \in \{1, \dots, c\}$ and a randomly initialized image $\hat{\mathbf{X}} \in \mathbb{R}^{H \times W \times C}$ (where $H$, $W$, and $C$ denote height, width, and the number of channels, respectively), $\hat{\mathbf{X}}$ is iteratively updated by minimizing the following inversion loss:
\begin{align}
\mathcal{L}_{\text{inv}}(\hat{\mathbf{X}}, y; f) = \mathcal{L}_{\text{cls}}(f(\hat{\mathbf{X}}), y) + \lambda \mathcal{L}_{\text{reg}}(\hat{\mathbf{X}}), \label{eqn:inv_loss}
\end{align}
where $\mathcal{L}_{\text{cls}}$ is a classification loss that encourages the image to be predicted as class $y$, and $\mathcal{L}_{\text{reg}}$ is a regularization term to enhance visual plausibility. As adopted by many existing works~\citep{BraunMK24, HatamizadehYR0K22, HuW0WLYT24, YinMALMHJK20}, we use cross-entropy for $\mathcal{L}_{\text{cls}}$ and total variation (TV) regularization for $\mathcal{L}_{\text{reg}}$.


\smalltitle{ViT Inversion.}
In the context of ViTs, an image $\hat{\mathbf{X}}$ needs to be divided and flattened into a sequence of $N$ disjoint patches, denoted by $\{\mathbf{x}_j\}_{j=1}^{N}$, where each patch $\mathbf{x}_j$ is of size $P \times P$ (i.e., $\mathbf{x}_j \in \mathbb{R}^{P \times P \times C}$), and consequently $N = \frac{H \times W}{P^2}$. These patches are then linearly projected into patch embeddings, augmenting with positional encodings (to capture their spatial relationships) and a \texttt{[CLS]} token (a special token for class prediction). This augmented sequence of patches are passed through $L$ transformer encoder layers. Each encoder layer consists of multi-head self-attention (MHSA) and feed-forward networks (FFNs), where MHSA computes scaled dot-product attention, expressed as:
\begin{align}
\text{Attention}(\mathbf{Q}, \mathbf{K}, \mathbf{V})=\text{Softmax}\left(\frac{\mathbf{Q}\mathbf{K}^{T}}{\sqrt{d}}\right)\mathbf{V}, \label{eqn:attention}
\end{align}
where queries ($\mathbf{Q}$), keys ($\mathbf{K}$), and values ($\mathbf{V}$) are linear projections of input embeddings, and $d$ is the embedding dimension. The computational complexities of MHSA and FFN are given by $\O(SA) = 4Nd^2 + 2N^2d$ and $\O(FFN) = 8Nd^2$ \citep{ChenLLSW0J23}, respectively. Since the inversion process involves repeated forward and backward passes to minimize Eq.~(\ref{eqn:inv_loss}), the total computational cost of ViT inversion over $T$ iterations and $I$ images across $L$ layers is represented as:
\[
\begin{aligned}
\mathcal{C}_{\text{DMI}}^{\text{SA}} &= L \cdot (4Nd^2 + 2N^2d) \cdot I \cdot T, \\
\mathcal{C}_{\text{DMI}}^{\text{FFN}} &= L \cdot 8Nd^2 \cdot I \cdot T.
\end{aligned}
\]
where $\mathcal{C}_{\text{DMI}}^{\text{SA}}$ and $\mathcal{C}_{\text{DMI}}^{\text{FFN}}$ represent the total costs of MHSA and FFN layers, respectively. Therefore, minimizing the number of patches is crucial for improving the overall efficiency of ViT inversion.

\smalltitle{Patch Selection via Attention Scores.}
To improve the efficiency of ViT-based methods, token (or patch) selection has become a common strategy, based on token importance. A standard practice for estimating importance is to leverage attention scores, which are derived from the matrix $\frac{\mathbf{Q}\mathbf{K}^{T}}{\sqrt{d}}$ in Eq.~(\ref{eqn:attention}), and to take the average over the scores from the \texttt{[CLS]} token to all other tokens, considering how much each patch contributes to the model's prediction. Prior works such as SMI leverage these importance scores to discard less important tokens, thereby reducing computational overhead. Particularly in SMI, inverted patches that are deemed unimportant are removed early from the optimization process, with the goal of accelerating inversion while preserving essential information. In contrast, our PRI method also employs attention scores to identify important patches but does not discard unimportant ones; instead, it retains them for subsequent inversion iterations.

\section{Methodology}
In this section, we present our proposed method, Patch Rebirth Inversion (PRI), designed to improve both the efficiency and effectiveness of ViT-based model inversion.

\subsection{Revisiting Patch Pruning in Model Inversion} \label{subsec:4.1}

We begin by revisiting the fundamental assumption underlying the Sparse Model Inversion (SMI) approach \citep{HuW0WLYT24}, particularly the claim that early removal of low-importance patches not only accelerates inversion process but also benefits the effectiveness of knowledge transfer. Through our empirical studies, we uncover two key observations that challenge this assumption: (1) the diminishing impact of patch selection as inversion progresses, and (2) the late emergence of meaningful features from initially unimportant patches, a phenomenon we call the \textit{Re-Birth} effect.


\smalltitle{Limited Impact of Selection Criterion.}
Our first investigation is about how strongly the choice of patch selection criterion affects inversion effectiveness. In addition to high-attention selection, we evaluate several seemingly ineffective strategies, namely low-attention, random, and fixed-region (top) patch selection, where the selected patches remain unpruned until the end of the inversion process. Unlike our expectation of noticeable differences in downstream performance, our empirical results in Table \ref{tab:empirical_findings} indicate that, given the same number of inverted images, all selection strategies yield nearly identical performance. This unexpected outcome reveals that the impact of patch selection becomes saturated as the inversion process continues, to the extent that even randomly selected patches can lead to competitive performance.


\begin{table}[t!]
  \centering

  \begin{minipage}[t]{0.48\textwidth}
    \centering
    \caption{Knowledge distillation performance under different patch selection strategies in SMI: high-attention, low-attention, random, and fixed-region (top), where DeiT-Base is fine-tuned on 32 inverted CIFAR-10 images with 76\% sparsity for 120 epochs.}
    \label{tab:empirical_findings}
    \vspace{0.4em}
    \begin{adjustbox}{width=\textwidth}
      \begin{tabular}{c|c|c|c|c|c}
        \toprule
        \multicolumn{6}{c}{Dataset: \textbf{CIFAR-10} (Teacher: DeiT-Base, Acc: 95.4)} \\
        \midrule
        \multicolumn{2}{c|}{Patch Selection} & High & Low & Random & Top \\
        \multicolumn{2}{c|}{Sparsity} & 76\% & 76\% & 76\% & 76\% \\
        \cmidrule(lr){1-6}
        Model & DeiT-Base & 92.52 & 92.75 & 92.74 & 92.74 \\
        \bottomrule
      \end{tabular}
    \end{adjustbox}
  \end{minipage}%
  \hfill
  \begin{minipage}[t]{0.5\textwidth}
  \centering

      \vspace{-3mm}
  \includegraphics[width=0.95\textwidth]{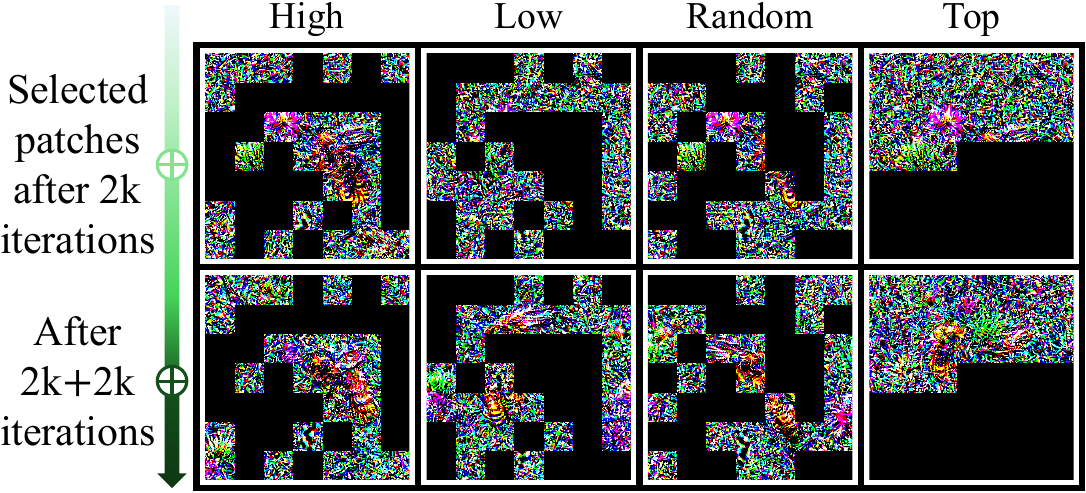}
    \captionof{figure}{Illustration of the Re-Birth effect: inverted images produced by four different patch selection strategies, shown after 2k iterations (top) and 4k iterations (bottom).}
  \label{fig:empirical_findings}
  \end{minipage}

\end{table}



\smalltitle{Re-Birth Effect.}
The counter-intuitive result above naturally raises the following question: \textit{how can initially less important patches achieve performance nearly identical to those selected based on high importance?} By thoroughly visualizing intermediate inverted images\eat{at different stages of the inversion process}, we discover an interesting phenomenon that initially uninformative patches undergo significant transformation when inversion continues beyond the early selection phase\eat{ (see Figure~\ref{fig:empirical_findings}; low-attention selected patches evolve into what we refer to as \textit{Reborn Image}. More visualizations in Appendix.)}. As shown in Figure~\ref{fig:empirical_findings}, the high-attention case exhibits little change over time (the bee was already visible after 2k iterations). In contrast, the fixed-region (top) selection approach, which initially lacked recognizable content, regenerates clear bee semantics in the remaining patches. Even low-attention and random selection approaches recover semantic details across disorganized patches. We term this phenomenon the \textit{Re-Birth Effect}, where prolonged inversion allows previously low-importance patches to gradually accumulate meaningful class-relevant features.


These empirical findings demonstrate that the main strategy of SMI, stopping inversion early and discarding unimportant patches, must be revisited in terms of both efficiency and effectiveness. By prematurely stopping inversion for certain patches, SMI prunes valuable semantic knowledge that these patches could accumulate over additional iterations. Furthermore, this restrictive pruning biases the synthesized images towards predominantly class-specific features, while neglecting class-agnostic features essential for robust knowledge transfer, as revealed by our empirical study.

\subsection{Patch Rebirth Inversion} \label{subsec:4.2}

Motivated by the discoveries above, we propose a fundamentally different approach, PRI, which \textit{enables patch rebirth} throughout the inversion process, where even initially unimportant patches are given the opportunity to be \textit{reborn} through continued inversion iterations. To this end, our method alternates between two operations during the inversion process: (1) detachment of most important patches to be stored as independent sparse images, and (2) continued inversion on the remaining patches, allowing them to evolve and eventually qualify for detachment in future iterations.


\begin{figure*}[!t]
  \centering
  \includegraphics[width=0.95\textwidth]{./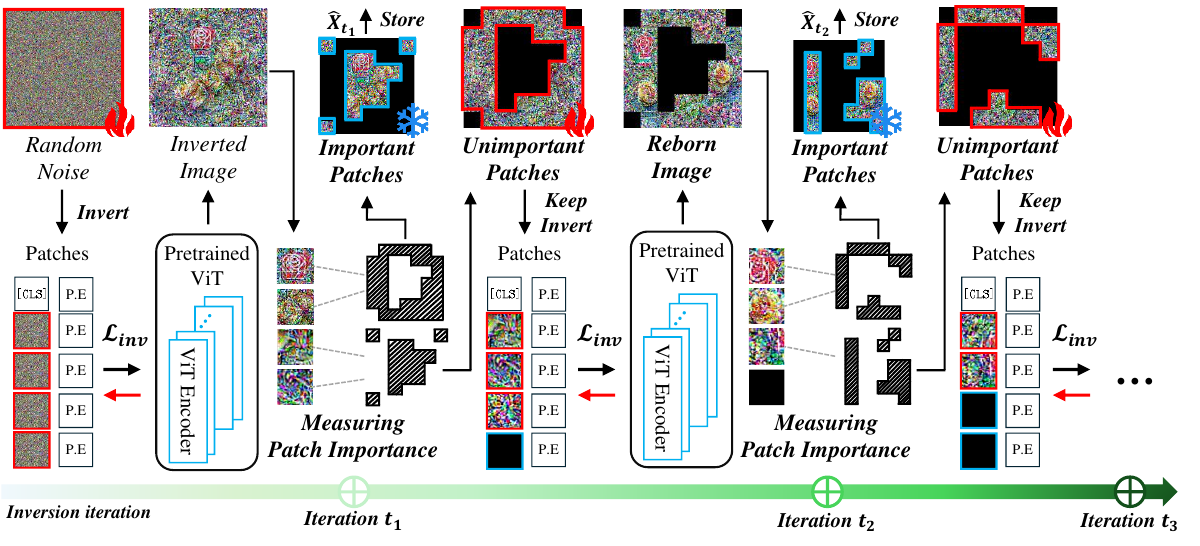}
  \caption{Overview of patch rebirth inversion. At each iteration $t_i$, we store \textcolor{blue}{blue} framed important patches and mask them out (black) while the remaining \textcolor{red}{red} framed patches continue inversion, progressively embedding class-specific features. All stored sparse view compose the final synthesized dataset, which is used for data-free downstream tasks.}
  \label{fig:overall_pipeline}
\end{figure*}

\smalltitle{Detachment of Important Patches.}
As opposed to SMI~\citep{HuW0WLYT24}, which discards unimportant patches from their process, PRI detaches the most important patches at specific points of the inversion process, thereby stopping their optimization in subsequent iterations. These detached patches are then stored separately to form an independent sparse image as one of the final outputs of the inversion process. More specifically, as illustrated in Figure~\ref{fig:overall_pipeline}, consider the first detachment point $t_1$ and its corresponding sequence of inverted patches, $\{\mathbf{x}_j^{(t_1)}\}_{j=1}^{N}$. At this point, we compute patch-wise importance scores using the attention-based metric. We then identify the top-$K$ patches with the highest importance, where $K < N$ is a parameter that determines the target sparsity and is set according to our detachment scheduling policy (detailed below). These top-$K$ patches are detached from the full patch set to form an independent sparse synthetic image, denoted as $\mathbf{\hat{X}}_{t_1}$. The same procedure is applied at subsequent detachment points $t_2, t_3, \dots$, yielding non-overlapping sparse images $\mathbf{\hat{X}}_{t_2}, \mathbf{\hat{X}}_{t_3}, \dots$. Notably, since these images are synthesized at different stages of the inversion process, they are expected to capture varying levels of class-agnostic features (e.g., shared background elements) and class-specific features (e.g., distinct object regions). Nonetheless, as each image consists of the most important patches at its corresponding detachment point, all are expected to contain meaningful knowledge for downstream tasks.



\smalltitle{Inversion of Remaining Patches.}
At each detachment point $t_k$, after the top-$K$ important patches are removed, the remaining patches, which have been deemed less important until the point, continue to be optimized in the subsequent iterations. For instance, in Figure~\ref{fig:overall_pipeline}, after $t_1$, the remaining $N - K$ patches undergo continued inversion until $t_2$, at which point another top-$K$ subset is detached to form $\hat{\mathbf{X}}_{t_2}$. This process will be repeated from $t_2$ to $t_3$, where another remaining set of $N-2K$ patches will continue to get forward pass for further inversion. Note that these remaining patches after $t_2$ are likely to be more class-specific at $t_3$, even if they start with less informative features than those of selected at $t_2$. As a result, this progressive inversion strategy substantially diversifies and enriches all the generated output images, not only within each instance (as examined by Figure \ref{fig:oneclass_confusion_matrix}) but also across different images\eat{ (as summarized in Table \ref{tab:diversity})}, making them highly effective for downstream knowledge transfer.



\smalltitle{Sparsity Control.}
To control the number of patch detachments, we define a division factor $v$, which determines into how many partitions a full-size image will be split. Thus, $v$ is not a hyperparameter, but a control parameter adjusting the target sparsity (i.e., \textit{sparsity} $ = 1-\frac{1}{v}$). For instance, for PRI to achieve 75\% sparsity, we need to set $v=4$. Specifically, given the total number of inversion iterations $T$, we define the detachment points as:
\begin{align}
t_k &= k \cdot \left\lfloor \frac{T}{v} \right\rfloor, \quad \text{for}~~k \in \{1, 2, \dots, v\}. \nonumber
\end{align}
According to this policy, $v$ also specifies how many sparse images will be generated during progressive inversion, where each sparse image has the same sparsity level equally containing $K$ patches (as mentioned above), leading to $K = \lfloor\frac{N}{v}\rfloor$ except for the final point $t_v$ that will return all $N-K(v-1)$ remaining patches. Over $T$ iterations, $v$ sparse images are sequentially generated, each representing the most informative content synthesized at different inversion points.

\subsection{Theoretical Analysis on Inversion Cost} \label{sec:Theoretical analysis}
We finally provide a theoretical analysis that supports the computational efficiency of PRI, by comparing its cost against those of DMI and SMI. To this end, we adopt the standard complexity formulations of the MHSA and FFN layers in ViTs, $\O(SA) = 4Nd^2 + 2N^2d$ and $\O(FFN) = 8Nd^2$, respectively \citep{ChenLLSW0J23}, and express the total complexity of each inversion method over $T$ iterations and $I$ images across $L$ layers. For simplicity, we consider an idealized version of SMI, denoted as SMI$^*$, which assumes that inversion starts directly with a reduced set of patches, without gradual pruning over iterations. Even under this optimistic assumption, the following theorem shows that PRI incurs the lowest computational cost in both MHSA and FFN layers.

\begin{theorem}\label{Theorem:1}
\textbf{(Inversion cost ordering).}
Given $\O(SA) = 4Nd^2 + 2N^2d$ and $\O(FFN) = 8Nd^2$ in ViTs, where $N$ is the number of all patches and $d$ is the embedding dimension, for the overall forward--backward costs under three inversion methods, denoted by $\mathcal{C}^{\mathrm{SA}}$ and $\mathcal{C}^{\mathrm{FFN}}$, it holds that:
\begin{enumerate}
    \item[\textnormal{(a)}] \textbf{SA modules.} $
        \mathcal{C}_{\mathrm{PRI}}^{\mathrm{SA}}
        ~<~
        \mathcal{C}_{\mathrm{SMI}^*}^{\mathrm{SA}}
        ~<~
        \mathcal{C}_{\mathrm{DMI}}^{\mathrm{SA}}
        \quad \text{whenever } \tfrac{N}{d}<3,$
    i.e., PRI achieves the lowest cost under the practical condition $N<3d$ satisfied by standard ViT architectures.
    \item[\textnormal{(b)}] \textbf{FFN modules.} $\mathcal{C}_{\mathrm{PRI}}^{\mathrm{FFN}}
        ~<~
        \mathcal{C}_{\mathrm{SMI}^*}^{\mathrm{FFN}}
        ~<~
        \mathcal{C}_{\mathrm{DMI}}^{\mathrm{FFN}},$
    where the relative gain of PRI over SMI$^*$ increases with the division factor $v$ and asymptotically approaches $2\times$.
\end{enumerate}
\end{theorem}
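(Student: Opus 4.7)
The plan is to derive closed-form expressions for the three total costs, each as a polynomial in $v$, $N$, and $d$, and then compare them directly. To keep the comparison fair I would fix the number of final synthetic images produced to the same value $I$ across all three methods: DMI and SMI$^*$ each emit one image per inversion run, while one PRI run emits $v$ sparse images (one per detachment point), so attaining a common $I$ requires $I$ runs of DMI or SMI$^*$ but only $I/v$ runs of PRI.

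First I would write $\mathcal{C}_{\mathrm{DMI}}$ and $\mathcal{C}_{\mathrm{SMI}^*}$ straight from the stated per-iteration costs: DMI keeps all $N$ patches live throughout the $T$ iterations, and the idealized SMI$^*$ keeps a constant $N/v$ patches live throughout. The two inequalities $\mathcal{C}_{\mathrm{SMI}^*}<\mathcal{C}_{\mathrm{DMI}}$ in both parts are then immediate because SMI$^*$ is just DMI after the substitution $N\mapsto N/v$ in the per-iteration cost. For PRI, the central observation is that the interval $(t_{k-1},t_k]$ has length $T/v$ and contains $N-(k-1)(N/v)=N(v-k+1)/v$ active patches. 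Summing the per-iteration cost over the $v$ intervals and multiplying by the number of seeds $I/v$ reduces everything to evaluating $\sum_{k=1}^{v}(v-k+1)=v(v+1)/2$ for the FFN term and for the linear-in-$N$ piece of the SA term, together with $\sum_{k=1}^{v}(v-k+1)^2=v(v+1)(2v+1)/6$ for the quadratic-in-$N$ piece.

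The FFN comparison then comes down to one line: the ratio $\mathcal{C}_{\mathrm{PRI}}^{\mathrm{FFN}}/\mathcal{C}_{\mathrm{SMI}^*}^{\mathrm{FFN}}$ collapses to $(v+1)/(2v)$, which is strictly below $1$ for every $v\ge 2$ and tends to $1/2$ as $v\to\infty$. This simultaneously yields the strict inequality in part (b) and the asymptotic $2\times$ gain claim.

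The main obstacle is part (a), because the linear and quadratic pieces of the SA cost move in opposite directions: the linear piece gives PRI a favorable ratio $(v+1)/(2v)<1$ against SMI$^*$, whereas the quadratic piece gives an unfavorable ratio $(v+1)(2v+1)/(6v)>1$ for every $v\ge 2$, so the ordering is not term-by-term. To close the gap I would combine the two terms, clear denominators, and factor; the inequality $\mathcal{C}_{\mathrm{PRI}}^{\mathrm{SA}}<\mathcal{C}_{\mathrm{SMI}^*}^{\mathrm{SA}}$ reduces to $N(2v-1)(v-1)<6dv(v-1)$, and dividing by the positive factor $(v-1)$ yields the clean sufficient condition $N/d<6v/(2v-1)$. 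Since the right-hand side is decreasing in $v$ with limit $3$, the hypothesis $N/d<3$ is sufficient uniformly in $v$, which is precisely the form the theorem states; standard ViT backbones (e.g.\ DeiT-Tiny/Small/Base with $N=196$ and $d\in\{192,384,768\}$) comfortably satisfy this.
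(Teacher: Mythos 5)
Your proposal is correct and follows essentially the same route as the paper's proof: identical per-stage cost accounting for PRI (active patch counts $N(v-k+1)/v$ over intervals of length $T/v$, with the $I/v$ scaling from amortizing one run over $v$ output images), the same closed-form sums, the same FFN ratio $(v+1)/(2v)\to 1/2$, and the same SA reduction to $N<\tfrac{6v}{2v-1}d$, whose decreasing limit $3$ gives the uniform sufficient condition $N/d<3$. No gaps; your explicit remark that the linear and quadratic SA pieces pull in opposite directions is a nice framing of why the $N/d$ condition is needed, but it is the same argument as in the appendix.
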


\begin{proof}
See Appendix for detailed derivations.
\end{proof}

\section{Experiments} \label{sec:exp}


In this section, we empirically validate the performance of our PRI method by exploring the following three questions: (1) how much PRI improves inversion efficiency, compared to standard dense inversion (DMI) as well as its faster state-of-the-art variant, SMI \citep{HuW0WLYT24}; (2) whether the synthetic images inverted by PRI lead to better knowledge transfer in two prominent data-free learning tasks, namely quantization and distillation; and finally (3) how and why PRI extracts more transferable knowledge through its progressive inversion process.



\smalltitle{Experimental Setup.}
Adopting the existing setup \citep{HuW0WLYT24}, we use DeiT~\citep{TouvronCDMSJ21} models with a patch size of 16 from the \texttt{timm} library~\citep{rwightman2019} as the backbone models for inversion. All images are inverted using Adam for 4,000 iterations with a learning rate of 0.25. The hyperparameter $\lambda$ for the inversion loss in Eq.~(\ref{eqn:inv_loss}) is set to $10^{-4}$, following standard practice~\citep{YinMALMHJK20}. For the default sparsity of inverted images, we also follow the original SMI setting (i.e., 76\%) by applying pruning at iterations 50, 100, 200, and 300 with the same ratio of 0.3. To match this 76\% sparsity in PRI, we set $v=4$, which yields 75\% sparsity according to our detachment policy. All experiments were conducted on a single NVIDIA RTX A6000 GPU. Full details are provided in the Appendix.




\subsection{Inversion Efficiency}

In Table \ref{tab:main_efficiency_results}, we report the inversion throughput (i.e., the number of iterations per second), computational cost (FLOPs), and GPU memory consumption of different inversion methods using DeiT architectures when synthesizing 128 images per batch. As theoretically proved in Theorem~\ref{Theorem:1}, PRI achieves up to 2$\times$ faster inversion than SMI and 10$\times$ faster inversion than DMI as the division factor $v$ increases. PRI also reduces FLOPs by up to 50\% and GPU memory usuage by up to 60\% compared to SMI. Importantly, the efficiency gains of PRI become more notable at higher sparsity levels, yielding increasingly larger margins over SMI. Overall, these empirical results demonstrate that PRI is significantly more efficient than both DMI and the state-of-the-art SMI method.

\subsection{Effectiveness in Data-Free Knowledge Transfer}
Given the superior efficiency of PRI shown in Table \ref{tab:main_efficiency_results}, we now evaluate how effectively the inverted images convey pretrained knowledge in two prominent data-free knowledge transfer tasks, quantization and knowledge distillation.

\smalltitle{Quantization.}
Table \ref{tab:main_combined_results}(a) presents the resulting accuracy of \textit{quantization-aware training} (QAT), where 10k inverted images from DeiT-Base are used to fine-tune quantized models for 100 epochs with a learning rate of 0.001. Specifically, we adopt \textit{learned step size quantization} (LSQ) \citep{EsserMBAM20} for fine-tuning, using only inverted images without access to the original training data. Despite limited room for improving over the original model accuracy, PRI even outperforms DMI at 50\% sparsity and thus achieving faster inversion, and shows only a minor accuracy drop at 86\% sparsity, where PRI achieves a 10$\times$ speedup over DMI in Table \ref{tab:main_efficiency_results}. Compared to SMI, PRI consistently maintains larger accuracy margins, especially as the sparsity level increases.


\smalltitle{Distillation.}
Table \ref{tab:main_combined_results}(b) presents the results of knowledge distillation, where 128 images per batch are inverted to construct a synthetic training set for student models. Each batch is used only once, and no access to original training data is allowed. The teacher model is DeiT-Base pretrained on ImageNet~\citep{DengDSLL009} yet fine-tuned on CIFAR-100~\citep{krizhevsky2009learning}, while the student models are DeiT models pretrained only on ImageNet. Aligning with the QAT results in Table \ref{tab:main_combined_results}(a), PRI even outperforms DMI at 50\% sparsity and achieves comparable performance at higher sparsity levels when distilling into the DeiT-Base student. In contrast, when distilling into smaller student models, such as DeiT-Tiny, using highly sparse inverted images (i.e., 75\% and 86\% sparsity) becomes more challenging, as DMI clearly outperforms both SMI and PRI at 86\% sparsity. Nevertheless, PRI still manages to achieve performance close to DMI at 50\% sparsity even with the DeiT-Tiny student, and consistently surpasses SMI by a large margin in all cases. SMI, in particular, abruptly fails to transfer knowledge to smaller architectures at higher sparsity levels, showing a sharp degradation in performance.

In summary, PRI enables highly effective data-free knowledge transfer, consistently outperforming SMI and matching or exceeding DMI across both quantization and distillation tasks, even under high sparsity and thus faster inversion.

\begin{table*}[t]
\centering
\caption{Inversion efficiency on DeiT-Base across various sparsity levels. Throughput is the inversion speed, measuring inversion iterations per second. The changes in \textcolor{red}{red} and \textcolor{blue}{blue} refer to the comparison with each sparsity level of SMI. More results are included in the Appendix.}

\begin{adjustbox}{width=0.85\textwidth}
\begin{tabular}{c|c|c|ccc|ccc}
\toprule
\multicolumn{2}{c|}{Method} & DMI & \multicolumn{3}{c|}{SMI} & \multicolumn{3}{c}{PRI} \\
\multicolumn{2}{c|}{Sparsity} & 0\% & 50\% & 76\% & 86\% & 50\% $(v=2)$ & 75\% $(v=4)$ & 86\% $(v=7)$ \\
\cmidrule(lr){1-9}
\multicolumn{2}{c|}{Throughput (its/s) \textcolor{red}{↑}}  & 1.10 & 2.20 & 3.92 & 5.58 & 2.88 {\textcolor{red}{(+30.9\%)}} & 6.40 {\textcolor{red}{(+63.3\%)}} & \textbf{11.81} {\textcolor{red}{(+111.6\%)}} \\
\multicolumn{2}{c|}{FLOPs (T) \textcolor{blue}{↓}} & 13.43 & 6.74 & 3.45 & 2.13 & 5.02 {\textcolor{blue}{(-25.5\%)}} & 2.09 {\textcolor{blue}{(-39.4\%)}} & \textbf{1.07} {\textcolor{blue}{(-49.8\%)}} \\
\multicolumn{2}{c|}{GPU Memory (GB) \textcolor{blue}{↓}}  & 23.42 & 10.77 & 6.26 & 4.61 & 8.99 {\textcolor{blue}{(-16.5\%)}} & 4.28 {\textcolor{blue}{(-31.6\%)}} & \textbf{2.68} {\textcolor{blue}{(-41.9\%)}} \\
\bottomrule
\end{tabular}
\end{adjustbox}

\label{tab:main_efficiency_results}
\end{table*}

\begin{table*}[t]
\centering
\caption{Downstream task results on data-free quantization and knowledge distillation using DMI, SMI, and PRI across various sparsity levels. (a) Quantization results on ImageNet-1k, where W4/A8 refers to the bit precision for weight and activation quantization, respectively. (b) Knowledge distillation results on CIFAR-100. The changes in \textcolor{red}{red} refer to the comparison with each sparsity level of SMI. More results are included in the Appendix.}
\begin{adjustbox}{width=0.88\textwidth}
\begin{tabular}{c|c|c|ccc|ccc}
\toprule
\multicolumn{9}{c}{(a) Quantization Results – \textbf{ImageNet-1k} (Original: DeiT-Base (32 bits), Acc: 81.7\%)} \\
\midrule
\multicolumn{2}{c|}{Method} & DMI & \multicolumn{3}{c|}{SMI} & \multicolumn{3}{c}{PRI} \\
\multicolumn{2}{c|}{Sparsity} & 0\% & 50\% & 76\% & 86\% & 50\% $(v=2)$ & 75\% $(v=4)$ & 86\% $(v=7)$ \\
\cmidrule(lr){1-9}
\multirow{2}{*}{\begin{tabular}[l]{@{}l@{}}Quantized\\ Accuracy (\%)\end{tabular}}
& W4/A8 & 80.19 & 80.29 & 79.77 & 79.20  & \textbf{80.36} {\textcolor{red}{(+0.07)}} & 80.13 {\textcolor{red}{(+0.46)}} & 80.07 {\textcolor{red}{(+0.87)}} \\
& W8/A8 & 80.73 & 80.77 & 80.33 & 79.85 & \textbf{80.78} {\textcolor{red}{(+0.01)}} & 80.70 {\textcolor{red}{(+0.37)}} & 80.57 {\textcolor{red}{(+0.72)}} \\
\bottomrule
\end{tabular}
\end{adjustbox}

 \vspace{1mm}  

\begin{adjustbox}{width=0.88\textwidth}
\begin{tabular}{c|c|c|ccc|ccc}
\toprule
\multicolumn{9}{c}{(b) Knowledge Distillation Results – \textbf{CIFAR-100} (Teacher: DeiT-Base, Acc: 80.6\%)} \\
\midrule
\multicolumn{2}{c|}{Method} & DMI & \multicolumn{3}{c|}{SMI} & \multicolumn{3}{c}{PRI} \\
\multicolumn{2}{c|}{Sparsity} & 0\% & 50\% & 76\% & 86\% & 50\% $(v=2)$ & 75\% $(v=4)$ & 86\% $(v=7)$ \\
\cmidrule(lr){1-9}
\multirow{3}{*}{\begin{tabular}[l]{@{}l@{}}Student\\ Accuracy (\%)\end{tabular}}
& DeiT-Tiny  & \textbf{54.90} & 48.34 & 24.31 & 3.55  & 54.57 {\textcolor{red}{(+6.23)}} & 43.32 {\textcolor{red}{(+19.01)}} & 21.27 {\textcolor{red}{(+17.72)}} \\
& DeiT-Small & 67.62 & 62.55 & 45.05 & 11.25 & \textbf{67.70} {\textcolor{red}{(+5.15)}} & 62.93 {\textcolor{red}{(+17.87)}} & 45.59 {\textcolor{red}{(+34.34)}} \\
& DeiT-Base  & 79.76 & 79.41 & 77.55 & 70.22 & \textbf{79.98} {\textcolor{red}{(+0.57)}} & 79.57 {\textcolor{red}{(+1.98)}} & 78.46 {\textcolor{red}{(+8.24)}} \\
\bottomrule
\end{tabular}
\end{adjustbox}

\label{tab:main_combined_results}
\end{table*}



\subsection{Transferability Analysis of PRI} \label{subsec:detail_results}

\begin{figure*}[t]
  \centering
  \begin{minipage}[t]{0.3\textwidth}
    \centering
    \includegraphics[width=0.95\linewidth]{./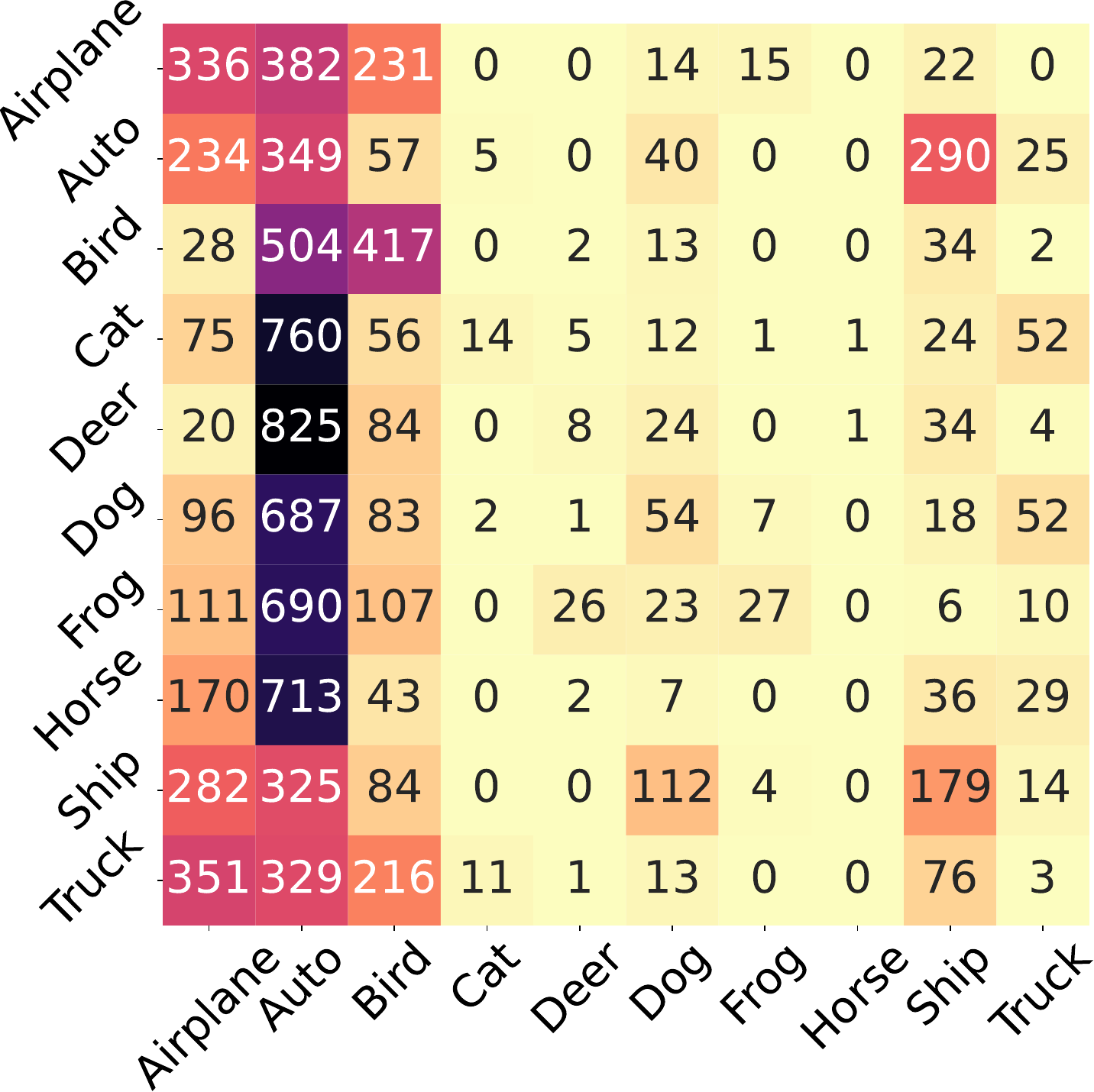}
    \vspace{-0.8em}
    \small \hspace*{1.5em} DMI
  \end{minipage}%
  \hfill
  \begin{minipage}[t]{0.3\textwidth}
    \centering
    \includegraphics[width=0.95\linewidth]{./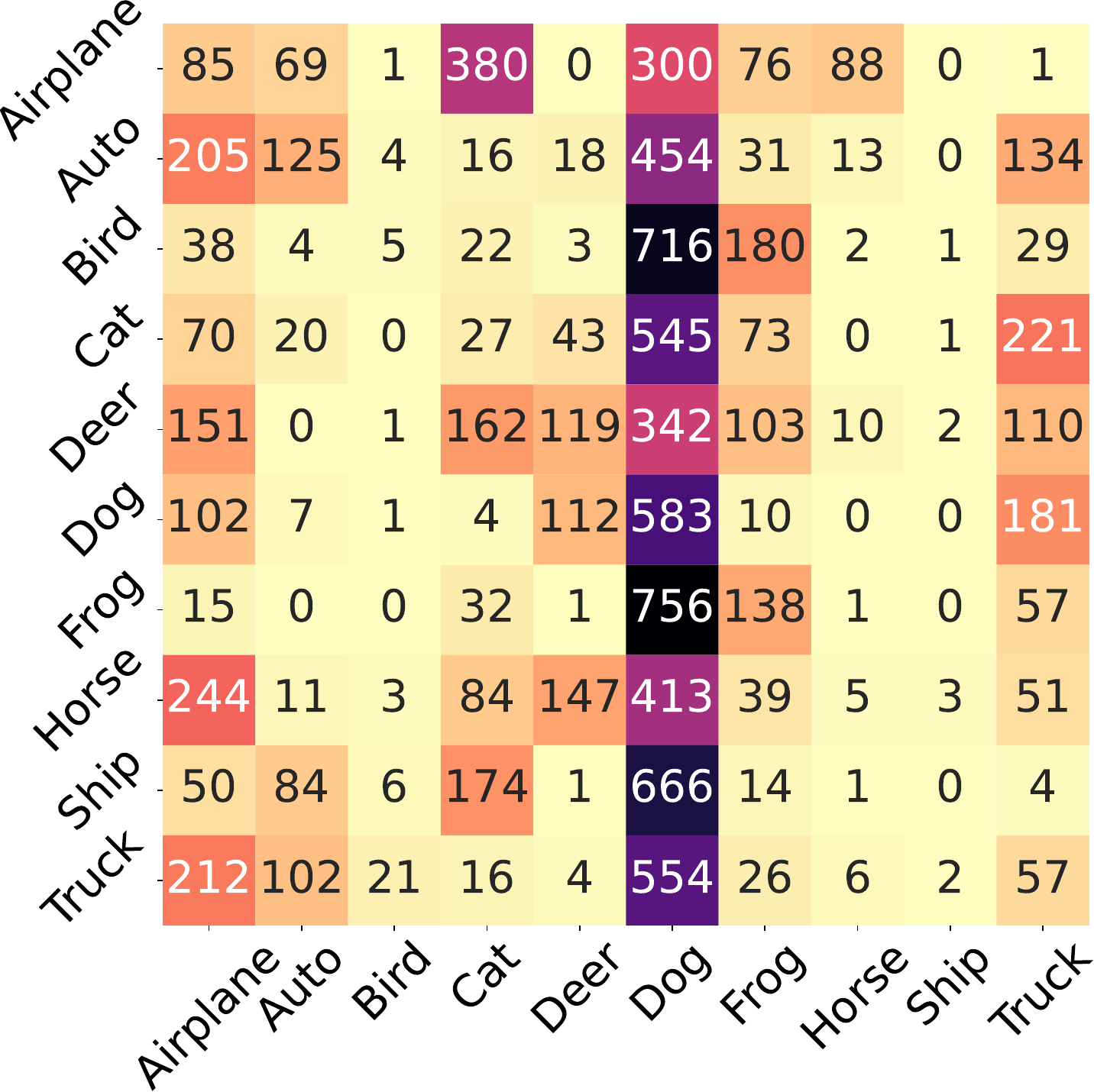}
    \vspace{-0.8em}
    \small \hspace*{1.5em} SMI
  \end{minipage}%
  \hfill
  \begin{minipage}[t]{0.3\textwidth}
    \centering
    \includegraphics[width=0.95\linewidth]{./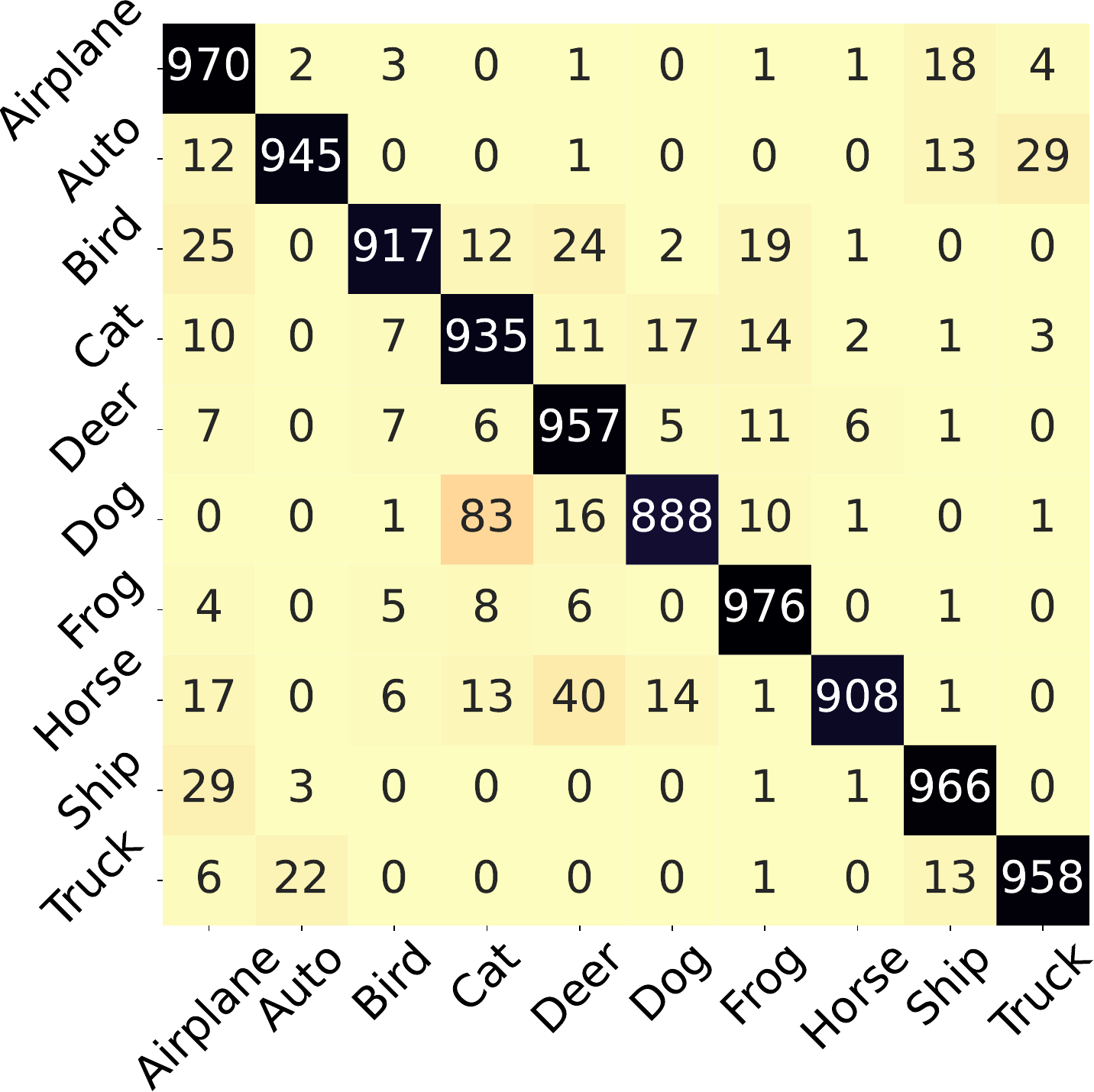}
    \vspace{-0.8em}
    \small \hspace*{1.5em} PRI
  \end{minipage}
  \vspace{1mm}
  \caption{Confusion matrices of student models trained exclusively on inverted images from a single class, ``airplane'', in CIFAR-10, using different inversion methods. The architecture of both teacher and student is DeiT-Base. While students trained with DMI and SMI fail to generalize beyond the target class, the student trained with PRI-inverted images exhibits broad generalization across all classes.}
  \label{fig:oneclass_confusion_matrix}
\vspace{-2mm}
\end{figure*}

\begin{figure*}[t]
  \centering
  \begin{minipage}[t]{0.48\textwidth}
    \centering
    \begin{minipage}[t]{0.24\textwidth}
      \centering
      \includegraphics[width=\linewidth]{./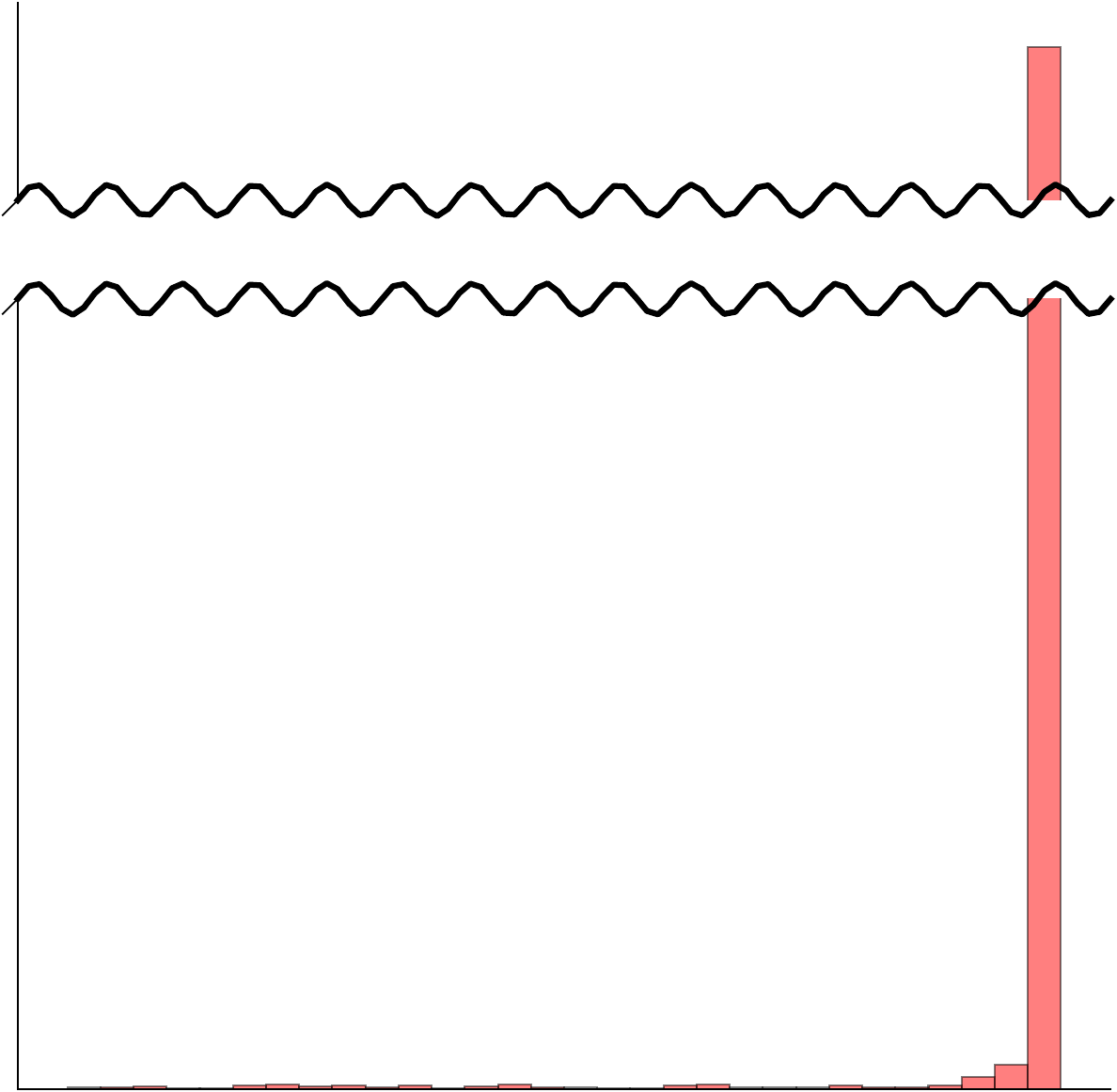}
      \small (a) DMI
    \end{minipage}%
    \hfill
    \begin{minipage}[t]{0.24\textwidth}
      \centering
      \includegraphics[width=\linewidth]{./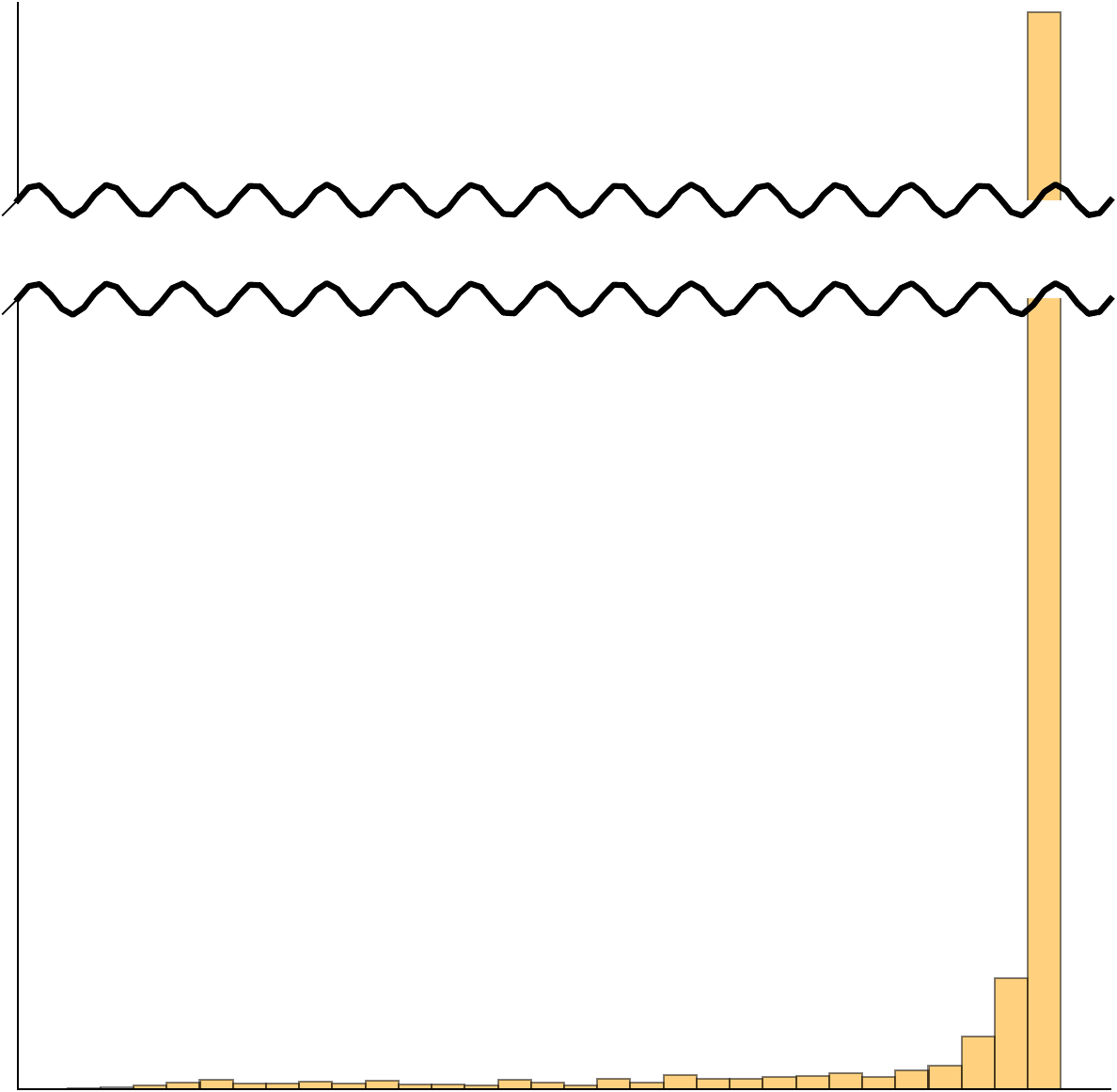}
      \small (b) SMI
    \end{minipage}%
    \hfill
    \begin{minipage}[t]{0.24\textwidth}
      \centering
      \includegraphics[width=\linewidth]{./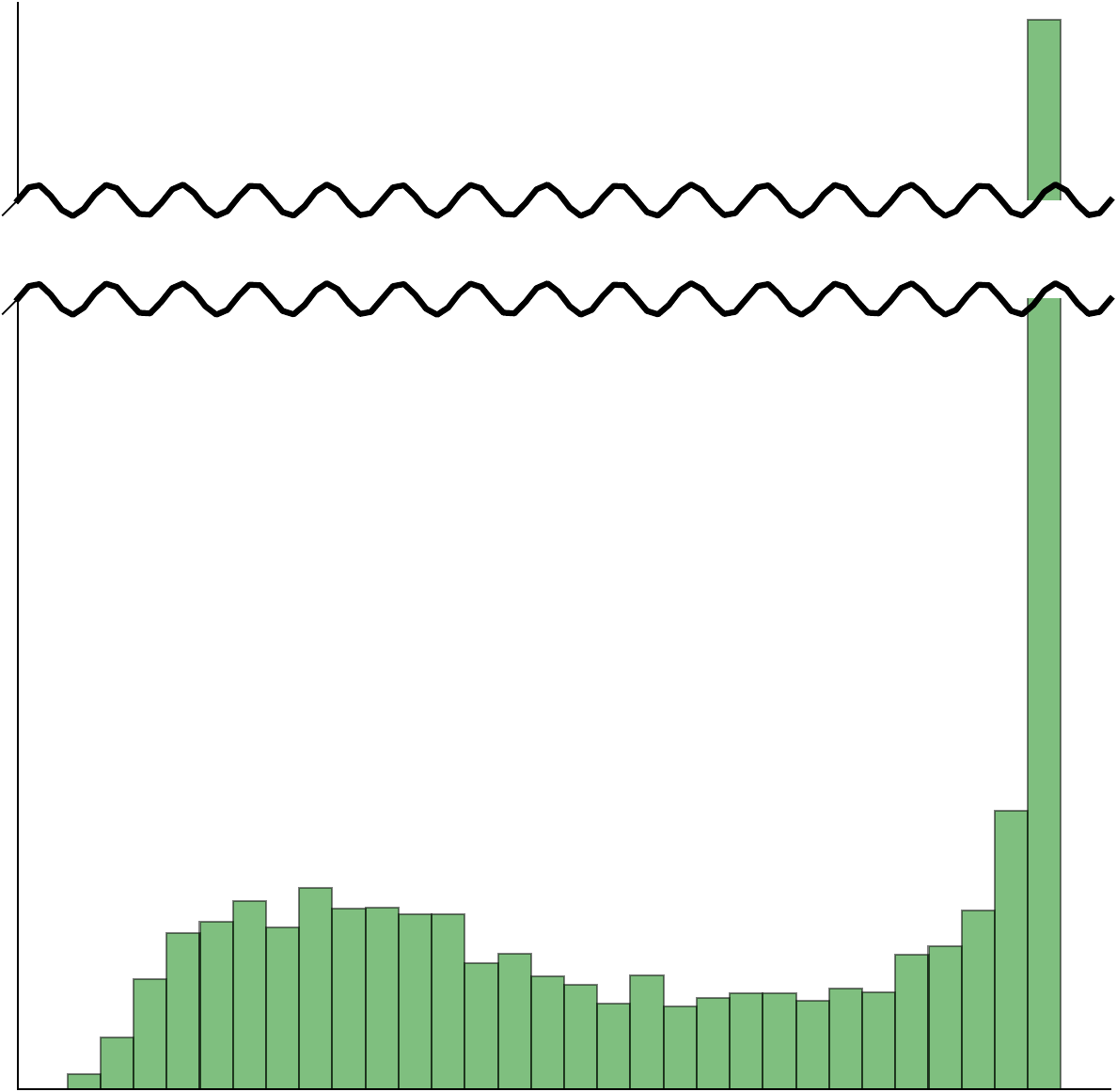}
      \small (c) PRI
    \end{minipage}
    \hfill
    \begin{minipage}[t]{0.24\textwidth}
      \centering
      \includegraphics[width=\linewidth]{./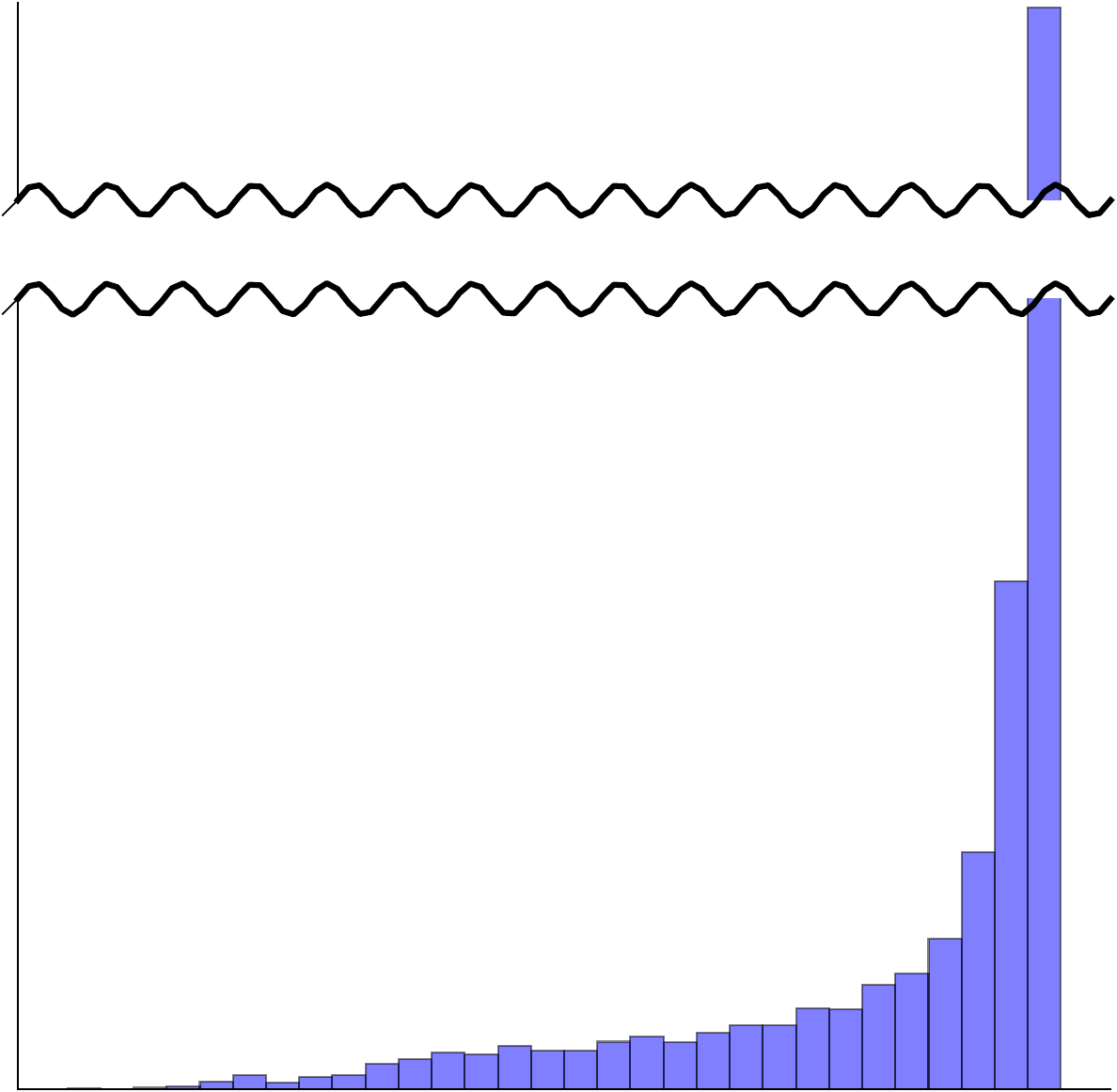}
      \small (d) Real
    \end{minipage}
    \vspace{-1mm}
    \caption{Distribution of pretrained teacher model's confidence for 10k synthetic and real images on CIFAR-10. X-axis is confidence and Y-axis is frequency.\eat{ Compared to DMI and SMI, PRI generates samples with a broader and more balanced confidence distribution, indicating a richer mix of class-specific and class-agnostic features.}}
    \label{fig:methods_inversion_dist.}
  \end{minipage}%
  \hfill
  \begin{minipage}[t]{0.48\textwidth}
    \centering
    \begin{minipage}[t]{0.24\textwidth}
      \centering
      \includegraphics[width=\linewidth]{./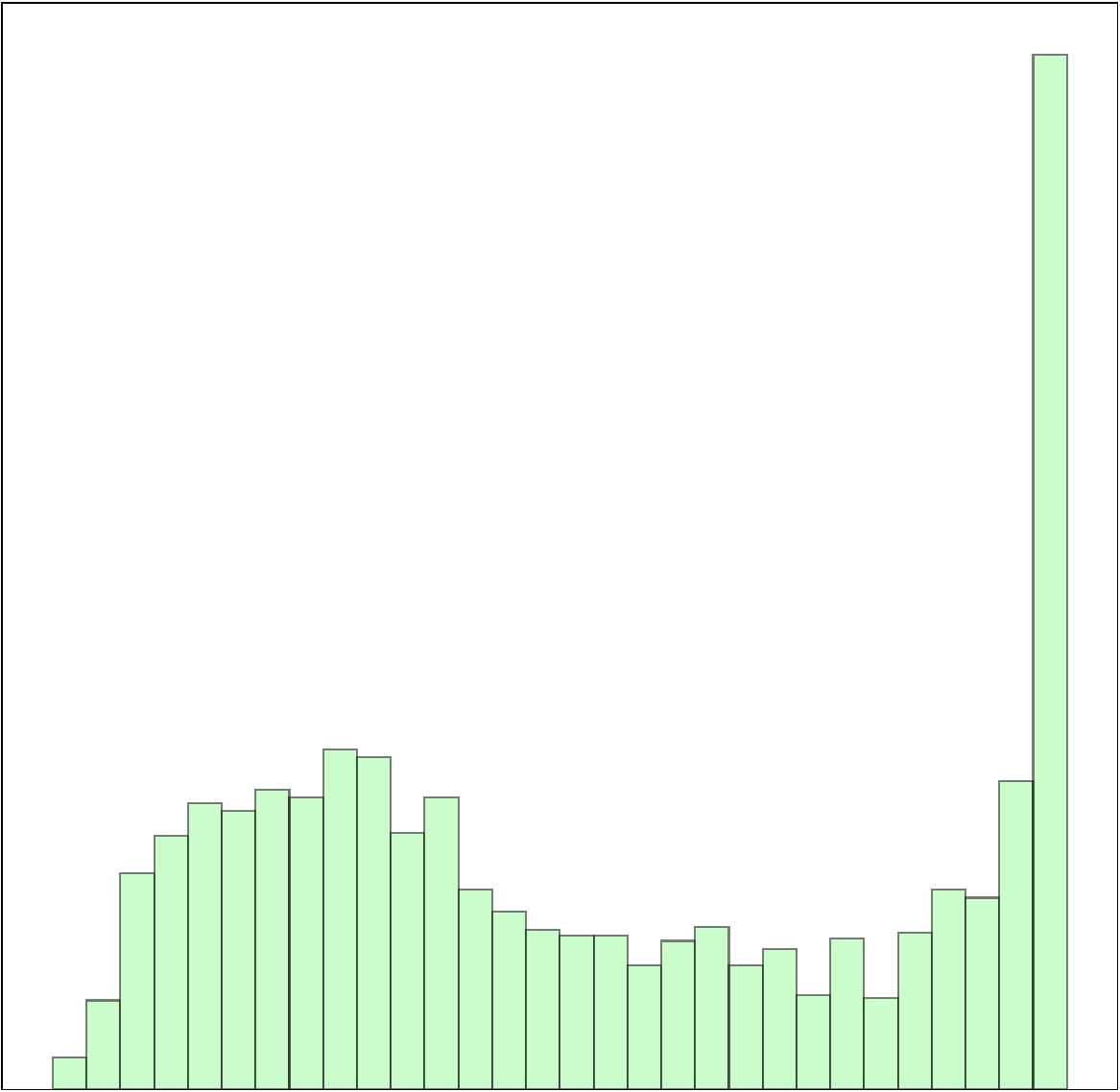}
      \small (a) $t_1$ image
    \end{minipage}%
    \hfill
    \begin{minipage}[t]{0.24\textwidth}
      \centering
      \includegraphics[width=\linewidth]{./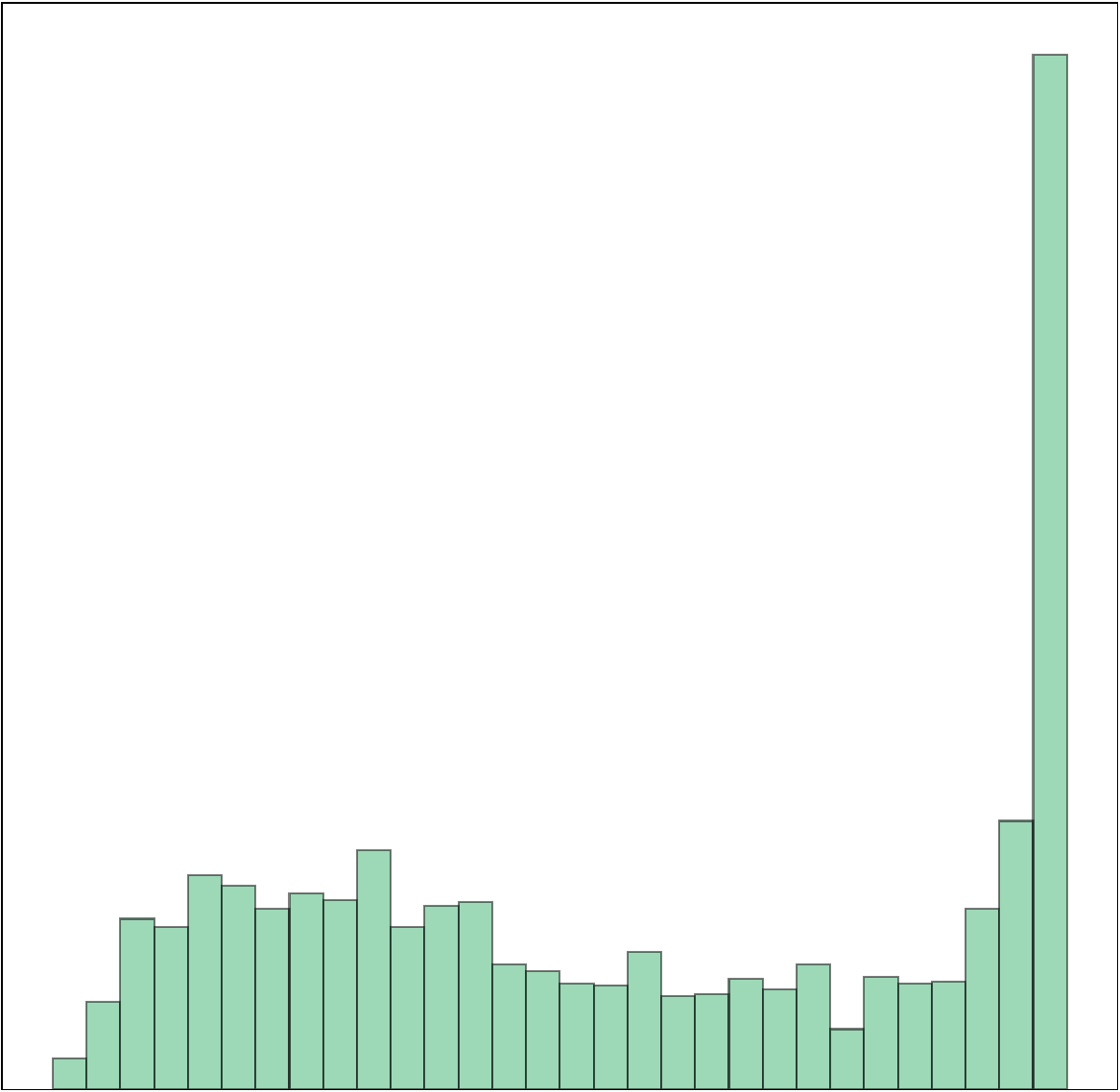}
      \small (b) $t_2$ image
    \end{minipage}%
    \hfill
    \begin{minipage}[t]{0.24\textwidth}
      \centering
      \includegraphics[width=\linewidth]{./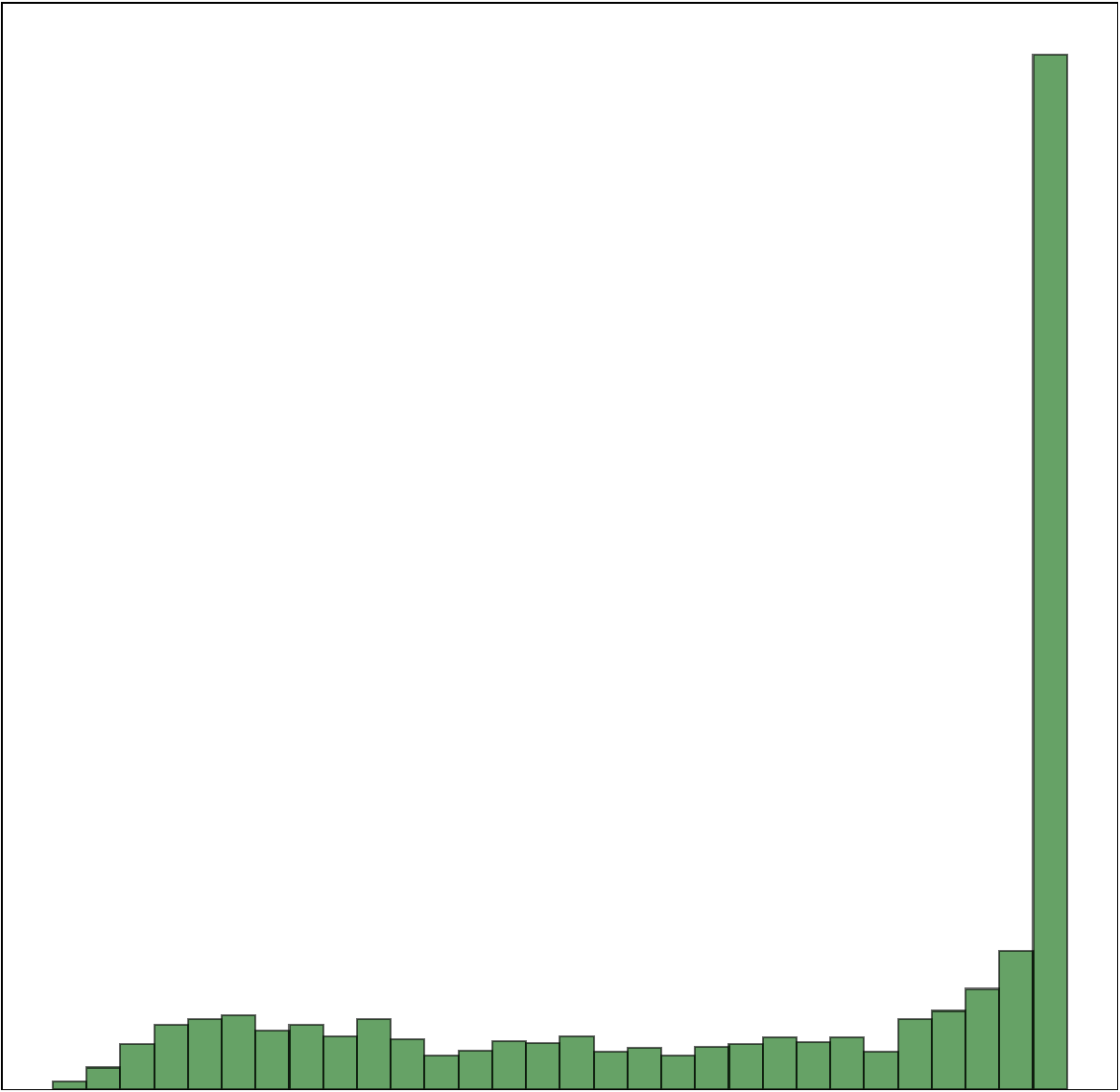}
      \small (c) $t_3$ image
    \end{minipage}
    \hfill
    \begin{minipage}[t]{0.24\textwidth}
      \centering
      \includegraphics[width=\linewidth]{./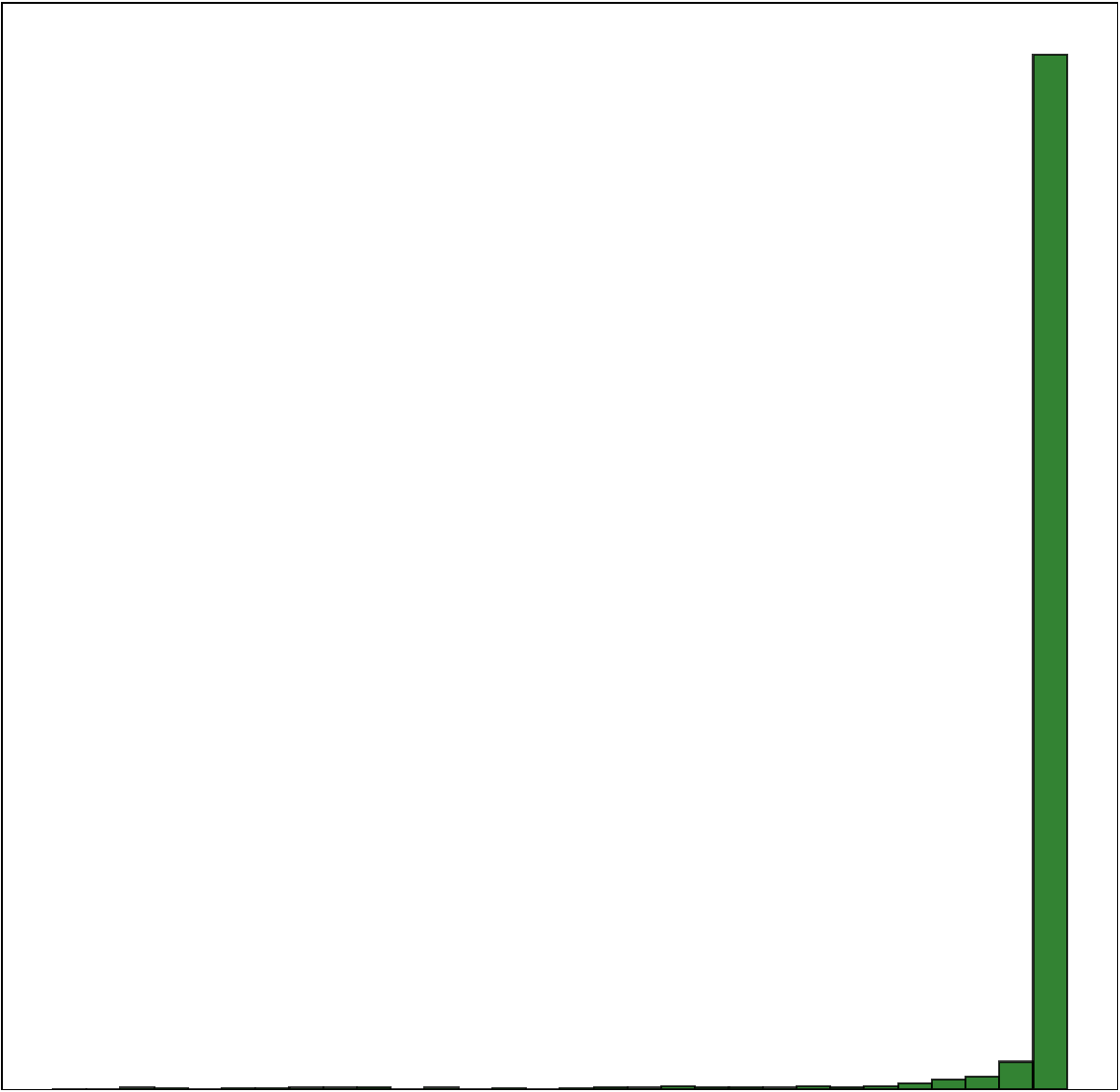}
      \small (d) $t_4$ image
    \end{minipage}
    \vspace{-1mm}
    \caption{Distribution of pretrained teacher model's confidence for 2.5k sparse images of PRI across detachment points $t_1$ to $t_4$. X-axis is confidence and Y-axis is frequency.\eat{ Early stage samples ($t_1$) exhibit broad confidence ranges, indicating rich class-agnostic features, which contain somewhat ambiguous to pretrained teacher. Later stage samples ($t_4$) show increasingly peaked confidence, reflecting progressive accumulation of class-specific features.}}
    \label{fig:PRI_dist.}
  \end{minipage}

\end{figure*}

To further understand how and why PRI extracts more transferable knowledge through its progressive inversion process, we investigate its ability to preserve class-agnostic information and support generalized knowledge transfer beyond class-specific reconstruction.

\smalltitle{One-Class Distillation.}
We first examine whether PRI can effectively transfer class-agnostic knowledge. To this end, we design an extreme scenario, called \textit{one-class distillation}, where knowledge distillation is performed using inverted images, all corresponding to a single class (airplane in CIFAR-10). Somewhat surprisingly, as shown in Figure \ref{fig:oneclass_confusion_matrix}, the student trained on only ``airplane'' images inverted via PRI achieves reasonably strong performance across all 10 classes. In contrast, as intuitively expected, both DMI and SMI fail to capture generalized knowledge across classes, probably due to the fact that their inverted images predominantly encode class-specific features. These results suggest that PRI can extract knowledge that is not only class-agnostic but also transferable even through single-class inversion.

\smalltitle{Confidence Analysis.}
Next, generating 10k inverted images evenly across all 10 classes in CIFAR-10, we also report the distributions of the maximum class probabilities (i.e., confidences) predicted by the teacher model. As shown in Figure \ref{fig:methods_inversion_dist.}, only PRI exhibits a wide and smooth confidence distribution, showing a level of smoothness comparable to that observed in real images. In contrast, both DMI and SMI yield overly confident predictions, with most values concentrated near 1.0. This also confirms that their inverted images predominantly encode class-specific features while overlooking class-agnostic information.



\smalltitle{Progressive Shift from General to Specific.}
Finally, we analyze how PRI gradually transitions from capturing general, class-agnostic knowledge to more specific, class-dependent features during its progressive inversion process. With a division factor $v=4$, we extract 2.5k inverted images at each of four detachment points, $t_1,...,t_4$, and present the confidence distributions of the images corresponding to each detachment point. As shown in Figure \ref{fig:PRI_dist.}, the confidence distributions gradually shift from the smooth and broad at $t_1$ to the sharp and peaked at $t_4$, reflecting accumulation of class-specific features. This progressive transition enables PRI to retain a broader spectrum of features throughout inversion, in contrast to SMI, which focuses only on class-dependent information.

\section{Conclusion} \label{sec:conclusion}
Motivated by our empirical finding that patches initially considered unimportant can become informative through continued inversion, we proposed Patch Rebirth Inversion (PRI), a method that efficiently synthesizes multiple sparse images capturing both class-agnostic and class-specific features. Extensive experiments demonstrated that PRI significantly accelerates inversion while consistently achieving strong performance across data-free quantization and knowledge distillation tasks.





\bibliography{references}

\begin{thebibliography}{43}
\providecommand{\natexlab}[1]{#1}
\providecommand{\url}[1]{\texttt{#1}}
\expandafter\ifx\csname urlstyle\endcsname\relax
  \providecommand{\doi}[1]{doi: #1}\else
  \providecommand{\doi}{doi: \begingroup \urlstyle{rm}\Url}\fi

\bibitem[Binici et~al.(2022)Binici, Aggarwal, Pham, Leman, and Mitra]{BiniciAPLM22}
Kuluhan Binici, Shivam Aggarwal, Nam~Trung Pham, Karianto Leman, and Tulika Mitra.
\newblock Robust and resource-efficient data-free knowledge distillation by generative pseudo replay.
\newblock In \emph{Thirty-Sixth {AAAI} Conference on Artificial Intelligence, {AAAI} 2022, Thirty-Fourth Conference on Innovative Applications of Artificial Intelligence, {IAAI} 2022, The Twelveth Symposium on Educational Advances in Artificial Intelligence, {EAAI} 2022 Virtual Event, February 22 - March 1, 2022}, pp.\  6089--6096. {AAAI} Press, 2022.

\bibitem[Braun et~al.(2024)Braun, Mundt, and Kersting]{BraunMK24}
Steven Braun, Martin Mundt, and Kristian Kersting.
\newblock Deep classifier mimicry without data access.
\newblock In Sanjoy Dasgupta, Stephan Mandt, and Yingzhen Li (eds.), \emph{International Conference on Artificial Intelligence and Statistics, 2-4 May 2024, Palau de Congressos, Valencia, Spain}, volume 238 of \emph{Proceedings of Machine Learning Research}, pp.\  4762--4770. {PMLR}, 2024.

\bibitem[Cai et~al.(2020)Cai, Yao, Dong, Gholami, Mahoney, and Keutzer]{CaiYDGMK20}
Yaohui Cai, Zhewei Yao, Zhen Dong, Amir Gholami, Michael~W. Mahoney, and Kurt Keutzer.
\newblock Zeroq: {A} novel zero shot quantization framework.
\newblock In \emph{2020 {IEEE/CVF} Conference on Computer Vision and Pattern Recognition, {CVPR} 2020, Seattle, WA, USA, June 13-19, 2020}, pp.\  13166--13175. Computer Vision Foundation / {IEEE}, 2020.

\bibitem[Chen et~al.(2019)Chen, Wang, Xu, Yang, Liu, Shi, Xu, Xu, and Tian]{ChenW0YLSXX019}
Hanting Chen, Yunhe Wang, Chang Xu, Zhaohui Yang, Chuanjian Liu, Boxin Shi, Chunjing Xu, Chao Xu, and Qi~Tian.
\newblock Data-free learning of student networks.
\newblock In \emph{2019 {IEEE/CVF} International Conference on Computer Vision, {ICCV} 2019, Seoul, Korea (South), October 27 - November 2, 2019}, pp.\  3513--3521. {IEEE}, 2019.

\bibitem[Chen et~al.(2023{\natexlab{a}})Chen, Gao, Lu, Xue, Wang, and Liao]{ChenGLXWL23}
Jiacheng Chen, Bin{-}Bin Gao, Zongqing Lu, Jing{-}Hao Xue, Chengjie Wang, and Qingmin Liao.
\newblock Apanet: Adaptive prototypes alignment network for few-shot semantic segmentation.
\newblock \emph{{IEEE} Trans. Multim.}, 25:\penalty0 4361--4373, 2023{\natexlab{a}}.

\bibitem[Chen et~al.(2023{\natexlab{b}})Chen, Lin, Li, Shen, Wu, Chao, and Ji]{ChenLLSW0J23}
Mengzhao Chen, Mingbao Lin, Ke~Li, Yunhang Shen, Yongjian Wu, Fei Chao, and Rongrong Ji.
\newblock Cf-vit: {A} general coarse-to-fine method for vision transformer.
\newblock In Brian Williams, Yiling Chen, and Jennifer Neville (eds.), \emph{Thirty-Seventh {AAAI} Conference on Artificial Intelligence, {AAAI} 2023, Thirty-Fifth Conference on Innovative Applications of Artificial Intelligence, {IAAI} 2023, Thirteenth Symposium on Educational Advances in Artificial Intelligence, {EAAI} 2023, Washington, DC, USA, February 7-14, 2023}, pp.\  7042--7052. {AAAI} Press, 2023{\natexlab{b}}.

\bibitem[Choi et~al.(2021)Choi, Hong, Park, Kim, and Lee]{ChoiHPKL21}
Kanghyun Choi, Deokki Hong, Noseong Park, Youngsok Kim, and Jinho Lee.
\newblock Qimera: Data-free quantization with synthetic boundary supporting samples.
\newblock In Marc'Aurelio Ranzato, Alina Beygelzimer, Yann~N. Dauphin, Percy Liang, and Jennifer~Wortman Vaughan (eds.), \emph{Advances in Neural Information Processing Systems 34: Annual Conference on Neural Information Processing Systems 2021, NeurIPS 2021, December 6-14, 2021, virtual}, pp.\  14835--14847, 2021.

\bibitem[Choi et~al.(2025)Choi, Lee, Kwon, Park, Kim, Park, Choi, and Lee]{ChoiLKPKPCL25}
Kanghyun Choi, Hyeyoon Lee, Dain Kwon, Sunjong Park, Kyuyeun Kim, Noseong Park, Jonghyun Choi, and Jinho Lee.
\newblock Mimiq: Low-bit data-free quantization of vision transformers with encouraging inter-head attention similarity.
\newblock In Toby Walsh, Julie Shah, and Zico Kolter (eds.), \emph{AAAI-25, Sponsored by the Association for the Advancement of Artificial Intelligence, February 25 - March 4, 2025, Philadelphia, PA, {USA}}, pp.\  16037--16045. {AAAI} Press, 2025.

\bibitem[Deng et~al.(2009)Deng, Dong, Socher, Li, Li, and Fei{-}Fei]{DengDSLL009}
Jia Deng, Wei Dong, Richard Socher, Li{-}Jia Li, Kai Li, and Li~Fei{-}Fei.
\newblock Imagenet: {A} large-scale hierarchical image database.
\newblock In \emph{2009 {IEEE} Computer Society Conference on Computer Vision and Pattern Recognition {(CVPR} 2009), 20-25 June 2009, Miami, Florida, {USA}}, pp.\  248--255. {IEEE} Computer Society, 2009.

\bibitem[Dosovitskiy et~al.(2021)Dosovitskiy, Beyer, Kolesnikov, Weissenborn, Zhai, Unterthiner, Dehghani, Minderer, Heigold, Gelly, Uszkoreit, and Houlsby]{DosovitskiyB0WZ21}
Alexey Dosovitskiy, Lucas Beyer, Alexander Kolesnikov, Dirk Weissenborn, Xiaohua Zhai, Thomas Unterthiner, Mostafa Dehghani, Matthias Minderer, Georg Heigold, Sylvain Gelly, Jakob Uszkoreit, and Neil Houlsby.
\newblock An image is worth 16x16 words: Transformers for image recognition at scale.
\newblock In \emph{9th International Conference on Learning Representations, {ICLR} 2021, Virtual Event, Austria, May 3-7, 2021}. OpenReview.net, 2021.

\bibitem[Esser et~al.(2020)Esser, McKinstry, Bablani, Appuswamy, and Modha]{EsserMBAM20}
Steven~K. Esser, Jeffrey~L. McKinstry, Deepika Bablani, Rathinakumar Appuswamy, and Dharmendra~S. Modha.
\newblock Learned step size quantization.
\newblock In \emph{8th International Conference on Learning Representations, {ICLR} 2020, Addis Ababa, Ethiopia, April 26-30, 2020}. OpenReview.net, 2020.

\bibitem[Fang et~al.(2019)Fang, Song, Shen, Wang, Chen, and Song]{GongfanFang2019}
Gongfan Fang, Jie Song, Chengchao Shen, Xinchao Wang, Da~Chen, and Mingli Song.
\newblock Data-free adversarial distillation.
\newblock \emph{CoRR}, abs/1912.11006, 2019.

\bibitem[Fang et~al.(2021)Fang, Song, Wang, Shen, Wang, and Song]{GongfanFang2021}
Gongfan Fang, Jie Song, Xinchao Wang, Chengchao Shen, Xingen Wang, and Mingli Song.
\newblock Contrastive model inversion for data-free knowledge distillation.
\newblock \emph{CoRR}, abs/2105.08584, 2021.

\bibitem[Fredrikson et~al.(2015)Fredrikson, Jha, and Ristenpart]{FredriksonJR15}
Matt Fredrikson, Somesh Jha, and Thomas Ristenpart.
\newblock Model inversion attacks that exploit confidence information and basic countermeasures.
\newblock In Indrajit Ray, Ninghui Li, and Christopher Kruegel (eds.), \emph{Proceedings of the 22nd {ACM} {SIGSAC} Conference on Computer and Communications Security, Denver, CO, USA, October 12-16, 2015}, pp.\  1322--1333. {ACM}, 2015.

\bibitem[Hao et~al.(2022)Hao, Cao, Candan, Liu, Chen, and Ma]{YifanHao2022}
Yifan Hao, Huiping Cao, K.~Selcuk Candan, Jiefei Liu, Huiying Chen, and Ziwei Ma.
\newblock Class-specific attention {(CSA)} for time-series classification.
\newblock \emph{CoRR}, abs/2211.10609, 2022.

\bibitem[Hatamizadeh et~al.(2022)Hatamizadeh, Yin, Roth, Li, Kautz, Xu, and Molchanov]{HatamizadehYR0K22}
Ali Hatamizadeh, Hongxu Yin, Holger Roth, Wenqi Li, Jan Kautz, Daguang Xu, and Pavlo Molchanov.
\newblock Gradvit: Gradient inversion of vision transformers.
\newblock In \emph{{IEEE/CVF} Conference on Computer Vision and Pattern Recognition, {CVPR} 2022, New Orleans, LA, USA, June 18-24, 2022}, pp.\  10011--10020. {IEEE}, 2022.

\bibitem[He et~al.(2016)He, Zhang, Ren, and Sun]{HeZRS16}
Kaiming He, Xiangyu Zhang, Shaoqing Ren, and Jian Sun.
\newblock Deep residual learning for image recognition.
\newblock In \emph{2016 {IEEE} Conference on Computer Vision and Pattern Recognition, {CVPR} 2016, Las Vegas, NV, USA, June 27-30, 2016}, pp.\  770--778. {IEEE} Computer Society, 2016.

\bibitem[He et~al.(2019)He, Zhang, and Lee]{HeZL19}
Zecheng He, Tianwei Zhang, and Ruby~B. Lee.
\newblock Model inversion attacks against collaborative inference.
\newblock In David~M. Balenson (ed.), \emph{Proceedings of the 35th Annual Computer Security Applications Conference, {ACSAC} 2019, San Juan, PR, USA, December 09-13, 2019}, pp.\  148--162. {ACM}, 2019.

\bibitem[Hu et~al.(2024)Hu, Wei, Shen, Wang, Li, Yuan, and Tao]{HuW0WLYT24}
Zixuan Hu, Yongxian Wei, Li~Shen, Zhenyi Wang, Lei Li, Chun Yuan, and Dacheng Tao.
\newblock Sparse model inversion: Efficient inversion of vision transformers for data-free applications.
\newblock In \emph{Forty-first International Conference on Machine Learning, {ICML} 2024, Vienna, Austria, July 21-27, 2024}. OpenReview.net, 2024.

\bibitem[Kim et~al.(2022)Kim, Shen, Thorsley, Gholami, Kwon, Hassoun, and Keutzer]{KimSTGKHK22}
Sehoon Kim, Sheng Shen, David Thorsley, Amir Gholami, Woosuk Kwon, Joseph Hassoun, and Kurt Keutzer.
\newblock Learned token pruning for transformers.
\newblock In Aidong Zhang and Huzefa Rangwala (eds.), \emph{{KDD} '22: The 28th {ACM} {SIGKDD} Conference on Knowledge Discovery and Data Mining, Washington, DC, USA, August 14 - 18, 2022}, pp.\  784--794. {ACM}, 2022.

\bibitem[Krizhevsky \& Hinton(2009)Krizhevsky and Hinton]{krizhevsky2009learning}
Alex Krizhevsky and Geoffrey Hinton.
\newblock Learning multiple layers of features from tiny images.
\newblock Technical report, University of Toronto, 2009.
\newblock Technical Report.

\bibitem[Krizhevsky et~al.(2012)Krizhevsky, Sutskever, and Hinton]{KrizhevskySH12}
Alex Krizhevsky, Ilya Sutskever, and Geoffrey~E. Hinton.
\newblock Imagenet classification with deep convolutional neural networks.
\newblock In Peter~L. Bartlett, Fernando C.~N. Pereira, Christopher J.~C. Burges, L{\'{e}}on Bottou, and Kilian~Q. Weinberger (eds.), \emph{Advances in Neural Information Processing Systems 25: 26th Annual Conference on Neural Information Processing Systems 2012. Proceedings of a meeting held December 3-6, 2012, Lake Tahoe, Nevada, United States}, pp.\  1106--1114, 2012.

\bibitem[Li et~al.(2022)Li, Ma, Chen, Xiao, and Gu]{LiMCXG22}
Zhikai Li, Liping Ma, Mengjuan Chen, Junrui Xiao, and Qingyi Gu.
\newblock Patch similarity aware data-free quantization for vision transformers.
\newblock In Shai Avidan, Gabriel~J. Brostow, Moustapha Ciss{\'{e}}, Giovanni~Maria Farinella, and Tal Hassner (eds.), \emph{Computer Vision - {ECCV} 2022 - 17th European Conference, Tel Aviv, Israel, October 23-27, 2022, Proceedings, Part {XI}}, volume 13671 of \emph{Lecture Notes in Computer Science}, pp.\  154--170. Springer, 2022.

\bibitem[Li et~al.(2024)Li, Chen, Xiao, and Gu]{LiCXG24}
Zhikai Li, Mengjuan Chen, Junrui Xiao, and Qingyi Gu.
\newblock Psaq-vit {V2:} toward accurate and general data-free quantization for vision transformers.
\newblock \emph{{IEEE} Trans. Neural Networks Learn. Syst.}, 35\penalty0 (12):\penalty0 17227--17238, 2024.

\bibitem[Liang et~al.(2022)Liang, Ge, Tong, Song, Wang, and Xie]{YouweiLiang2022}
Youwei Liang, Chongjian Ge, Zhan Tong, Yibing Song, Jue Wang, and Pengtao Xie.
\newblock Not all patches are what you need: Expediting vision transformers via token reorganizations.
\newblock \emph{CoRR}, abs/2202.07800, 2022.

\bibitem[Lopes et~al.(2017)Lopes, Fenu, and Starner]{RaphaelGontijoLopes2017}
Raphael~Gontijo Lopes, Stefano Fenu, and Thad Starner.
\newblock Data-free knowledge distillation for deep neural networks.
\newblock \emph{CoRR}, abs/1710.07535, 2017.

\bibitem[Mahendran \& Vedaldi(2015)Mahendran and Vedaldi]{MahendranV15}
Aravindh Mahendran and Andrea Vedaldi.
\newblock Understanding deep image representations by inverting them.
\newblock In \emph{{IEEE} Conference on Computer Vision and Pattern Recognition, {CVPR} 2015, Boston, MA, USA, June 7-12, 2015}, pp.\  5188--5196. {IEEE} Computer Society, 2015.

\bibitem[Mahendran \& Vedaldi(2016)Mahendran and Vedaldi]{MahendranV16}
Aravindh Mahendran and Andrea Vedaldi.
\newblock Visualizing deep convolutional neural networks using natural pre-images.
\newblock \emph{Int. J. Comput. Vis.}, 120\penalty0 (3):\penalty0 233--255, 2016.

\bibitem[Nagel et~al.(2019)Nagel, van Baalen, Blankevoort, and Welling]{NagelBBW19}
Markus Nagel, Mart van Baalen, Tijmen Blankevoort, and Max Welling.
\newblock Data-free quantization through weight equalization and bias correction.
\newblock In \emph{2019 {IEEE/CVF} International Conference on Computer Vision, {ICCV} 2019, Seoul, Korea (South), October 27 - November 2, 2019}, pp.\  1325--1334. {IEEE}, 2019.

\bibitem[Ramachandran et~al.(2024)Ramachandran, Kundu, and Krishna]{RamachandranKK24}
Akshat Ramachandran, Souvik Kundu, and Tushar Krishna.
\newblock Clamp-vit: Contrastive data-free learning for adaptive post-training quantization of vits.
\newblock In Ales Leonardis, Elisa Ricci, Stefan Roth, Olga Russakovsky, Torsten Sattler, and G{\"{u}}l Varol (eds.), \emph{Computer Vision - {ECCV} 2024 - 18th European Conference, Milan, Italy, September 29-October 4, 2024, Proceedings, Part {LXVII}}, volume 15125 of \emph{Lecture Notes in Computer Science}, pp.\  307--325. Springer, 2024.

\bibitem[Rao et~al.(2021)Rao, Zhao, Liu, Lu, Zhou, and Hsieh]{RaoZLLZH21}
Yongming Rao, Wenliang Zhao, Benlin Liu, Jiwen Lu, Jie Zhou, and Cho{-}Jui Hsieh.
\newblock Dynamicvit: Efficient vision transformers with dynamic token sparsification.
\newblock In Marc'Aurelio Ranzato, Alina Beygelzimer, Yann~N. Dauphin, Percy Liang, and Jennifer~Wortman Vaughan (eds.), \emph{Advances in Neural Information Processing Systems 34: Annual Conference on Neural Information Processing Systems 2021, NeurIPS 2021, December 6-14, 2021, virtual}, pp.\  13937--13949, 2021.

\bibitem[Shin \& Choi(2024)Shin and Choi]{ShinC24}
Hyunjune Shin and Dong{-}Wan Choi.
\newblock Teacher as a lenient expert: Teacher-agnostic data-free knowledge distillation.
\newblock In Michael~J. Wooldridge, Jennifer~G. Dy, and Sriraam Natarajan (eds.), \emph{Thirty-Eighth {AAAI} Conference on Artificial Intelligence, {AAAI} 2024, Thirty-Sixth Conference on Innovative Applications of Artificial Intelligence, {IAAI} 2024, Fourteenth Symposium on Educational Advances in Artificial Intelligence, {EAAI} 2014, February 20-27, 2024, Vancouver, Canada}, pp.\  14991--14999. {AAAI} Press, 2024.

\bibitem[Stergiou et~al.(2020)Stergiou, Poppe, and Veltkamp]{AlexandrosStergiou2020}
Alexandros Stergiou, Ronald Poppe, and Remco~C. Veltkamp.
\newblock Learning class regularized features for action recognition.
\newblock \emph{CoRR}, abs/2002.02651, 2020.

\bibitem[Touvron et~al.(2021)Touvron, Cord, Douze, Massa, Sablayrolles, and J{\'{e}}gou]{TouvronCDMSJ21}
Hugo Touvron, Matthieu Cord, Matthijs Douze, Francisco Massa, Alexandre Sablayrolles, and Herv{\'{e}} J{\'{e}}gou.
\newblock Training data-efficient image transformers {\&} distillation through attention.
\newblock In Marina Meila and Tong Zhang (eds.), \emph{Proceedings of the 38th International Conference on Machine Learning, {ICML} 2021, 18-24 July 2021, Virtual Event}, volume 139 of \emph{Proceedings of Machine Learning Research}, pp.\  10347--10357. {PMLR}, 2021.

\bibitem[Wang et~al.(2015)Wang, Si, and Wu]{YueWang2015}
Yue Wang, Cheng Si, and Xintao Wu.
\newblock Regression model fitting under differential privacy and model inversion attack.
\newblock In Qiang Yang and Michael~J. Wooldridge (eds.), \emph{Proceedings of the Twenty-Fourth International Joint Conference on Artificial Intelligence, {IJCAI} 2015, Buenos Aires, Argentina, July 25-31, 2015}, pp.\  1003--1009. {AAAI} Press, 2015.

\bibitem[Wang et~al.(2021)Wang, Huang, Song, Huang, and Huang]{YulinWang2021}
Yulin Wang, Rui Huang, Shiji Song, Zeyi Huang, and Gao Huang.
\newblock Not all images are worth 16x16 words: Dynamic vision transformers with adaptive sequence length.
\newblock \emph{CoRR}, abs/2105.15075, 2021.

\bibitem[Wightman(2019)]{rwightman2019}
Ross Wightman.
\newblock Pytorch image models.
\newblock \url{https://github.com/rwightman/pytorch-image-models}, 2019.
\newblock Accessed: 2025-07-26.

\bibitem[Xu et~al.(2020)Xu, Li, Zhuang, Liu, Cao, Liang, and Tan]{XuLZLCLT20}
Shoukai Xu, Haokun Li, Bohan Zhuang, Jing Liu, Jiezhang Cao, Chuangrun Liang, and Mingkui Tan.
\newblock Generative low-bitwidth data free quantization.
\newblock In Andrea Vedaldi, Horst Bischof, Thomas Brox, and Jan{-}Michael Frahm (eds.), \emph{Computer Vision - {ECCV} 2020 - 16th European Conference, Glasgow, UK, August 23-28, 2020, Proceedings, Part {XII}}, volume 12357 of \emph{Lecture Notes in Computer Science}, pp.\  1--17. Springer, 2020.

\bibitem[Yang et~al.(2019)Yang, Chang, and Liang]{ZiqiYang2019}
Ziqi Yang, Ee{-}Chien Chang, and Zhenkai Liang.
\newblock Adversarial neural network inversion via auxiliary knowledge alignment.
\newblock \emph{CoRR}, abs/1902.08552, 2019.

\bibitem[Yin et~al.(2020)Yin, Molchanov, {\'{A}}lvarez, Li, Mallya, Hoiem, Jha, and Kautz]{YinMALMHJK20}
Hongxu Yin, Pavlo Molchanov, Jos{\'{e}}~M. {\'{A}}lvarez, Zhizhong Li, Arun Mallya, Derek Hoiem, Niraj~K. Jha, and Jan Kautz.
\newblock Dreaming to distill: Data-free knowledge transfer via deepinversion.
\newblock In \emph{2020 {IEEE/CVF} Conference on Computer Vision and Pattern Recognition, {CVPR} 2020, Seattle, WA, USA, June 13-19, 2020}, pp.\  8712--8721. Computer Vision Foundation / {IEEE}, 2020.

\bibitem[Zhang et~al.(2021)Zhang, Qin, Ding, Gong, Yan, Tao, Li, Yu, and Liu]{ZhangQDGYTLYL21}
Xiangguo Zhang, Haotong Qin, Yifu Ding, Ruihao Gong, Qinghua Yan, Renshuai Tao, Yuhang Li, Fengwei Yu, and Xianglong Liu.
\newblock Diversifying sample generation for accurate data-free quantization.
\newblock In \emph{{IEEE} Conference on Computer Vision and Pattern Recognition, {CVPR} 2021, virtual, June 19-25, 2021}, pp.\  15658--15667. Computer Vision Foundation / {IEEE}, 2021.

\bibitem[Zhang et~al.(2023)Zhang, Zhao, Meng, Zhang, and Chen]{ZilongZhang2023}
Zilong Zhang, Zhibin Zhao, Deyu Meng, Xingwu Zhang, and Xuefeng Chen.
\newblock {CA2:} class-agnostic adaptive feature adaptation for one-class classification.
\newblock \emph{CoRR}, abs/2309.01483, 2023.

\bibitem[Zhong et~al.(2022)Zhong, Lin, Nan, Liu, Zhang, Tian, and Ji]{ZhongLNL00J22}
Yunshan Zhong, Mingbao Lin, Gongrui Nan, Jianzhuang Liu, Baochang Zhang, Yonghong Tian, and Rongrong Ji.
\newblock Intraq: Learning synthetic images with intra-class heterogeneity for zero-shot network quantization.
\newblock In \emph{{IEEE/CVF} Conference on Computer Vision and Pattern Recognition, {CVPR} 2022, New Orleans, LA, USA, June 18-24, 2022}, pp.\  12329--12338. {IEEE}, 2022.

\end{thebibliography}
\bibliographystyle{iclr2026_conference}

\appendix

\appendix
\clearpage

\pagestyle{empty}

\setcounter{table}{0}
\setcounter{figure}{0}

\renewcommand{\thetable}{A\arabic{table}}
\renewcommand{\thefigure}{A\arabic{figure}}
\renewcommand{\thesection}{\Alph{section}}

\section*{Technical Appendices}
\section*{Patch Rebirth: Toward Fast and Transferable Model Inversion of Vision Transformers}


In this Appendix, we provide additional materials supporting our main paper. We begin by reviewing additional related work and offering a detailed explanation of the inversion loss introduced in Eq.~(\ref{eqn:inv_loss}). Next, we provide the pseudocode for our proposed patch rebirth inversion algorithm. Further experimental details, including experimental setups and hyperparameters, are elaborated. We present supplementary experimental results, encompassing analyses of image properties at each detachment point, t-SNE visualizations of inverted features, and extended quantitative evaluations. The proof for Theorem~\ref{Theorem:1} is presented. Finally, we describe visualization methodologies, and provide additional visualizations demonstrating the re-birth effect.

\section{More Related Works} \label{app:sec:relatedwork}

\smalltitle{Class-Specific and Class-Agnostic Features.}
Recent works in representation learning have focused on disentangling class-agnostic and class-specific features to improve generalization and transferability across various tasks~\citep{AlexandrosStergiou2020, ZilongZhang2023}. Class-specific features are typically aligned with discriminative information tightly coupled with a particular class, while class-agnostic features capture generic patterns such as texture, shape, and structure that are useful across classes. A class-specific attention mechanism was proposed to highlight discriminative temporal features and improve time-series classification performance across multiple classes~\citep{YifanHao2022}. To mitigate biased classification in few-shot segmentation, an adaptive prototype alignment method was introduced, combining class-specific and class-agnostic prototypes to enhance feature comparisons and generalization~\citep{ChenGLXWL23}. While these works have explored class-aware and class-invariant representations in supervised settings, our work investigates how such distinctions naturally emerge in the process of model inversion. Unlike prior work that explicitly disentangles these two types via architectural designs or supervision, we show that different inversion sequences can implicitly control the balance of these features, which is especially important in data-free scenarios.

\section{Detailed Explanation of Inversion Loss} \label{app:sec:inv_loss_detail}

In this section, we provide a detailed description of each loss component used for model inversion in Eq.~(\ref{eqn:inv_loss}), specifically the classification loss $\mathcal{L}_{\text{inv}}$ and the regularization loss $\mathcal{L}_{\text{reg}}$.

\smalltitle{Classification Loss.}
Following prior inversion methods~\citep{HuW0WLYT24, YinMALMHJK20}, we adopt the standard cross-entropy loss as our classification loss, defined as:
\begin{align}
\mathcal{L}_{\text{cls}}(f(\hat{\mathbf{X}}), y) = -\sum_{i=1}^{c} \mathbb{I}[i = y]\cdot \log\left(\frac{\exp(f(\hat{\mathbf{X}}))}{\sum_{j=1}^{c}\exp(f_j(\hat{\mathbf{X}}))}\right), \nonumber
\end{align}
where $f(\hat{\mathbf{X}})\in \mathbb{R}^{c}$ represents the output logits from the pretrained classifier $f$, $c$ is the total number of classes, and $y$ denotes the target class for the inverted image. This loss encourages the synthesized image $\hat{\mathbf{X}}$ to be confidently classified as the target class $y$.

\smalltitle{Regularization Loss.}
For visual plausibility, we adopt total variation (TV) regularization, commonly utilized to encourage smoothness~\citep{HatamizadehYR0K22}. TV regularization is formally expressed as:
\begin{align}
\mathcal{L}_{\text{reg}}(\hat{\mathbf{X}}) = \sum_{i=2}^{H} \sum_{j=2}^{W} \Bigg(
\left\| \hat{\mathbf{X}}_{i,j} - \hat{\mathbf{X}}_{i-1,j} \right\|_2 
+ \left\| \hat{\mathbf{X}}_{i,j} - \hat{\mathbf{X}}_{i,j-1} \right\|_2 \nonumber \\
+ \left\| \hat{\mathbf{X}}_{i,j} - \hat{\mathbf{X}}_{i-1,j-1} \right\|_2 \Bigg)
+ \sum_{i=2}^{H} \sum_{j=1}^{W-1} \left\| \hat{\mathbf{X}}_{i,j} - \hat{\mathbf{X}}_{i-1,j+1} \right\|_2 , \nonumber
\end{align}
where $\hat{\mathbf{X}}_{i,j}$ denotes the pixel value at spatial coordinates $(i,j)$ of the inverted image. By penalizing large intensity changes between adjacent pixels, this loss term significantly improves the naturalness and continuity of the generated images.

Combining these two terms with a balancing hyperparameter $\lambda$, we obtain the overall inversion loss used throughout our experiments:
\begin{align}
\mathcal{L}_{\text{inv}}(\hat{\mathbf{X}}, y; f) = \mathcal{L}_{\text{cls}}(f(\hat{\mathbf{X}}), y) + \lambda \mathcal{L}_{\text{reg}}(\hat{\mathbf{X}}). \nonumber
\end{align}

As mentioned in the main paper, we set $\lambda=10^{-4}$, following standard practice~\citep{YinMALMHJK20}.

\section{Pseudocode of Patch Rebirth Inversion} \label{app:sec:pseudocode}
To clearly present the implementation details of our approach, we provide the pseudocode of patch rebirth inversion in Algorithm \ref{alg:PRI}. The algorithm describes how PRI progressively stores important patches as sparse images at each detachment point while continuing to invert the remaining unimportant patches. The formulation follows the notation introduced in the preliminaries section and omits auxiliary regularization or architectural specifics for clarity.

\begin{algorithm}[ht]
\algrenewcommand\algorithmicrequire{\textbf{Input:}}
\algrenewcommand\algorithmicensure{\textbf{Output:}}
\caption{Patch Rebirth Inversion (PRI)}
\label{alg:PRI}
\begin{algorithmic}[1]
\Require Pretrained model $f$, total iterations $T$, division factor $v$, random noise $\mathbf{\hat{X}}_{t_0}$, target label $y$
\Ensure Set of sparse synthetic images $\mathcal{S}$

\State Initialize: $\mathcal{S} \leftarrow \emptyset$,~~$\hat{\mathbf{X}} \leftarrow \hat{\mathbf{X}}_{t_0}$,~~$K \leftarrow \lfloor \frac{N}{v} \rfloor$,~~active patch set $\mathcal{P} \leftarrow \{\mathbf{x}_1, \dots, \mathbf{x}_N\}$

\For{$t \leftarrow 1$ \textbf{to} $T$}
    \If{$t = k \cdot \lfloor \frac{T}{v} \rfloor$ \textbf{for some} $k \in \{1, \dots, v-1\}$}
        \State Compute patch-wise importance over $\mathcal{P}$ (see Preliminaries section)
        \State $\mathcal{P}_{\text{imp}} \leftarrow$ top-$K$ most important patches in $\mathcal{P}$
        \State $\hat{\mathbf{X}}_k \leftarrow$ synthetic image composed of patches in $\mathcal{P}_{\text{imp}}$
        \State $\mathcal{S} \leftarrow \mathcal{S} \cup \{\hat{\mathbf{X}}_{t_k}\}$ \Comment{Store current sparse image}
        \State $\mathcal{P} \leftarrow \mathcal{P} \setminus \mathcal{P}_{\text{imp}}$ \Comment{Detach important patches}
        \State $\hat{\mathbf{X}} \leftarrow$ image composed of remaining patches in $\mathcal{P}$
    \EndIf
    \State Compute $\mathcal{L}_{\text{inv}}(\hat{\mathbf{X}}, y; f)$ \Comment{Eq.~(\ref{eqn:inv_loss})}
    \State Update $\hat{\mathbf{X}} \leftarrow \hat{\mathbf{X}} - \eta \cdot \nabla_{\hat{\mathbf{X}}} \mathcal{L}_{\text{inv}}$
\EndFor

\State $\hat{\mathbf{X}}_{t_v} \leftarrow$ final synthetic image with remaining patches in $\mathcal{P}$
\State $\mathcal{S} \leftarrow \mathcal{S} \cup \{\hat{\mathbf{X}}_{t_v}\}$ \Comment{Store final sparse image}
\State \Return $\mathcal{S}$
\end{algorithmic}
\end{algorithm}

Given a pretrained classifier $f$, an initial noise image $\hat{\mathbf{X}}_{t_0}$, and a target label $y$, PRI performs inversion over $T$ iterations. The core idea is to progressively detach the most important patches, based on attention-derived importance scores, at regularly spaced detachment points determined by the division factor $v$. At each detachment point $t_k$ (Line 3), the top-$K$ patches (with $K = \lfloor \frac{N}{v} \rfloor$) are extracted to construct an intermediate sparse image $\hat{\mathbf{X}}_{t_k}$ (Lines 4-6), which is stored in the output set $\mathcal{S}$ (Line 7). The remaining patches continue to be inverted in subsequent iterations, allowing previously unimportant regions to accumulate more class-specific information (Lines 11-12). This process repeats until the final step $t = T$, where the last remaining patches are stored as the final sparse image $\hat{\mathbf{X}}_{t_v}$ (Lines 14-15).


\section{Full Experimental Details} \label{app:sec:exp_details}

To further evaluate inversion performance across various sparsity levels, we additionally consider the sparsity levels of 50\% and 86\%.  In PRI, these correspond to division factors $v=2$ and $v=7$, respectively. For fair comparisons, we adjust the pruning ratios in SMI to match the target sparsity, setting them to (0.16, 0.16, 0.16, 0.16) for 50\%, and (0.39, 0.39, 0.39, 0.39) for 86\%, while maintaining the same pruning iteration schedule fixed. For the downstream tasks in Tables~\ref{tab:main_combined_results}(a) and \ref{tab:main_combined_results}(b), we employ Kullback-Leibler divergence, a standard objective function in knowledge transfer literature.

In terms of implementation details, to synthesize 128 images with PRI, we invert 64 images for $v=2$, 32 images for $v=4$, and 18 images for $v=7$. Since 128 is not exactly divided by 7, we generate 126 images per batch in the case of $v=7$. While this yields a slight computational advantage in Table~\ref{tab:main_efficiency_results}, it results in a marginal disadvantage in Tables~\ref{tab:main_combined_results}(a) and~\ref{tab:main_combined_results}(b). We consider these small differences negligible for the purpose of comparison.

In Table~\ref{app:tab:main_transfer_results}, we use 128 images per batch and conduct 120 batches for CIFAR-10 and 1,000 batches for CIFAR-100 and Tiny-ImageNet to ensure sufficient training convergence. Although alternative strategies exist (e.g., training over multiple epochs), we evaluate the quality of inversion by fine-tuning the student model using each inverted image exactly once, isolating the impact of the inversion quality itself on downstream performance. 

For training the student model in data-free knowledge transfer experiments, we use SGD optimizer with a learning rate of 0.1, weight decay of 1e-4, and momentum of 0.9. For data-free quantization experiments, we also use SGD with a learning rate of 0.01, keeping the same weight decay and momentum, and use batch size of 128 as same as knowledge transfer experiments. There is no learning rate scheduling applied in both settings. 

Following prior work, sparse model inversion (SMI)~\citep{HuW0WLYT24}, we apply standard data augmentations such as random horizontal flipping and normalization when processing inverted data. For test data, only resizing and normalization are used. In Figures \ref{fig:methods_inversion_dist.} and \ref{app:fig:t_sne}, we use only remaining patches of SMI, not discarded. All experiments, including Table \ref{app:tab:main_efficiency_results}, are conducted with a fixed seed, 42 for reproducibility.

To ensure reproducibility and facilitate consistent comparison, we leverage publicly accessible pretrained vision models from widely used libraries such as \texttt{timm}. Specifically, we select DeiT/16-Tiny, Small, Base, and a CLIP-based ViT model\footnote{\url{https://huggingface.co/openai/clip-vit-base-patch32}} for visualization tasks. ViT-Base/32 model is employed solely for visualization purposes.








\section{Additional Experimental Results} \label{app:sec:more_exps}

\smalltitle{Extended Results of Experiments.}
Tables~\ref{tab:main_efficiency_results} and~\ref{tab:main_combined_results} in the main text summarize our core findings regarding inversion efficiency, data-free quantization, and data-free knowledge distillation. Here, we present comprehensive results across DeiT-Tiny, Small, and Base models with additional sparsity levels, extending the analyses in Tables~\ref{app:tab:main_efficiency_results}, \ref{app:tab:main_quan_results} and~\ref{app:tab:main_transfer_results}.

\smalltitle{Additional Results on Inversion Efficiency.} In Table~\ref{app:tab:main_efficiency_results}, we examine inversion efficiency across various sparsity levels, highlighting PRI's consistent superiority over SMI and DMI. PRI demonstrates significantly improved throughput, reduced FLOPs, and lower GPU memory usage, particularly at higher sparsity (86\%). Specifically, PRI achieves up to 129\% throughput improvement, 49\% FLOPs reduction, and 66\% GPU memory savings compared to SMI. These empirical results strongly align with our theoretical predictions, which anticipated greater efficiency gains with larger division factors $v$.

\begin{table*}[h]
\centering
\caption{Inversion efficiency on DeiT-Tiny, DeiT-Small, and DeiT-Base across various sparsity levels. Throughput is the inversion speed, measuring to compute inversion iterations per second. The changes in \textcolor{red}{red} and \textcolor{blue}{blue} refer to the comparison with each sparsity level of SMI.}
\begin{adjustbox}{width=0.9\textwidth}
\begin{tabular}{c|c|c|ccc|ccc}
\toprule
\multicolumn{9}{c}{Model: \textbf{DeiT-Tiny}} \\
\midrule
\multicolumn{2}{c|}{Method} & DMI & \multicolumn{3}{c|}{SMI} & \multicolumn{3}{c}{PRI} \\
\multicolumn{2}{c|}{Sparsity} & 0\% & 50\% & 76\% & 86\% & 50\% $(v=2)$ & 75\% $(v=4)$ & 86\% $(v=7)$ \\
\cmidrule(lr){1-9}
\multicolumn{2}{c|}{Throughput (its/s) \textcolor{red}{↑}}  & 6.49 & 10.94 & 15.33 & 18.04 & 15.27 {\textcolor{red}{(+39.6\%)}} & 30.71 {\textcolor{red}{(+100.3\%)}} & \textbf{40.92} {\textcolor{red}{(+126.8\%)}} \\
\multicolumn{2}{c|}{FLOPs (G) \textcolor{blue}{↓}} & 949.11 & 454.85 & 229.89 & 143.20 & 348.88 {\textcolor{blue}{(-23.3\%)}} & 144.09 {\textcolor{blue}{(-37.3\%)}} & \textbf{73.86} {\textcolor{blue}{(-48.4\%)}} \\
\multicolumn{2}{c|}{GPU Memory (GB) \textcolor{blue}{↓}}  & 6.00 & 3.32 & 2.21 & 1.79 & 2.38 {\textcolor{blue}{(-28.3\%)}} & 1.07 {\textcolor{blue}{(-51.6\%)}} & \textbf{0.60} {\textcolor{blue}{(-66.5\%)}} \\
\midrule
\multicolumn{9}{c}{Model: \textbf{DeiT-Small}} \\
\midrule
\multicolumn{2}{c|}{Method} & DMI & \multicolumn{3}{c|}{SMI} & \multicolumn{3}{c}{PRI} \\
\multicolumn{2}{c|}{Sparsity} & 0\% & 50\% & 76\% & 86\% & 50\% $(v=2)$ & 75\% $(v=4)$ & 86\% $(v=7)$ \\
\cmidrule(lr){1-9}
\multicolumn{2}{c|}{Throughput (its/s) \textcolor{red}{↑}}  & 3.05 & 5.75 & 9.10 & 12.14 & 7.72 {\textcolor{red}{(+34.2\%)}} & 16.87 {\textcolor{red}{(+85.4\%)}} & \textbf{27.82} {\textcolor{red}{(+129.2\%)}} \\
\multicolumn{2}{c|}{FLOPs (G) \textcolor{blue}{↓}} & 3524.20 & 1729.66 & 881.33 & 545.88 & 1300.98 {\textcolor{blue}{(-24.8\%)}} & 539.85 {\textcolor{blue}{(-38.7\%)}} & \textbf{277.23} {\textcolor{blue}{(-49.2\%)}} \\
\multicolumn{2}{c|}{GPU Memory (GB) \textcolor{blue}{↓}}  & 11.66 & 5.67 & 3.43 & 2.62 & 4.47 {\textcolor{blue}{(-21.2\%)}} & 2.01 {\textcolor{blue}{(-41.4\%)}} & \textbf{1.17} {\textcolor{blue}{(-55.3\%)}} \\
\midrule
\multicolumn{9}{c}{Model: \textbf{DeiT-Base}} \\
\midrule
\multicolumn{2}{c|}{Method} & DMI & \multicolumn{3}{c|}{SMI} & \multicolumn{3}{c}{PRI} \\
\multicolumn{2}{c|}{Sparsity} & 0\% & 50\% & 76\% & 86\% & 50\% $(v=2)$ & 75\% $(v=4)$ & 86\% $(v=7)$ \\
\cmidrule(lr){1-9}
\multicolumn{2}{c|}{Throughput (its/s) \textcolor{red}{↑}}  & 1.10 & 2.20 & 3.92 & 5.58 & 2.88 {\textcolor{red}{(+30.9\%)}} & 6.40 {\textcolor{red}{(+63.3\%)}} & \textbf{11.81} {\textcolor{red}{(+111.6\%)}} \\
\multicolumn{2}{c|}{FLOPs (T) \textcolor{blue}{↓}} & 13.43 & 6.74 & 3.45 & 2.13 & 5.02 {\textcolor{blue}{(-25.5\%)}} & 2.09 {\textcolor{blue}{(-39.4\%)}} & \textbf{1.07} {\textcolor{blue}{(-49.8\%)}} \\
\multicolumn{2}{c|}{GPU Memory (GB) \textcolor{blue}{↓}}  & 23.42 & 10.77 & 6.26 & 4.61 & 8.99 {\textcolor{blue}{(-16.5\%)}} & 4.28 {\textcolor{blue}{(-31.6\%)}} & \textbf{2.68} {\textcolor{blue}{(-41.9\%)}} \\
\bottomrule

\end{tabular}
\end{adjustbox}

\label{app:tab:main_efficiency_results}
\end{table*}

\smalltitle{Additional Results on Data-Free Quantization.} Table~\ref{app:tab:main_quan_results} reports data-free quantization results on ImageNet-1k for three DeiT backbones under two bit-width settings (W4/A8 and W8/A8). In DeiT‑Tiny, DMI outperforms both SMI and PRI, whereas in DeiT‑Base, PRI surpasses DMI despite being 10$\times$ faster. PRI consistently achieves superior or competitive accuracy compared to SMI across various sparsity levels.

\begin{table*}[h]
\centering
\caption{Data-free quantization results on ImageNet-1k using different inversion methods across various sparsity levels. W4/A8 refers to the bit precision for weight and activation quantization, respectively. The changes in \textcolor{red}{red} and \textcolor{blue}{blue} refer to the comparison with each sparsity level of SMI. Teacher model accuracies for DeiT-Tiny, Small, and Base are 71.5\%, 79.4\%, and 81.7\%, respectively.}
\begin{adjustbox}{width=0.82\textwidth}
\begin{tabular}{c|c|c|ccc|ccc}
\toprule
\multicolumn{9}{c}{Dataset: \textbf{ImageNet-1k}} \\
\midrule
\multicolumn{2}{c|}{Method} & DMI & \multicolumn{3}{c|}{SMI} & \multicolumn{3}{c}{PRI} \\
\multicolumn{2}{c|}{Sparsity} & 0\% & 50\% & 76\% & 86\% & 50\% $(v=2)$ & 75\% $(v=4)$ & 86\% $(v=7)$ \\
\cmidrule(lr){1-9}
\multirow{2}{*}{DeiT-Tiny}
& W4/A8 & \textbf{68.03} & 67.39 & 66.65 & 66.55 & 67.64 {\textcolor{red}{(+0.25)}}  & 66.79 {\textcolor{red}{(+0.14)}}  & 66.44 {\textcolor{blue}{(-0.11)}} \\
& W8/A8 & \textbf{69.45} & 68.94 & 68.57 & 68.38 & 69.20 {\textcolor{red}{(+0.26)}}  & 68.77 {\textcolor{red}{(+0.20)}}  & 68.40 {\textcolor{red}{(+0.02)}} \\
\midrule
\multirow{2}{*}{DeiT-Small}
& W4/A8 & 77.01 & 76.80 & 76.19 & 75.37  & \textbf{77.06} {\textcolor{red}{(+0.26)}} & 76.40 {\textcolor{red}{(+0.21)}} & 76.03 {\textcolor{red}{(+0.66)}} \\
& W8/A8 & \textbf{78.15} & 78.11 & 77.64 & 77.01 & 77.90 {\textcolor{blue}{(-0.21)}} & 77.72 {\textcolor{red}{(+0.08)}} & 77.42 {\textcolor{red}{(+0.41)}} \\
\midrule
\multirow{2}{*}{DeiT-Base}
& W4/A8 & 80.19 & 80.29 & 79.77 & 79.20  & \textbf{80.36} {\textcolor{red}{(+0.07)}} & 80.13 {\textcolor{red}{(+0.46)}} & 80.07 {\textcolor{red}{(+0.87)}} \\
& W8/A8 & 80.73 & 80.77 & 80.33 & 79.85 & \textbf{80.78} {\textcolor{red}{(+0.01)}} & 80.70 {\textcolor{red}{(+0.37)}} & 80.57 {\textcolor{red}{(+0.72)}} \\
\bottomrule
\end{tabular}
\end{adjustbox}

\label{app:tab:main_quan_results}
\end{table*}

\smalltitle{Additional Results on Data-Free Knowledge Distillation.} Table~\ref{app:tab:main_transfer_results} provides an extensive validation of PRI's effectiveness in data-free knowledge distillation across CIFAR-10, CIFAR-100, and Tiny-ImageNet datasets. PRI consistently outperforms SMI across all sparsity settings, achieving comparable or superior accuracy to DMI at moderate sparsity levels (50\%). In particular, on the CIFAR-10 dataset, PRI with 50\% sparsity achieves higher knowledge distillation accuracy and faster inversion speed compared to DMI. Remarkably, for DeiT-Tiny, the accuracy difference between DMI and PRI exceeds 4\%. Notably, as sparsity increases (to 75\% and 86\%), PRI significantly widens its performance gap over SMI, further demonstrating its robustness and efficacy in generating class-agnostic features that are essential for effective knowledge transfer in data-free settings.


\begin{table*}[h]
\centering
\caption{Data-free knowledge distillation results on CIFAR-10, CIFAR-100, and Tiny-ImageNet different inversion methods across various sparsity levels. The changes in \textcolor{red}{red} refer to the comparison with each sparsity level of SMI.}
\begin{adjustbox}{width=0.85\textwidth}
\begin{tabular}{c|c|c|ccc|ccc}
\toprule
\multicolumn{9}{c}{Dataset: \textbf{CIFAR-10} (Teacher: DeiT-Base, Acc: 95.4)} \\
\midrule
\multicolumn{2}{c|}{Method} & DMI & \multicolumn{3}{c|}{SMI} & \multicolumn{3}{c}{PRI} \\
\multicolumn{2}{c|}{Sparsity} & 0\% & 50\% & 76\% & 86\% & 50\% $(v=2)$ & 75\% $(v=4)$ & 86\% $(v=7)$ \\
\cmidrule(lr){1-9}
\multirow{3}{*}{\begin{tabular}[c]{@{}c@{}}Student\\ Accuracy\end{tabular}}
& DeiT-Tiny  & 76.67 & 69.94 & 55.22 & 26.69 & \textbf{81.21} {\textcolor{red}{(+11.27)}}  & 75.70 {\textcolor{red}{(+20.48)}}  & 57.96 {\textcolor{red}{(+31.27)}} \\
& DeiT-Small & 86.72 & 87.44 & 82.74 & 58.91 & \textbf{88.60} {\textcolor{red}{(+1.16)}}  & 87.30 {\textcolor{red}{(+4.56)}}  & 81.21 {\textcolor{red}{(+22.3)}} \\
& DeiT-Base  & 95.00 & 95.01 & 94.49 & 88.38 & \textbf{95.13} {\textcolor{red}{(+0.12)}}  & 95.08 {\textcolor{red}{(+0.59)}}  & 94.56 {\textcolor{red}{(+6.18)}} \\
\midrule
\multicolumn{9}{c}{Dataset: \textbf{CIFAR-100} (Teacher: DeiT-Base, Acc: 80.6)} \\
\midrule
\multicolumn{2}{c|}{Method} & DMI & \multicolumn{3}{c|}{SMI} & \multicolumn{3}{c}{PRI} \\
\multicolumn{2}{c|}{Sparsity} & 0\% & 50\% & 76\% & 86\% & 50\% $(v=2)$ & 75\% $(v=4)$ & 86\% $(v=7)$ \\
\cmidrule(lr){1-9}
\multirow{3}{*}{\begin{tabular}[c]{@{}c@{}}Student\\ Accuracy\end{tabular}}
& DeiT-Tiny  & \textbf{54.90} & 48.34 & 24.31 & 3.55  & 54.57 {\textcolor{red}{(+6.23)}} & 43.32 {\textcolor{red}{(+19.01)}} & 21.27 {\textcolor{red}{(+17.72)}} \\
& DeiT-Small & 67.62 & 62.55 & 45.05 & 11.25 & \textbf{67.70} {\textcolor{red}{(+5.15)}} & 62.93 {\textcolor{red}{(+17.87)}} & 45.59 {\textcolor{red}{(+34.34)}} \\
& DeiT-Base  & 79.76 & 79.41 & 77.55 & 70.22 & \textbf{79.98} {\textcolor{red}{(+0.57)}} & 79.57 {\textcolor{red}{(+1.98)}} & 78.46 {\textcolor{red}{(+8.24)}} \\
\midrule
\multicolumn{9}{c}{Dataset: \textbf{Tiny-ImageNet} (Teacher: DeiT-Base, Acc: 84.6)} \\
\midrule
\multicolumn{2}{c|}{Method} & DMI & \multicolumn{3}{c|}{SMI} & \multicolumn{3}{c}{PRI} \\
\multicolumn{2}{c|}{Sparsity} & 0\% & 50\% & 76\% & 86\% & 50\% $(v=2)$ & 75\% $(v=4)$ & 86\% $(v=7)$ \\
\cmidrule(lr){1-9}
\multirow{3}{*}{\begin{tabular}[c]{@{}c@{}}Student\\ Accuracy\end{tabular}}
& DeiT-Tiny  & \textbf{53.98}    & 44.96    & 14.01    & 1.97    & 53.55  {\textcolor{red}{(+8.59)}}         & 35.65 {\textcolor{red}{(+24.64)}}         & 12.37   {\textcolor{red}{(+10.4)}}        \\
& DeiT-Small & 73.42    & 67.21    & 42.51    & 7.60    & \textbf{73.52} {\textcolor{red}{(+6.31)}}  & 67.50 {\textcolor{red}{(+24.99)}}         & 46.51   {\textcolor{red}{(+38.91)}}        \\
& DeiT-Base  & \textbf{83.95}    & 83.51    & 80.62    & 71.55    & 83.85  {\textcolor{red}{(+0.34)}}        & 83.68 {\textcolor{red}{(+3.06)}}         & 82.47 {\textcolor{red}{(+10.92)}}     \\
\bottomrule
\end{tabular}
\end{adjustbox}

\label{app:tab:main_transfer_results}
\end{table*}

\smalltitle{Additional Analyses of Progressive Shift from General to Specific.}
We further investigate the role of sparse images generated at different detachment points of PRI by conducting two analyses: one-class distillation in Table~\ref{app:tab:one_class_separate} and data-free quantization in Table~\ref{app:tab:dfq_separate}.

In the one-class distillation setting in Table~\ref{app:tab:one_class_separate}, we use 2.5k sparse images generated at each of the four detachment points, as well as the combined set denoted as ``All" on CIFAR-10. For evaluation, we report the classification accuracy and average KL-divergence loss with respect to the teacher model logits, denoted as KL10 (all classes including airplane) and KL9 (excluding airplane). While training on 2.5k sparse images from every detachment point leads to strong accuracy, we observe that images from the 2nd detachment point achieve the best (i.e., minimum) KL-divergence scores, especially on KL9, suggesting stronger class-agnostic properties. Similarly, the 3rd point images also exhibit favorable generalization, whereas the 4th points show diminished performance but still surpass DMI and SMI in Figure~\ref{fig:oneclass_confusion_matrix}.

In the 8-bit data-free quantization experiments on ImageNet-1k in Table~\ref{app:tab:dfq_separate}, we use 2.5k sparse images from each detachment point and the combined set again. We find that images from the 1st detachment point most effectively recover accuracy in quantized models, consistent with their rich class-agnostic feature content. Using all detachment points together also yields strong performance. As in the one-class distillation setting, images from the 4th detachment point, which contain the most class-specific features among all detachment points, yield the lowest quantization recovery accuracy.

These results highlight that earlier detachment point images, encoding more class-agnostic knowledge, play a critical role in enhancing both generalization and robustness in data-free learning.

\begin{table}[h]
  \centering

  \begin{minipage}[t]{0.49\textwidth}
    \centering
    \caption{One-class distillation results on CIFAR-10 using sparse images inverted at each detachment point of PRI. ``All" denotes inverted images using every detachment point.}
    \begin{adjustbox}{width=0.95\textwidth}
      \begin{tabular}{c|c|c|c|c|c}
        \toprule
        \multicolumn{6}{c}{Dataset: \textbf{CIFAR-10} (only ``airplane" class)} \\
        \midrule
        \small Detachment Point & All & 1st & 2nd & 3rd & 4th \\
        \cmidrule(lr){1-6}
        Accuracy (\%) \textcolor{red}{↑}  & \textbf{93.84} & 93.28 & 93.76 & 93.75 & 90.35 \\
        KL10 ($\times$1e-6) \textcolor{blue}{↓} & 8097 & 8242 & \textbf{7765} & 7976 & 13602 \\
        KL9 ($\times$1e-6) \textcolor{blue}{↓}  & 9806 & 9993 & \textbf{9411} & 9668 & 16488 \\
        \bottomrule
      \end{tabular}
    \end{adjustbox}

    \label{app:tab:one_class_separate}
  \end{minipage}%
  \hfill
  \begin{minipage}[t]{0.49\textwidth}
    \centering
    \caption{Data-free quantization performance on ImageNet-1k using sparse images inverted at each detachment point. ``All" denotes inverted images using every detachment point.}
    \begin{adjustbox}{width=0.93\textwidth}
      \begin{tabular}{c|c|c|c|c|c}
        \toprule
        \multicolumn{6}{c}{Dataset: \textbf{ImageNet-1k} (W8/A8)} \\
        \midrule
        \small Detachment Point & All & 1st & 2nd & 3rd & 4th \\
        \cmidrule(lr){1-6}
        DeiT-Tiny  & 67.49 & 67.47 & 67.42 & \textbf{67.53} & 67.26 \\
        DeiT-Small & 76.20 & \textbf{76.29} & 76.15 & 76.21 & 75.66 \\
        DeiT-Base  & 79.92 & \textbf{79.94} & 79.86 & 79.85 & 79.58 \\
        
        \bottomrule
      \end{tabular}
    \end{adjustbox}

    \label{app:tab:dfq_separate}
  \end{minipage}

\end{table}

\smalltitle{t-SNE Visualization.} 
In Figure~\ref{app:fig:t_sne}, we present t-SNE visualizations of the feature embeddings from PRI-inverted images at each detachment point, using DeiT-Base on CIFAR-100. PRI-1st through PRI-4th refer to the t-SNE visualizations of sparse images stored at each progressive detachment stage during inversion.

Unlike class-conditioned visualizations, we assign each sample the label predicted by the teacher model, rather than the originally targeted inversion class. This choice reflects that early point images often exhibit low confidence, as shown in Figure~\ref{fig:PRI_dist.}(a), indicating that even the teacher struggles to confidently identify them.

Interestingly, despite their low semantic certainty, PRI-1st samples still contribute meaningfully to knowledge transfer, as demonstrated in our analytic results above. Furthermore, we observe a point-wise difference of class separation: PRI-1st embeddings are relatively entangled, gradually becoming more class-discriminative in PRI-2nd and PRI-3rd, and eventually resemble the tightly clustered patterns seen in DMI and SMI at PRI-4th. This progression supports our interpretation that PRI balances class-specific and class-agnostic features over time.

\begin{figure*}[h]
  \centering
  \begin{minipage}[t]{0.32\textwidth}
    \centering
    \includegraphics[width=0.9\linewidth]{./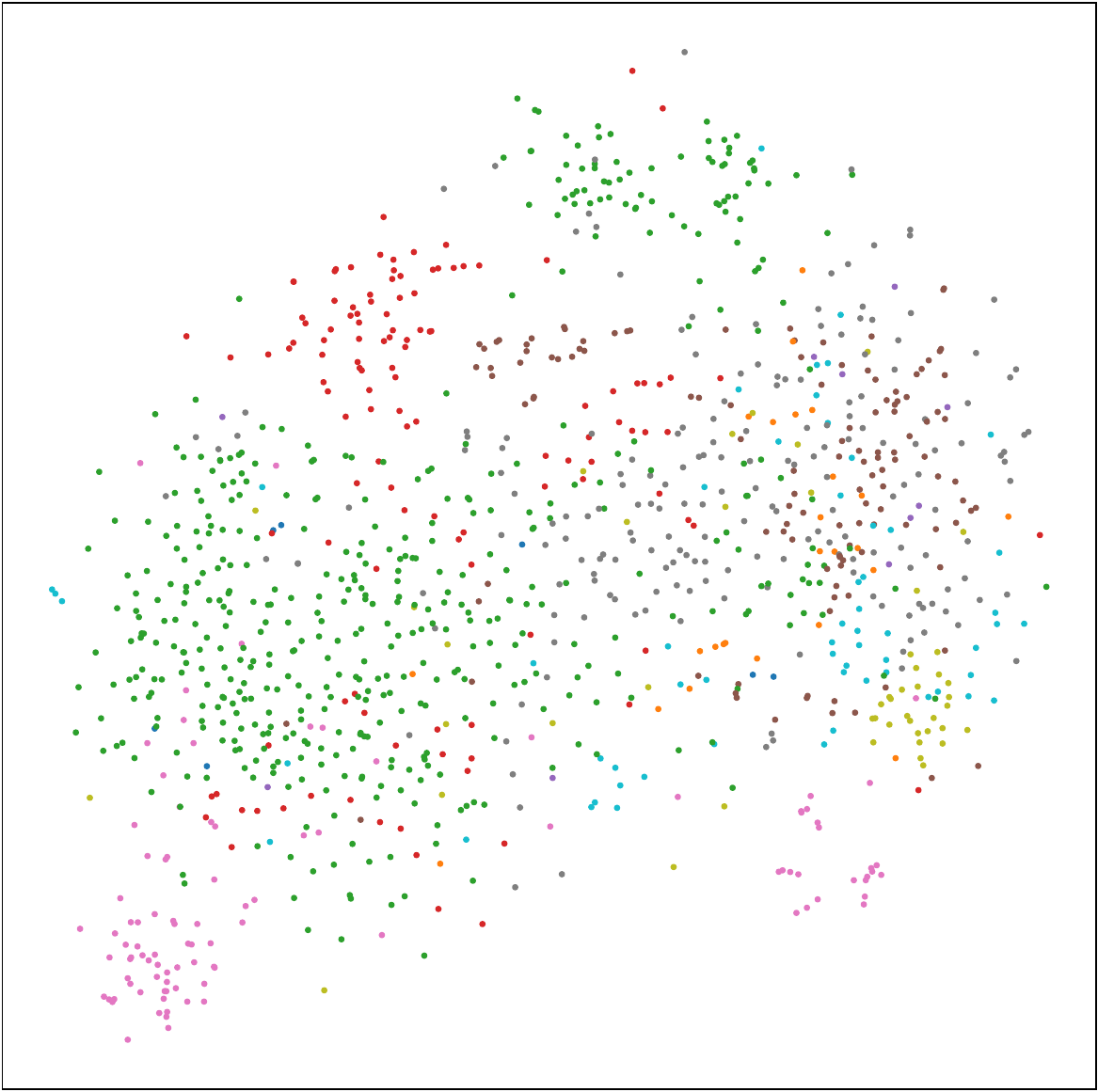}
    \small PRI-1st
  \end{minipage}%
  \hfill
  \begin{minipage}[t]{0.32\textwidth}
    \centering
    \includegraphics[width=0.9\linewidth]{./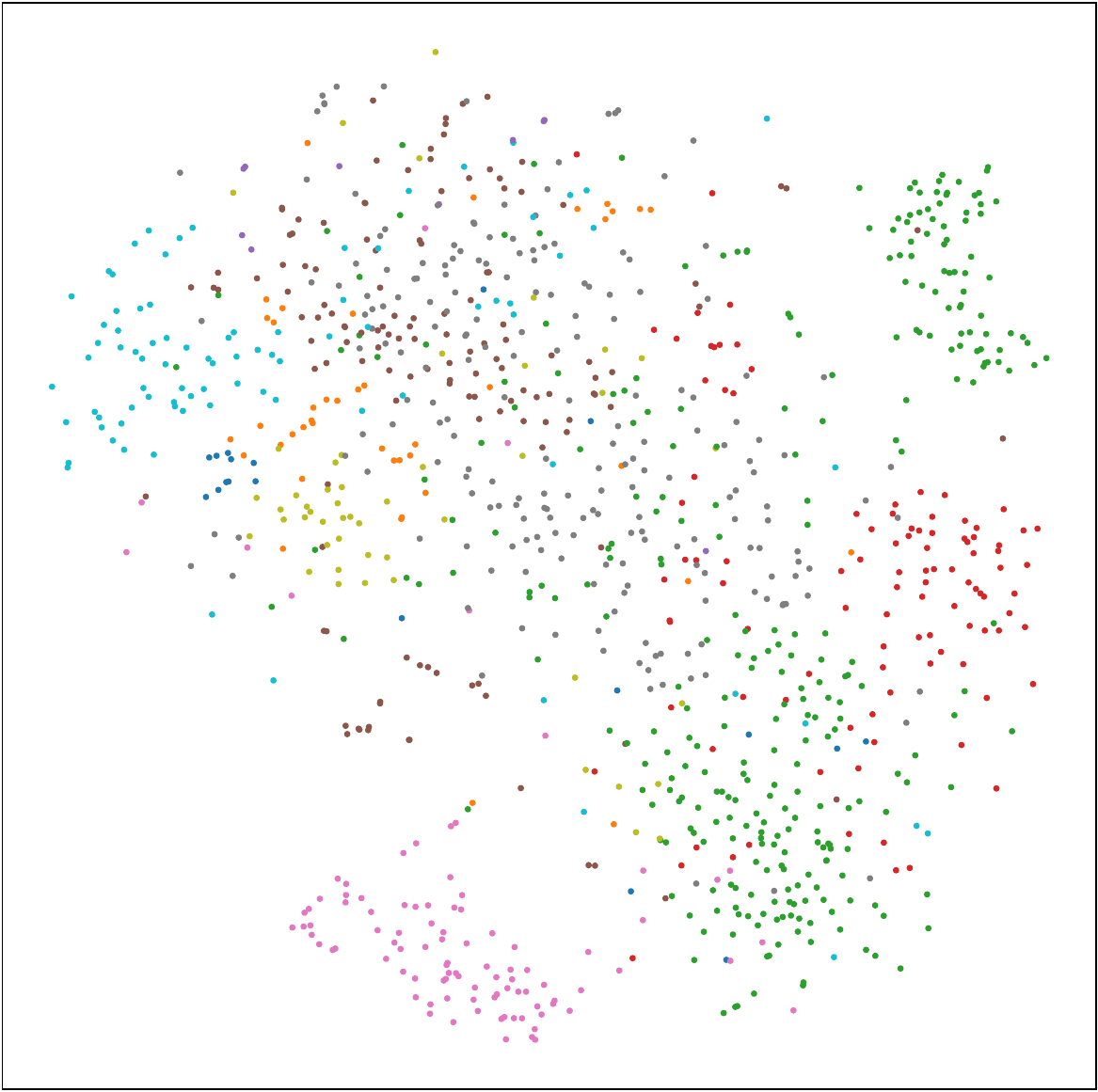}
    \small PRI-2nd
  \end{minipage}%
  \hfill
  \begin{minipage}[t]{0.32\textwidth}
    \centering
    \includegraphics[width=0.9\linewidth]{./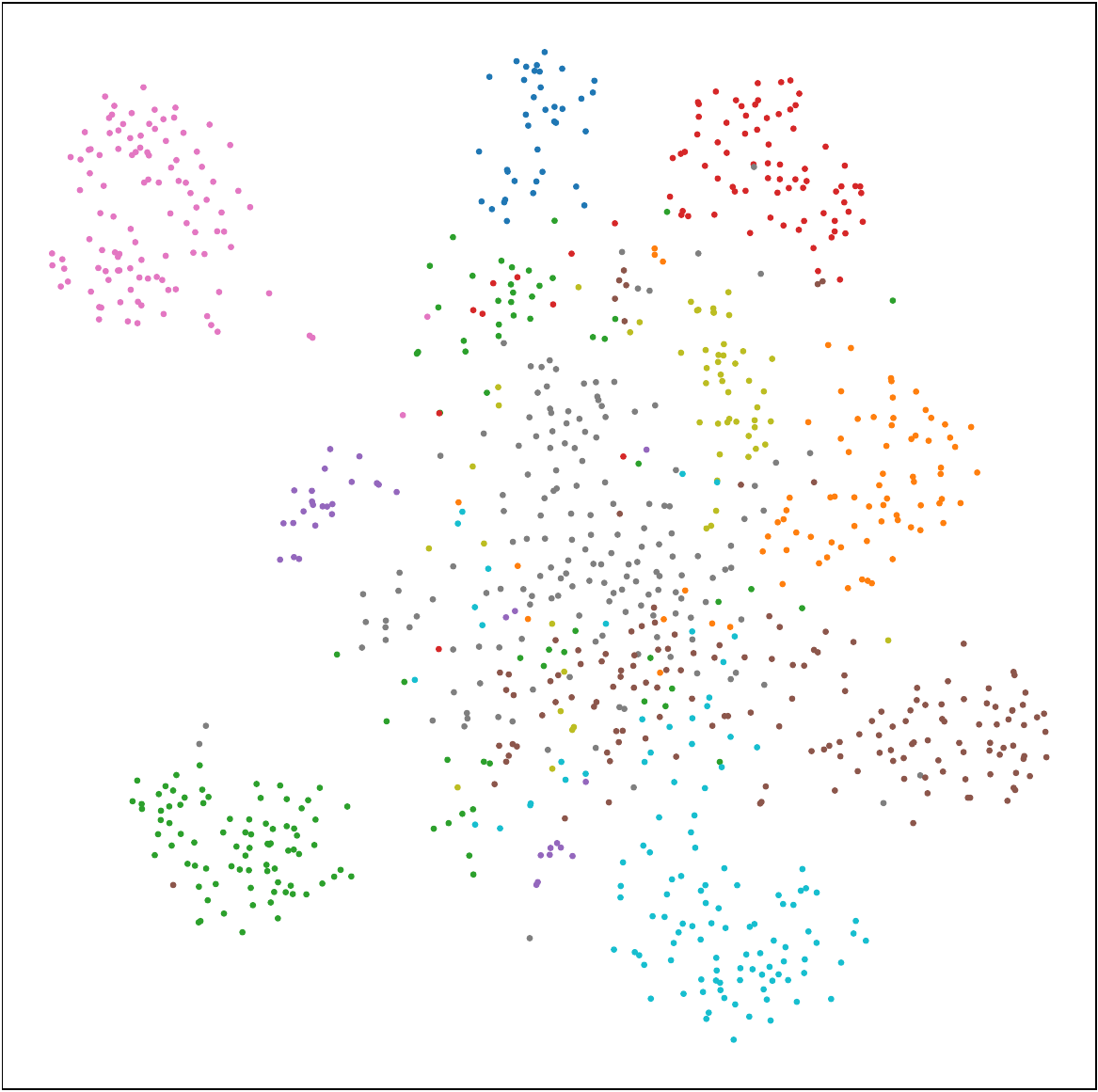}
    \small PRI-3rd
  \end{minipage}
  
  \vspace{1em}  
  
  \begin{minipage}[t]{0.32\textwidth}
    \centering
    \includegraphics[width=0.9\linewidth]{./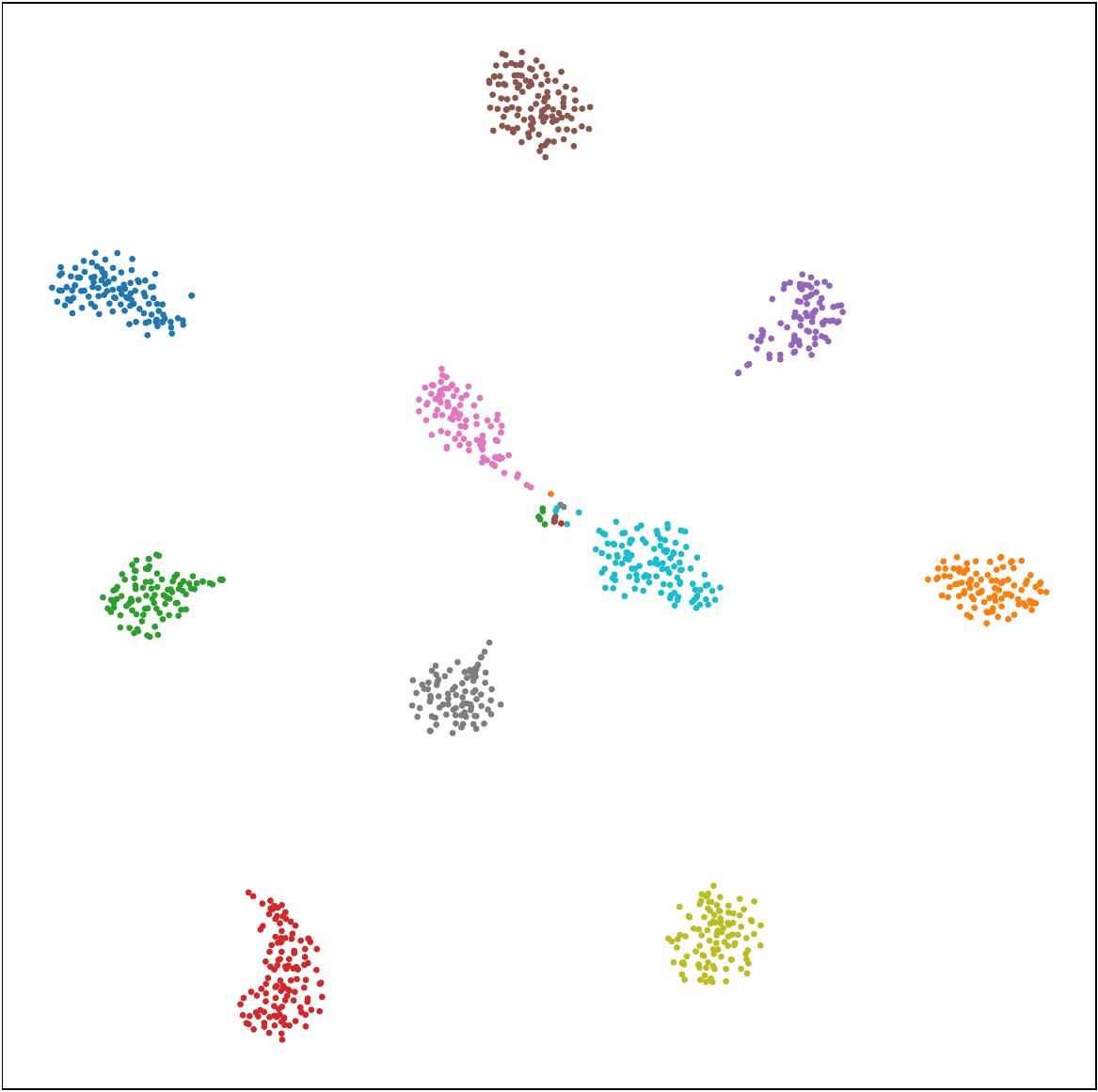}
    \small DMI
  \end{minipage}%
  \hfill
  \begin{minipage}[t]{0.32\textwidth}
    \centering
    \includegraphics[width=0.9\linewidth]{./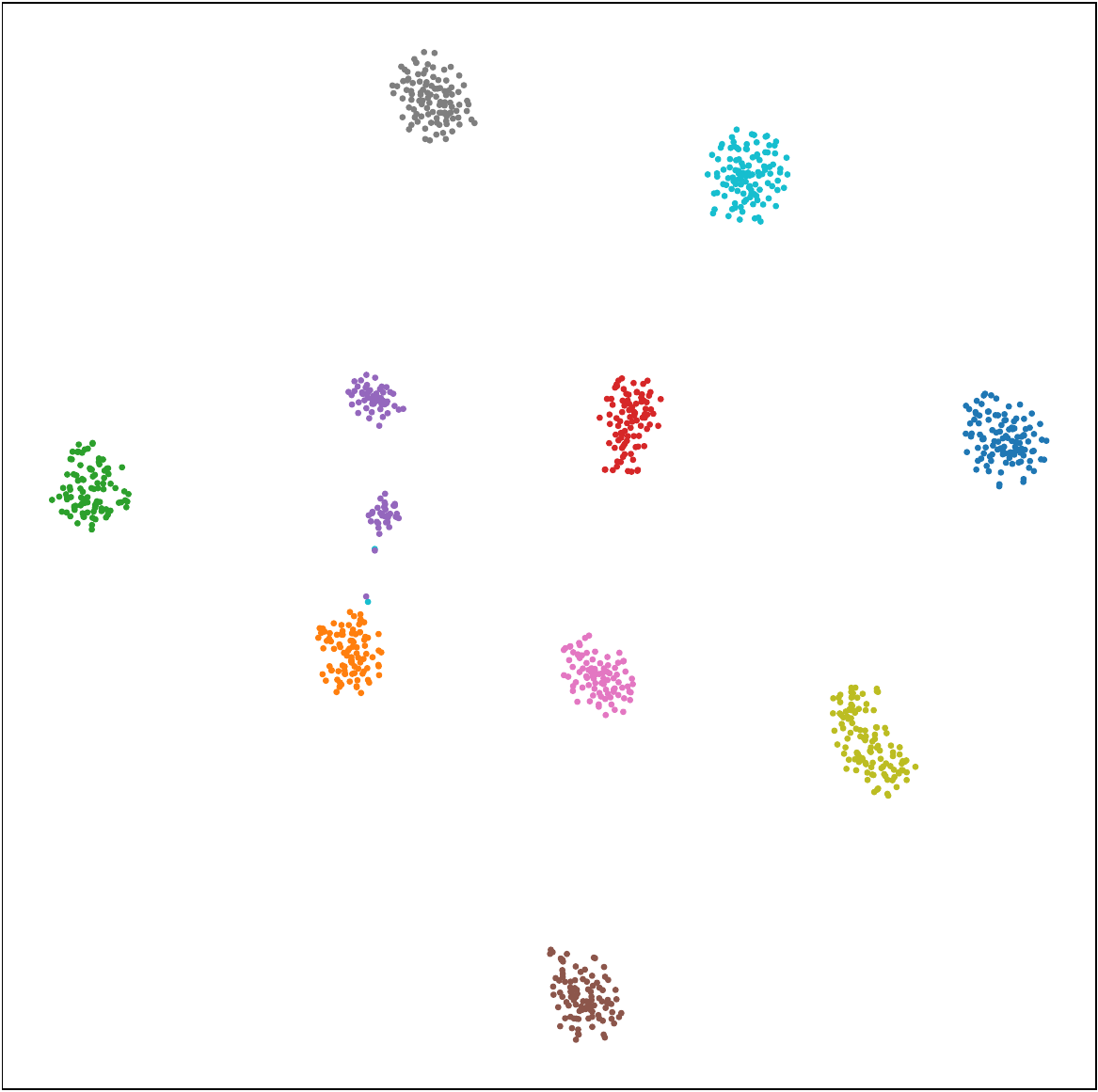}
    \small SMI
  \end{minipage}%
  \hfill
  \begin{minipage}[t]{0.32\textwidth}
    \centering
    \includegraphics[width=0.9\linewidth]{./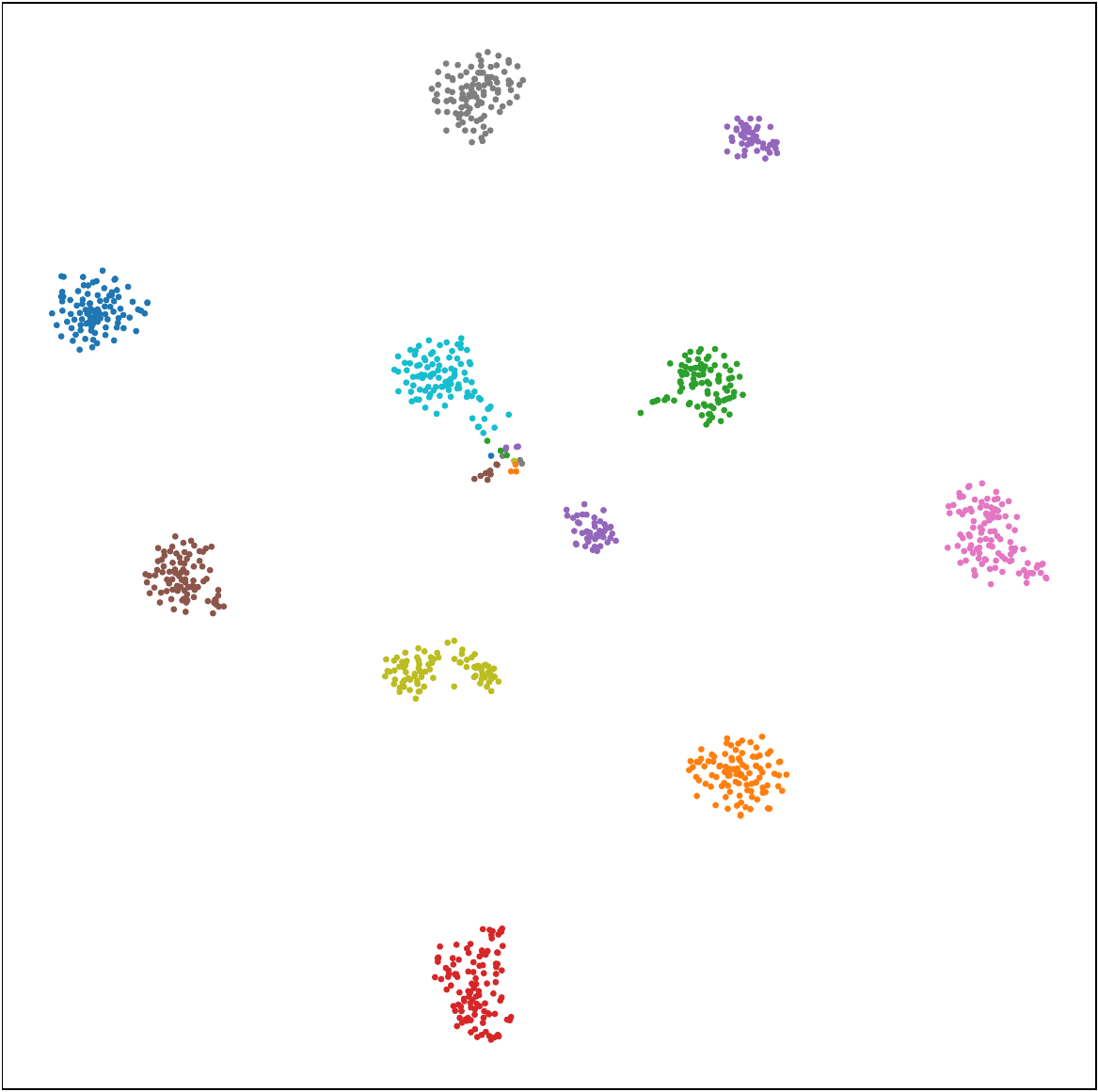}
    \small PRI-4th
  \end{minipage}
  
  \caption{t-SNE visualization of feature embeddings from DMI, SMI, and multiple detachment points of PRI on CIFAR-100. All embeddings are extracted using the pretrained DeiT-Base.}
  \label{app:fig:t_sne}
\end{figure*}

\section{Proof of Theorem 1} \label{app:sec:proofs}

In this section, we present the proof of Theorem~\ref{Theorem:1}, which theoretically demonstrates that PRI achieves a lower computational cost compared to SMI and DMI in both self-attention (SA) and feed-forward network (FFN) modules of ViT-based architectures. Specifically, we show that PRI is more efficient in the SA module under the condition $\frac{N}{d} < 3$, and strictly more efficient in the FFN module regardless of the $\frac{N}{d}$ ratio. This theoretical result aligns with our experimental results, where PRI consistently outperforms other inversion methods across various settings.

\begin{proof}
To compare the computational costs of different inversion strategies, we focus on the dominant components of the ViT architecture: the self-attention (SA) and feed-forward network (FFN) modules. The per-layer cost of SA is given by $4Nd^2 + 2N^2d$, and the per-layer cost of FFN is approximated as $8Nd^2$~\citep{ChenLLSW0J23}. Other components such as LayerNorm and residual connections are omitted as they contribute negligible overhead compared to the main computational terms and do not affect the asymptotic behavior of the comparison.

\smalltitle{Dense Model Inversion (DMI).}
Let $N$ be the number of patches per image, $d$ the embedding dimension, $I$ the number of images, $T$ the number of inversion iterations, and $L$ the number of layers. The total computational cost of DMI is:
\begin{align}
\mathcal{C}_{\mathrm{DMI}}^{\mathrm{SA}}      
&= L \cdot (4Nd^2 + 2N^2d) \cdot I \cdot T, \label{eq:dmi_sa} \\
\mathcal{C}_{\mathrm{DMI}}^{\mathrm{FFN}}   
&= L \cdot 8Nd^2 \cdot I \cdot T. \label{eq:dmi_ffn}
\end{align}


\smalltitle{Sparse Model Inversion (SMI).}
Assuming that SMI produces the same sparsity level as PRI, the output of SMI contains $\frac{N}{v}$ patches, where $v > 1$ is the division factor of PRI. For a fair comparison, we consider an idealized version of SMI in which patch pruning occurs before the inversion process begins, even though the real SMI implementation gradually prunes unimportant patches in the early stages of inversion. By replacing $N$ with $\frac{N}{v}$ in Eqs.~\eqref{eq:dmi_sa} and \eqref{eq:dmi_ffn}, the computational cost becomes:
\begin{align}
\mathcal{C}_{\mathrm{SMI}^*}^{\mathrm{SA}} &= L \cdot \left(\frac{4Nd^2}{v} + \frac{2N^2d}{v^2}\right) \cdot I \cdot T, \nonumber \\
\mathcal{C}_{\mathrm{SMI}^*}^{\mathrm{FFN}} &= L \cdot \frac{8Nd^2}{v} \cdot I \cdot T. \nonumber
\end{align}

\smalltitle{Patch Rebirth Inversion (PRI).}
PRI controls the number and timing of patch detachments using a division factor $v$, with each stage running for $\frac{T}{v}$ iterations. Let $N_k$ denote the number of active patches in the $k$-th stage. Since each detachment step removes $\frac{N}{v}$ patches, we have:
$$
N_k = N \cdot \left(1 - \frac{k - 1}{v} \right), \quad k = 1, \dots, v.
$$
Moreover, since PRI generates a group of $v$ sparse images within a single inversion trajectory spanning $T$ iterations, the effective per-image cost should be scaled by a factor of $\frac{1}{v}$. Accordingly, for each stage, only $\frac{I}{v}$ images are effectively counted per synthetic image, and the cost aggregates over all $v$ detachment points through inversion iterations as follows:
\begin{align}
\mathcal{C}_{\mathrm{PRI}}^{\mathrm{SA}}    
&= \sum_{k=1}^{v} L \cdot \left( 4N_k d^2 + 2N_k^2 d \right) \cdot \frac{I}{v} \cdot \frac{T}{v}, \nonumber \\
\mathcal{C}_{\mathrm{PRI}}^{\mathrm{FFN}}   
&= \sum_{k=1}^{v} L \cdot 8N_k d^2 \cdot \frac{I}{v} \cdot \frac{T}{v}. \nonumber
\end{align}
Substituting $N_k$ and simplifying yields:
\begin{align}
\mathcal{C}_{\mathrm{PRI}}^{\mathrm{SA}}   
&= \frac{LIT}{v^2} \sum_{k=1}^{v} \left[ 4Nd^2 \left(1 - \frac{k - 1}{v} \right) + 2N^2d \left(1 - \frac{k - 1}{v} \right)^2 \right] \nonumber \\
&= \frac{LIT}{v^{2}}\left[4Nd^{2}\sum_{k=1}^{v}\Bigl(1-\frac{k-1}{v}\Bigr)+2N^{2}d\sum_{k=1}^{v}\Bigl(1-\frac{k-1}{v}\Bigr)^{2}\right] \nonumber \\
&= \frac{LIT}{v^{2}}\left[4Nd^{2}\sum_{j=0}^{v-1}\Bigl(1-\frac{j}{v}\Bigr)+2N^{2}d\sum_{j=0}^{v-1}\Bigl(1-\frac{j}{v}\Bigr)^{2}\right] \nonumber \\
&= \frac{LIT}{v^{2}}\Bigl(\frac{4Nd^{2}}{v}\sum_{j=1}^{v}j+\frac{2N^{2}d}{v^{2}}\sum_{j=1}^{v}j^{2}\Bigr) \nonumber \\
&= \frac{LIT}{v^{2}}\left[\frac{4Nd^{2}}{v}\cdot\frac{v(v+1)}{2}+\frac{2N^{2}d}{v^{2}}\cdot\frac{v(v+1)(2v+1)}{6}\right] \nonumber \\
&= \frac{LIT}{v^2} \left[ \frac{4Nd^2 \cdot v(v+1)}{2v} + \frac{2N^2d \cdot v(v+1)(2v+1)}{6v^2} \right], \nonumber \\
\mathcal{C}_{\mathrm{PRI}}^{\mathrm{FFN}}  
&= \frac{LIT}{v^2} \cdot \frac{8Nd^2 \cdot v(v+1)}{2v}. \nonumber
\end{align}

\smalltitle{Feed-Forward Network.}
We now compare the computational cost of the feed-forward network (FFN) modules. First, comparing the cost of FFN computations between DMI and SMI yields:
\begin{align}
\mathcal{C}_{\mathrm{DMI}}^{\mathrm{FFN}}-\mathcal{C}_{\mathrm{SMI}^*}^{\mathrm{FFN}}=L\cdot 8Nd^2\cdot I\cdot T\left(1-\frac{1}{v}\right).   \nonumber
\end{align}
Since $v>1$, this difference is always positive, indicating that SMI reduces FFN cost compared to DMI. Moreover, the speedup of SMI over DMI is exactly $v\times$, as shown below:
\begin{align}
\frac{\mathcal{C}_{\mathrm{DMI}}^{\mathrm{FFN}}}{\mathcal{C}_{\mathrm{SMI}^*}^{\mathrm{FFN}}}
=\frac{L\cdot 8Nd^2\cdot I\cdot T}{L\cdot \frac{8Nd^2}{v}\cdot I\cdot T}=v.   \nonumber
\end{align}

Next, we compare the cost of PRI against SMI:
\begin{align}
\frac{\mathcal{C}_{\mathrm{PRI}}^{\mathrm{FFN}}}{\mathcal{C}_{\mathrm{SMI}^*}^{\mathrm{FFN}}}
&=\frac{\frac{L\cdot I\cdot T}{v^2}\cdot \frac{8N\cdot d^2\cdot v(v+1)}{2v}}{L\cdot \frac{8Nd^2}{v}\cdot I\cdot T}=\frac{v+1}{2v}=\frac{1}{2}+\frac{1}{2v}.   \nonumber
\end{align}
For the smallest value $v=2$, PRI is $25\%$ more efficient than SMI in terms of FFN cost. As $v$ increases, this ratio converges to $\frac{1}{2}$, indicating that PRI can become up to $2\times$ more efficient than SMI in the limit.

These highlight a key advantage of PRI: it consistently achieves lower computational cost than both SMI and DMI for FFN computations, independent of the total number of patches and model scalability. 

Summarizing:
\begin{align}
    \mathcal{C}_{\mathrm{PRI}}^{\mathrm{FFN}} < \mathcal{C}_{\mathrm{SMI}^*}^{\mathrm{FFN}} < \mathcal{C}_{\mathrm{DMI}}^{\mathrm{FFN}}.   \nonumber
\end{align}

\smalltitle{Self-Attention.}
We now turn to analyzing the computational costs of self-attention (SA) modules in DMI and SMI. The difference is given by: $\mathcal{C}_{\mathrm{DMI}}^{\mathrm{SA}}$ and $\mathcal{C}_{\mathrm{SMI}^*}^{\mathrm{SA}}$:
\begin{align}
\mathcal{C}_{\mathrm{DMI}}^{\mathrm{SA}}-\mathcal{C}_{\mathrm{SMI}^*}^{\mathrm{SA}}=L\cdot I\cdot T\cdot\left[4Nd^2\cdot\left(1-\frac{1}{v}\right)+2N^2d\cdot\left(1-\frac{1}{v^2}\right)\right].   \nonumber
\end{align}
Since $v>1$, both terms inside the brackets are strictly positive, which confirms that $\mathcal{C}_{\mathrm{DMI}}^{\mathrm{SA}}>\mathcal{C}_{\mathrm{SMI}^*}^{\mathrm{SA}}$ always holds, regardless of model size or patch dimension.

Next, comparing PRI and SMI:
\begin{align}
\mathcal{C}_{\mathrm{PRI}}^{\mathrm{SA}} - \mathcal{C}_{\mathrm{SMI}^*}^{\mathrm{SA}}
= L\cdot I\cdot T \cdot \left[\frac{2N^2d}{6v^3}(2v^2 - 3v + 1) - \frac{4Nd^2}{2v^2}(v - 1)\right].   \nonumber
\end{align}

Solving the inequality for PRI to be more efficient than SMI gives:
\begin{align}
N < \frac{6v(v - 1)}{2v^2 - 3v + 1} \cdot d. \nonumber
\end{align}
This bound decreases monotonically with $v$: for $v=2$, the bound is $N<4d$; for $v=3$, it becomes $N<3.6d$; and as $v \rightarrow \infty$, it approaches $N<3d$. Therefore, in practical regimes where ViTs typically satisfy $N < 3d$, PRI achieves lower self-attention cost compared to both SMI and DMI. This ratio condition $\frac{N}{d} < 3$ is satisfied by most of the standard ViT architectures, including DeiT-Tiny, Small, and Base, where $N=197$ and $d=192$, $384$, and $768$, respectively. It also holds for larger models such as ViT-Large ($N=197$ and $d=1024$) and ViT-Huge ($N=257$ and $d=1280$).

Summarizing:
\begin{align}
\mathcal{C}_{\mathrm{PRI}}^{\mathrm{SA}} < \mathcal{C}_{\mathrm{SMI}^*}^{\mathrm{SA}} < \mathcal{C}_{\mathrm{DMI}}^{\mathrm{SA}} 
\quad \text{(when } \tfrac{N}{d} < 3 \text{)}. \nonumber
\end{align}

\end{proof}

\section{Visualization Details} \label{app:sec:visual_details}
Our visualization strategies of model inversion are based on the prior work~\citep{HuW0WLYT24}. All images shown in Figures~\ref{fig:summarize_fig}(a), ~\ref{fig:empirical_findings}, ~\ref{fig:overall_pipeline}, and \ref{app:fig:visualization_rebirth} are inverted using the CLIP-based ViT/32-Base model as its features have been found to align more closely with human perception due to large-scale pretraining~\citep{HuW0WLYT24}. For improved visual clarity, we follow the approach of prior work~\citep{HatamizadehYR0K22} and incorporate batch normalization borrowed from convolutional neural networks (CNNs). All visualization images are from the CIFAR-100 dataset.

In our empirical observations, fine-tuning the entire model improves the quantitative performance of the pretrained teacher, but does not lead to noticeable improvements in visual quality from a human perception perspective. Instead, we find that fine-tuning only the classifier head yields the best results for visualization purposes. All other experimental settings remain consistent with those used in the knowledge transfer inversion experiments.

\section{Visualizations of the Re-Birth Effect} \label{app:sec:visualization_rebirth}

Figure~\ref{app:fig:visualization_rebirth} illustrates the re-birth effect across 12 classes on CIFAR-100. The images are arranged in a 4$\times$3 grid, with rows ordered from left-to-right, top-to-bottom by class: \textit{pear, rose, apple, orange, orchid, lion, sunflower, aquarium fish, bus, bee, poppy, and boy}. The emergence of semantic structure in the right column highlights PRI's ability to transform initially uninformative regions into meaningful content through progressive optimization.

\begin{figure*}[htbp]
  \centering
  \begin{minipage}[t]{0.31\textwidth}
    \centering
    \includegraphics[width=0.96\linewidth]{./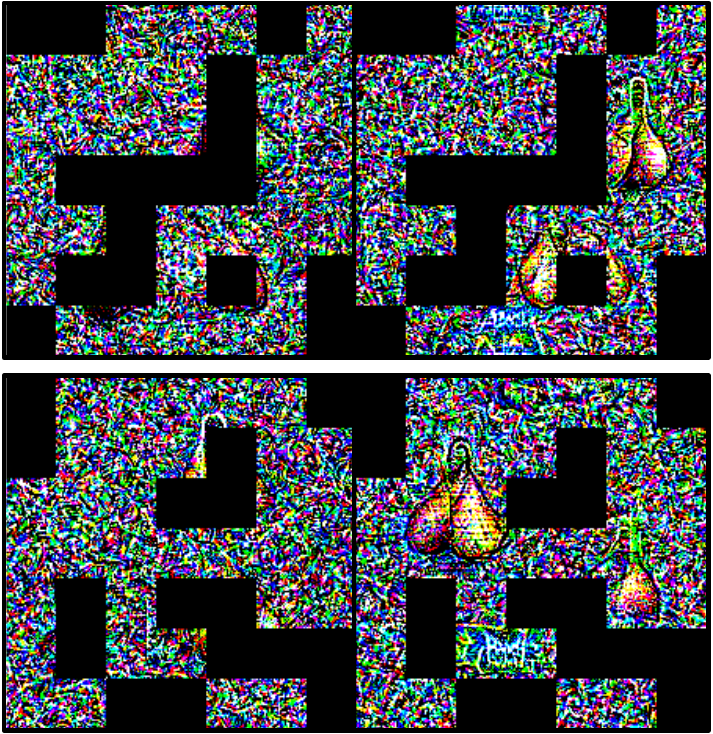}
  \end{minipage}%
  \hfill
  \begin{minipage}[t]{0.31\textwidth}
    \centering
    \includegraphics[width=0.96\linewidth]{./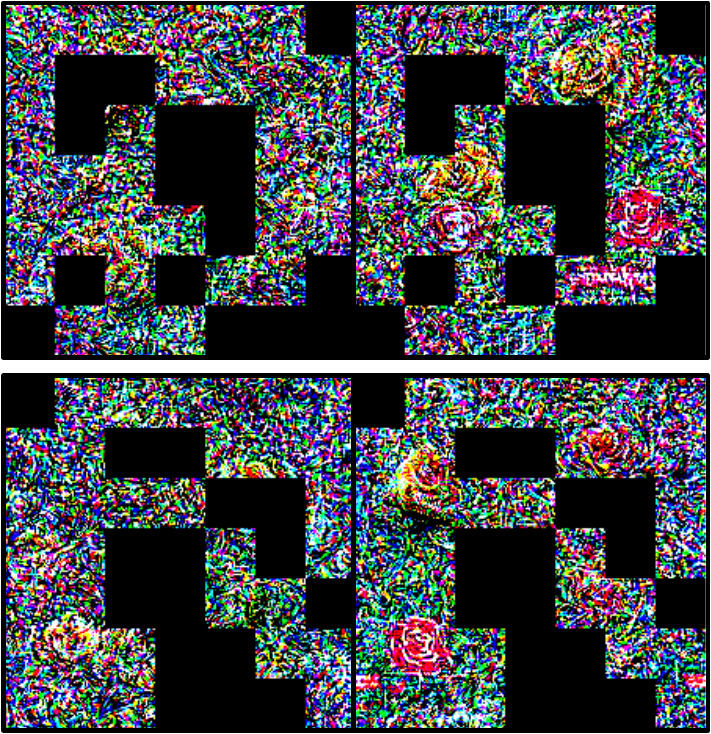}
  \end{minipage}%
  \hfill
  \begin{minipage}[t]{0.31\textwidth}
    \centering
    \includegraphics[width=0.96\linewidth]{./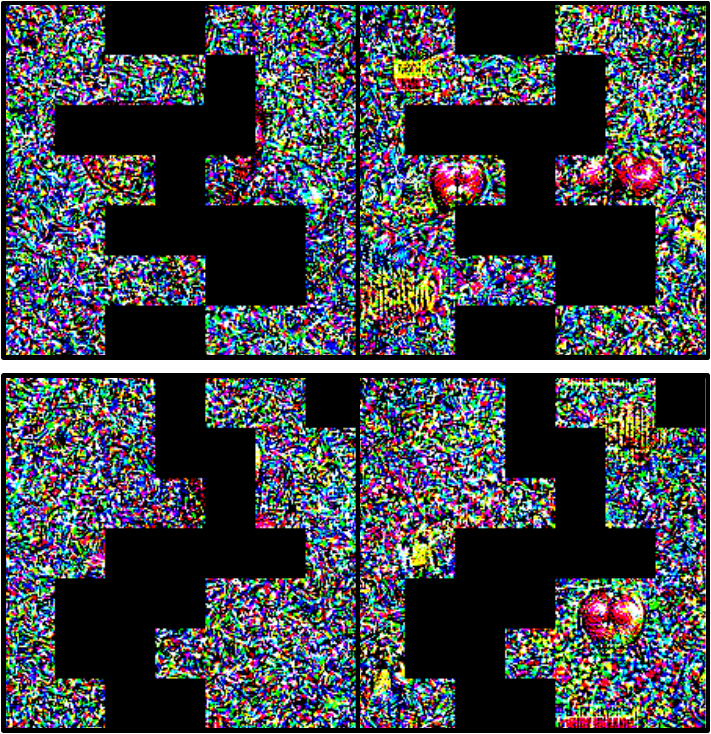}
  \end{minipage}
  
  \vspace{1em}  

  \begin{minipage}[t]{0.31\textwidth}
    \centering
    \includegraphics[width=0.96\linewidth]{./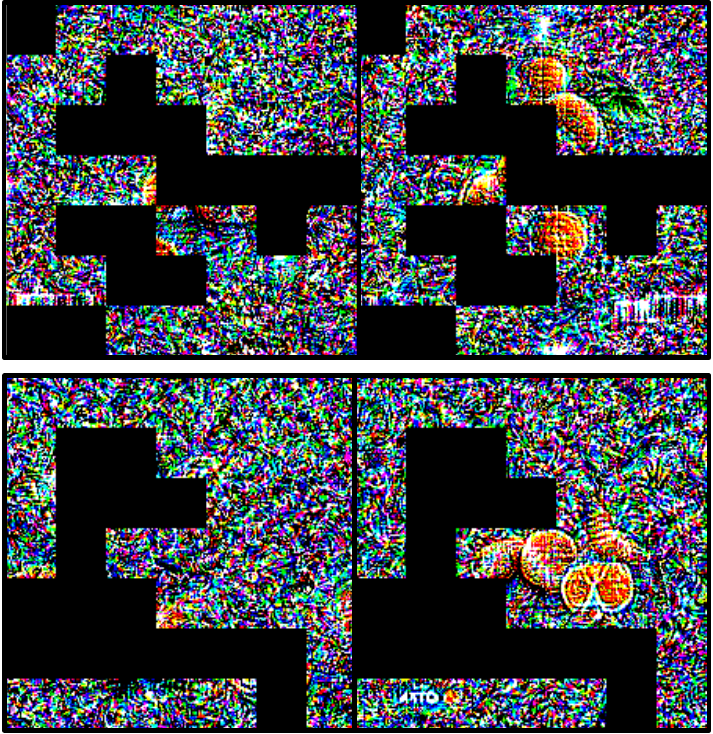}
  \end{minipage}%
  \hfill
  \begin{minipage}[t]{0.31\textwidth}
    \centering
    \includegraphics[width=0.96\linewidth]{./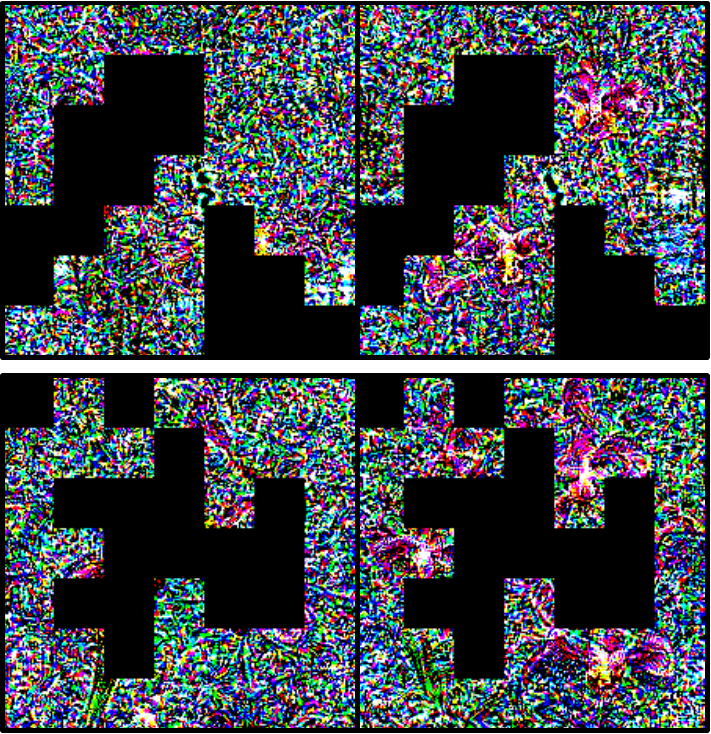}
  \end{minipage}%
  \hfill
  \begin{minipage}[t]{0.31\textwidth}
    \centering
    \includegraphics[width=0.96\linewidth]{./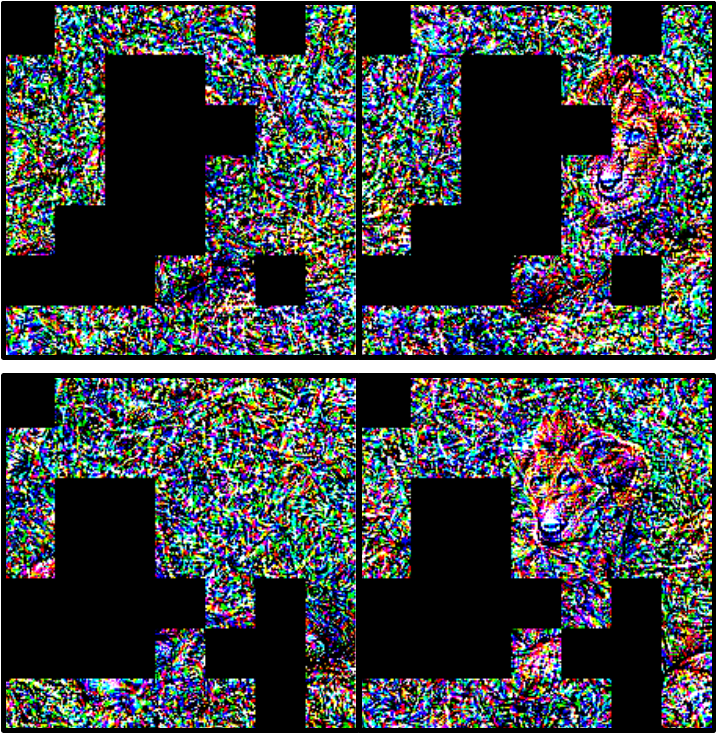}
  \end{minipage}
  
  \vspace{1em}  
  
  \begin{minipage}[t]{0.31\textwidth}
    \centering
    \includegraphics[width=0.96\linewidth]{./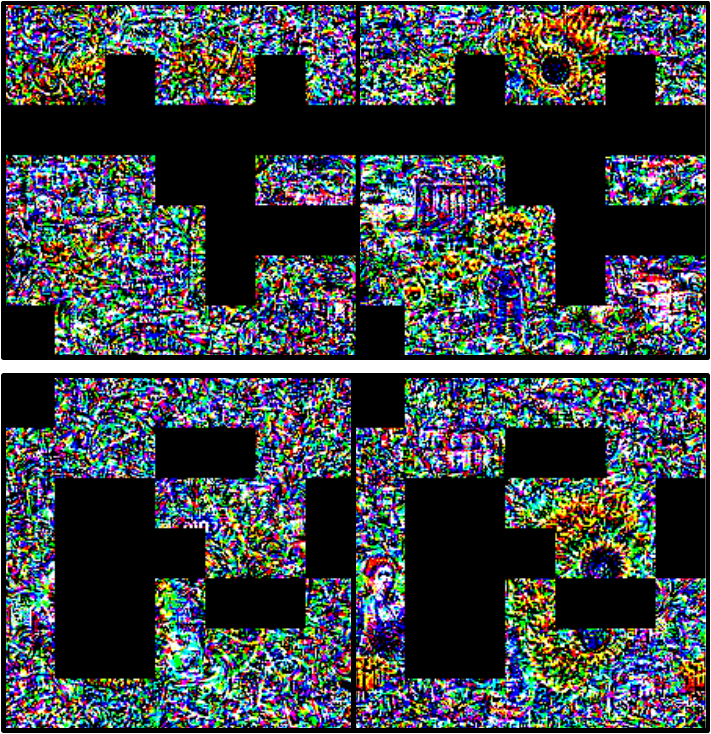}
  \end{minipage}%
  \hfill
  \begin{minipage}[t]{0.31\textwidth}
    \centering
    \includegraphics[width=0.96\linewidth]{./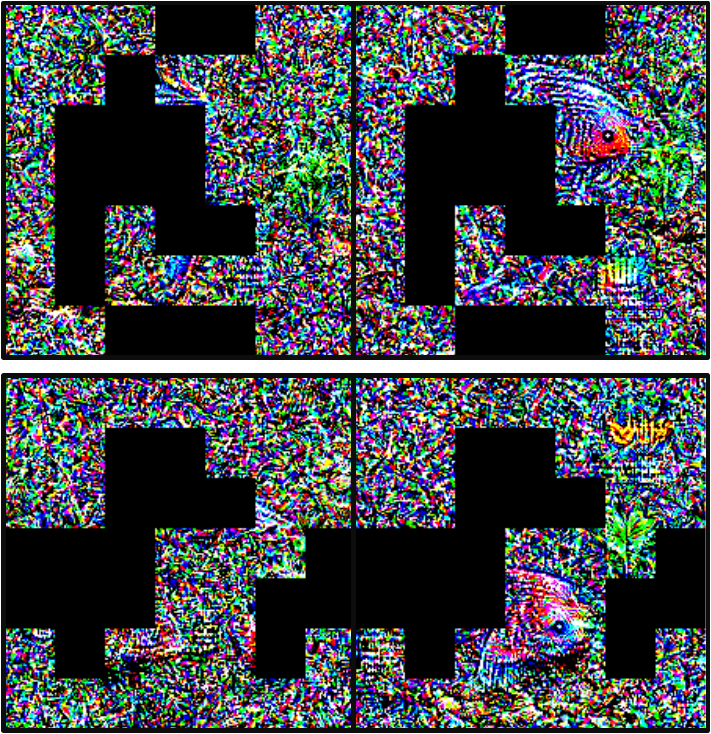}
  \end{minipage}%
  \hfill
  \begin{minipage}[t]{0.31\textwidth}
    \centering
    \includegraphics[width=0.96\linewidth]{./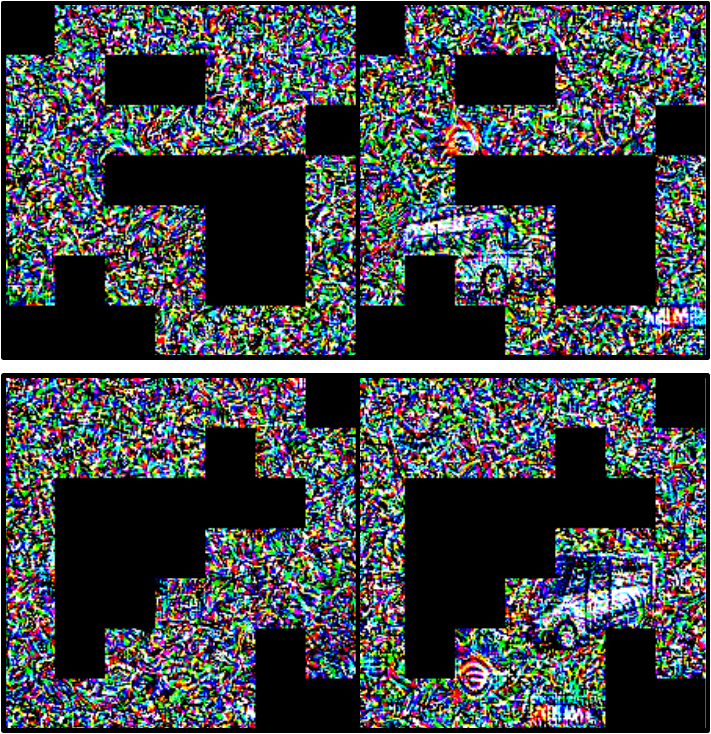}
  \end{minipage}

  \vspace{1em}  

  \begin{minipage}[t]{0.31\textwidth}
    \centering
    \includegraphics[width=0.96\linewidth]{./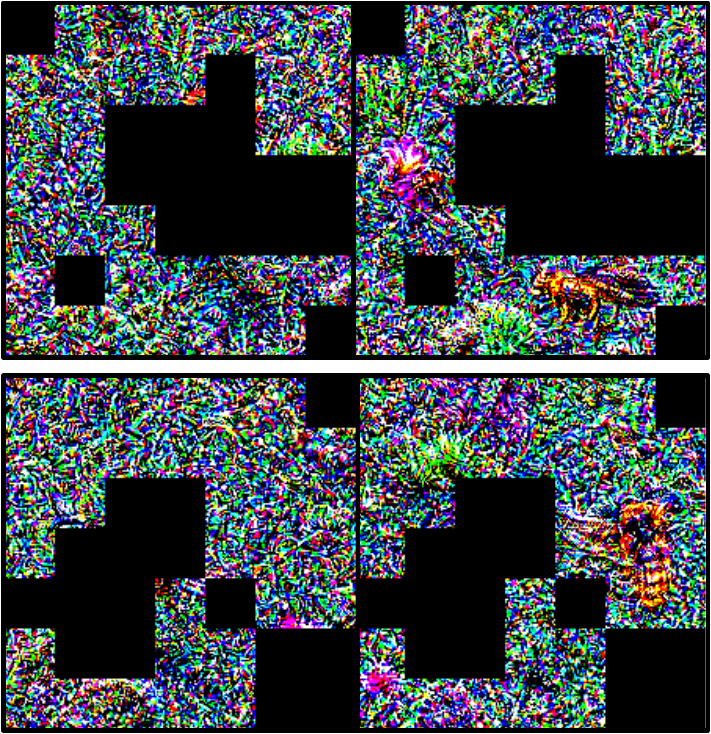}
  \end{minipage}%
  \hfill
  \begin{minipage}[t]{0.31\textwidth}
    \centering
    \includegraphics[width=0.96\linewidth]{./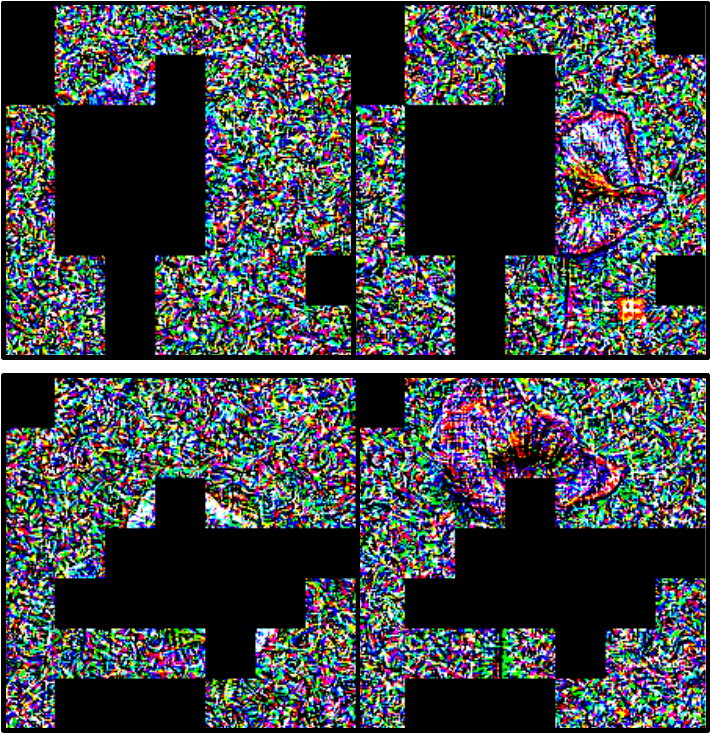}
  \end{minipage}%
  \hfill
  \begin{minipage}[t]{0.31\textwidth}
    \centering
    \includegraphics[width=0.96\linewidth]{./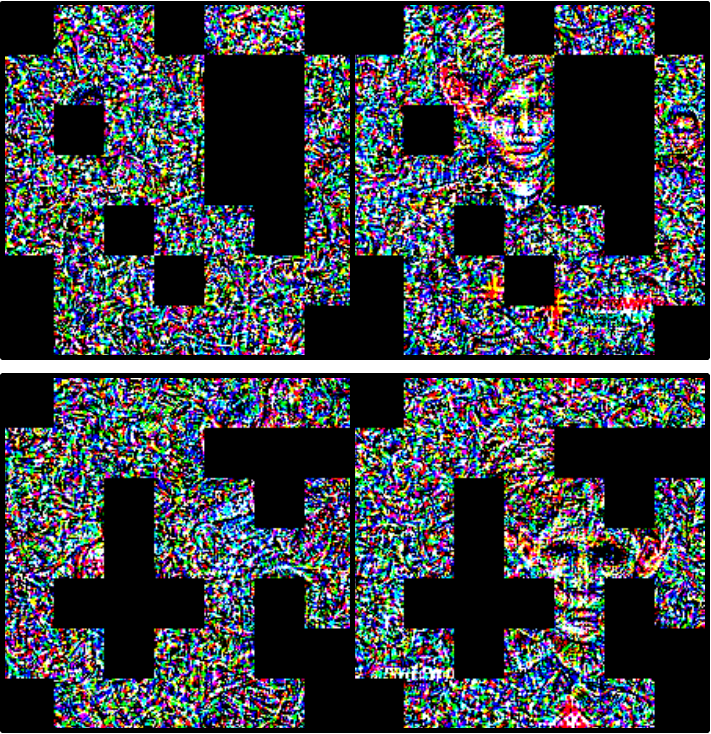}
  \end{minipage}
  
  \caption{Re-birth visualizations. For each class, the left image shows the initially regarded as unimportant patches, while the right image shows the same patches after further inversion.}
  \label{app:fig:visualization_rebirth}
\end{figure*}

\end{document}